\documentclass{article}
\usepackage[UKenglish]{babel}
\usepackage{csquotes}
\usepackage{adjustbox}

\usepackage{xparse}

\usepackage{luatex85}   

\usepackage{microtype}
\usepackage[shortlabels,inline]{enumitem}
\usepackage{graphicx}
\usepackage[table]{xcolor}
\usepackage{caption}
\usepackage{subcaption}
\usepackage{booktabs} 
\usepackage{multirow}
\newcolumntype{L}[1]{>{\raggedright\let\newline\\\arraybackslash\hspace{0pt}}m{#1}}
\newcolumntype{C}[1]{>{\centering\let\newline\\\arraybackslash\hspace{0pt}}m{#1}}
\newcolumntype{R}[1]{>{\raggedleft\let\newline\\\arraybackslash\hspace{0pt}}m{#1}}

\usepackage{tikz}
\usetikzlibrary{patterns}
\usetikzlibrary{patterns.meta}
\usetikzlibrary{shapes.misc}
\usepackage{pgfplots}
\pgfplotsset{compat=1.18}
\usepgfplotslibrary{groupplots}
\usepgfplotslibrary{fillbetween}

\usetikzlibrary{external}

\usepackage{algorithm}
\usepackage{algorithmic}

\usepackage{hyperref}

\renewcommand{\algorithmicrequire}{\textbf{Input:}}
\renewcommand{\algorithmicensure}{\textbf{Output:}}

 \usepackage[accepted]{icml2026}

\usepackage{mathtools}
\usepackage{amsmath}
\usepackage{amssymb}
\usepackage{amsthm}
\usepackage{bm}  

\renewcommand{\algorithmiccomment}[1]{\textit{ // #1}}

\definecolor{bettergreen}{RGB}{0,127,93}  
\definecolor{betterred}{RGB}{165,71,0}  
\definecolor{betterorange}{RGB}{178,121,0}  

\definecolor{CatA}{RGB}{27,158,119}
\definecolor{CatB}{RGB}{217,95,2}
\definecolor{CatC}{RGB}{117,112,179}
\definecolor{CatD}{RGB}{231,41,138}
\definecolor{CatE}{RGB}{102,166,30}
\definecolor{CatF}{RGB}{230,171,2}
\definecolor{CatG}{RGB}{166,118,29}
\definecolor{CatH}{RGB}{102,102,102}
\definecolor{CatAOne}{rgb}{0.0117, 0.1914, 0.5938}
\definecolor{CatATwo}{rgb}{0.125, 0.3906, 0.6797}
\definecolor{CatAThree}{rgb}{0.1953, 0.5703, 0.7578}
\definecolor{CatAFour}{rgb}{0.3633, 0.7539, 0.8242}
\definecolor{CatBFour}{rgb}{0.7812, 0.7031, 0.332}
\definecolor{CatBThree}{rgb}{0.6875, 0.5, 0.168}
\definecolor{CatBTwo}{rgb}{0.5977, 0.3203, 0.082}
\definecolor{CatBOne}{rgb}{0.4923, 0.0908, 0.0001}

\NewDocumentCommand{\naive}{}{na\"ive}

\NewDocumentCommand{\AlgorithmName}{m}{\textrm{\textsc{#1}}}
\NewDocumentCommand{\OurAlgo}{}{\AlgorithmName{VeriSHAP}}
\NewDocumentCommand{\CROWN}{}{\AlgorithmName{CROWN}}
\NewDocumentCommand{\CROWNIBP}{}{\AlgorithmName{CROWN}-\AlgorithmName{IBP}}
\NewDocumentCommand{\AlphaCROWN}{}{\(\alpha\)-\AlgorithmName{CROWN}}
\NewDocumentCommand{\IBP}{}{\AlgorithmName{IBP}}
\NewDocumentCommand{\LBP}{}{\AlgorithmName{LBP}}
\NewDocumentCommand{\SelectMaxDiam}{}{\AlgorithmName{MaxDiam}}
\NewDocumentCommand{\SelectMinDiam}{}{\AlgorithmName{MinDiam}}

\NewDocumentCommand{\SplitInOrder}{}{\AlgorithmName{InOrder}}
\NewDocumentCommand{\SplitStrongBranching}{}{\AlgorithmName{StrongBranching}}
\NewDocumentCommand{\SplitSmartBranchingIBP}{}{\AlgorithmName{IBPStrongBranching}}
\NewDocumentCommand{\SplitSmears}{}{\AlgorithmName{Smears}}
\NewDocumentCommand{\ExactSHAP}{}{\AlgorithmName{ExactSHAP}}
\NewDocumentCommand{\KernelSHAP}{}{\AlgorithmName{KernelSHAP}}

\NewDocumentCommand{\LeverageSHAP}{}{\AlgorithmName{LeverageSHAP}}
\NewDocumentCommand{\LinearMSR}{}{\AlgorithmName{LinearMSR}}
\NewDocumentCommand{\TreeMSR}{}{\AlgorithmName{TreeMSR}}
\NewDocumentCommand{\AdultDataset}{}{\texttt{Adult}}
\NewDocumentCommand{\ObesityDataset}{}{\texttt{Obesity}}
\NewDocumentCommand{\GermanDataset}{}{\texttt{German}}
\NewDocumentCommand{\MushroomDataset}{}{\texttt{Mushroom}}
\NewDocumentCommand{\DefaultDataset}{}{\texttt{Default}}
\NewDocumentCommand{\HepatitisCDataset}{}{\texttt{HepatitisC}}
\NewDocumentCommand{\AutomobileDataset}{}{\texttt{Auto}}
\NewDocumentCommand{\SteelDataset}{}{\texttt{Steel}}
\NewDocumentCommand{\AnnealingDataset}{}{\texttt{Annealing}}
\NewDocumentCommand{\SonarDataset}{}{\texttt{Sonar}}

\NewDocumentCommand{\BreastCancerDataset}{}{\texttt{BreastCancer}}
\NewDocumentCommand{\LungCancerDataset}{}{\texttt{LungCancer}}

\NewDocumentCommand{\OnlineNewsDataset}{}{\texttt{Online}~\texttt{News}}
\NewDocumentCommand{\MNISTDataset}{o}{\texttt{MNIST}\IfNoValueF{#1}{~\texttt{#1x#1}}}
\NewDocumentCommand{\FashionMNISTDataset}{}{\texttt{FashionMNIST}}

\NewDocumentCommand{\CIFARTenDataset}{}{\texttt{CIFAR10}}
\NewDocumentCommand{\GTSRBDataset}{}{\texttt{GTSRB}}
\newcommand\transp[1]{{
    #1^{\mkern-1.5mu\mathsf{T}}
}}
\newcommand{\overbar}[1]{\mkern 4mu\overline{\mkern-4mu#1\mkern-4mu}\mkern 4mu}
\newcommand{\Reals}{\mathbb{R}}

\newcommand{\RealsNonNeg}{\mathbb{R}_{\geq{} 0}}

\newcommand{\Nats}{\mathbb{N}}

\newcommand{\Bools}{{\{0,1\}}}

\renewcommand{\equiv}{\Leftrightarrow}
\DeclareMathOperator*{\argmax}{arg\,max}
\DeclareMathOperator*{\argmin}{arg\,min}

\renewcommand{\vec}[1]{\mathbf{#1}}

\newcommand{\mat}[1]{\mathbf{#1}}

\newcommand{\powerset}[1]{\mathcal{P}({#1})}
\newcommand{\upto}[1]{[{#1}]}
\NewDocumentCommand{\shapval}{O{i}}{\varphi_{#1}}
\newcommand{\val}{v}
\NewDocumentCommand{\otherfeatures}{O{i}}{\mathcal{S}_{#1}}
\NewDocumentCommand{\assign}{m m m o}{{({#1}_{#2};{#3}_{\Bar{#2}})}_{\IfNoValueF{#4}{#4}}}
\NewDocumentCommand{\NN}{}{f}
\NewDocumentCommand{\layer}{m}{f^{(#1)}}
\NewDocumentCommand{\x}{o}{\vec{x}_{\IfNoValueF{#1}{#1}}}
\NewDocumentCommand{\baseline}{o}{\vec{y}_{\IfNoValueF{#1}{#1}}}
\NewDocumentCommand{\bgdata}{o}{\baseline[#1]}
\NewDocumentCommand{\lb}{m}{\underline{#1}}
\NewDocumentCommand{\ub}{m}{\overline{#1}}
\NewDocumentCommand{\coaliw}{}{\lambda}
\NewDocumentCommand{\contrib}{}{\Delta_i}
\NewDocumentCommand{\maskcontrib}{}{\boldsymbol{\Delta}_i}
\NewDocumentCommand{\maskval}{}{\mathbf{v}}
\NewDocumentCommand{\branches}{o}{\mathsf{B}^{\IfNoValueF{#1}{(#1)}}}

\NewDocumentCommand{\branch}{o o}{\mathcal{B}_{\IfNoValueF{#1}{#1}}^{\IfNoValueF{#2}{(#2)}}}
\NewDocumentCommand{\includeset}{o o}{\mathcal{I}_{\IfNoValueF{#1}{#1}}^{\IfNoValueF{#2}{(#2)}}}
\NewDocumentCommand{\excludeset}{o o}{\mathcal{E}_{\IfNoValueF{#1}{#1}}^{\IfNoValueF{#2}{(#2)}}}
\NewDocumentCommand{\sumcoaliw}{m}{\Lambda_{#1}}

\NewDocumentCommand{\branchsvlb}{o}{\lb{p}_{\IfNoValueF{#1}{#1}}}
\NewDocumentCommand{\branchsvub}{o}{\ub{p}_{\IfNoValueF{#1}{#1}}}
\NewDocumentCommand{\mask}{o}{\vec{m}_{\IfNoValueF{#1}{#1}}}
\NewDocumentCommand{\contribgrad}{o}{\nabla_{\IfNoValueF{#1}{#1}}\contrib}
\NewDocumentCommand{\splittable}{}{\mathcal{J}}
\NewDocumentCommand{\z}{}{\vec{z}}%
\NewDocumentCommand{\lbz}{}{\lb{\z}}%
\NewDocumentCommand{\ubz}{}{\ub{\z}}%
\NewDocumentCommand{\branchone}{o}{\mathcal{B}'_{\IfNoValueF{#1}{#1}}}
\NewDocumentCommand{\branchtwo}{o}{\mathcal{B}''_{\IfNoValueF{#1}{#1}}}
\NewDocumentCommand{\includeone}{o}{\mathcal{I}'_{\IfNoValueF{#1}{#1}}}
\NewDocumentCommand{\excludeone}{o}{\mathcal{E}'_{\IfNoValueF{#1}{#1}}}
\NewDocumentCommand{\includetwo}{o}{\mathcal{I}''_{\IfNoValueF{#1}{#1}}}
\NewDocumentCommand{\excludetwo}{o}{\mathcal{E}''_{\IfNoValueF{#1}{#1}}}

\newcommand{\AlgoBP}{\AlgorithmName{BoundPropagation}}
\newcommand{\AlgoSelect}{\AlgorithmName{Select}}
\newcommand{\AlgoSplit}{\AlgorithmName{Split}}
\newcommand{\AlgoRefine}{\AlgorithmName{Refine}}
\newcommand{\AlgoAssemble}{\AlgorithmName{Assemble}}
\newcommand{\maskmat}{\mat{M}}
\newcommand{\sumcoaliws}{\boldsymbol{\Lambda}}

\tikzset{%
    dot diameter/.store in=\dot@diameter,
    dot diameter=3pt,
    dot spacing/.store in=\dot@spacing,
    dot spacing=7.5pt,
    dots/.style={
        line width=\dot@diameter,
        line cap=round,
        dash pattern=on 0pt off \dot@spacing
    },
    long dashes/.style={
        dash pattern=on 8pt off 4pt,
        dash phase=2pt,
    },
    tiny dashes/.style={
        dash pattern=on 2pt off 1pt,
    },
    FirstSHAPLine/.style = {
        CatA, very thick, mark=none
    },
    FirstSHAPArea/.style = {
        CatA, 
        opacity=0.7
    },
    FirstTimeMarker/.style = {
        rounded rectangle, 
        draw=CatA, fill=CatA!60!white, text=black,
        inner sep=2pt,
        font=\footnotesize,
    },
    SecondSHAP/.style = {
        CatB, solid, line width=2.5pt
    },
    SecondSHAPLine/.style = {
        CatB, very thick, mark=none
    },
    SecondSHAPArea/.style = {
        CatB, 
        pattern={Lines[angle=45,distance={6pt/sqrt(2)},line width=3pt]}, 
        pattern color=CatB, 
        opacity=0.7
    },
    ThirdSHAP/.style = {
        CatE, long dashes, line width=3pt
    },
    ThirdSHAPLine/.style = {
        CatE, very thick, mark=none
    },
    ThirdSHAPArea/.style = {
        CatE, 
        pattern={Lines[angle=0,distance={5pt/sqrt(2)},line width=2pt]}, 
        pattern color=CatE, 
        opacity=0.7
    },
    FourthSHAP/.style = {
        CatC, dots, line width=4.5pt, mark=none
    },
    FourthSHAPLine/.style = {
        CatC, very thick, mark=none
    },
    FourthSHAPArea/.style = {
        CatC, 
        pattern={Dots[angle=15,distance={6pt/sqrt(2)},radius=1.9pt]}, 
        pattern color=CatC, 
        opacity=0.7
    },
    FifthSHAP/.style = {
        CatD, tiny dashes, line width=4.5pt, mark=none
    },
    FifthSHAPLine/.style = {
        CatD, very thick, mark=none
    },
    FifthSHAPArea/.style = {
        CatD, 
        pattern={Hatch[angle=45,distance={8pt/sqrt(2)},line width=2pt]}, 
        pattern color=CatD, 
        opacity=0.7
    },
}%

\newcommand{\legendcolorbox}[1]{{%
  \begin{tikzpicture}%
    \draw[fill,#1,very thin] (0cm,-0.1cm) rectangle (0.2cm,0.1cm);%
  \end{tikzpicture}%
}}
\newcommand{\legendcolorline}[1]{{%
  \scalebox{.75}{\begin{tikzpicture}%
    \draw[draw,#1] (-.25cm,.2cm) -- (.25cm,0cm);%
  \end{tikzpicture}}%
}}
\newcommand{\legendcolorlineThick}[1]{{%
  \scalebox{.75}{\begin{tikzpicture}%
    \draw[draw,line width=2.5pt,#1] (-.25cm,-.2cm) -- (.25cm,0cm);%
  \end{tikzpicture}}%
}}
\newcommand{\legendsplitbox}[1]{{%
  \begin{tikzpicture}[baseline={([yshift={-2.1mm}]current bounding box.north)}]%
    \draw[#1, fill=#1, fill opacity=.6, very thick, rounded corners=1, 
    ] (0cm,-0.1cm) rectangle (0.2cm,0.1cm);
  \end{tikzpicture}%
}}
\newcommand{\legendmesh}[1]{{%
  \begin{tikzpicture}[
    baseline={([yshift={-2.1mm}]current bounding box.north)}, scale=0.145,
    trim left=.5mm,
  ]%
    \begin{axis}[
      width=4cm,
      axis lines=none,
      xtick=\empty,ytick=\empty,ztick=\empty,
      clip=false,
    ]
      \addplot3 [mesh, ultra thin, #1] coordinates {
        (0.0,0.0,0.7) (0.0,0.5,0.75) (0.0,1.0,1.0)

        (0.5,0.0,0.25) (0.5,0.5,0.2) (0.5,1.0,0.75)

        (1.0,0.0,0.0) (1.0,0.5,0.25) (1.0,1.0,0.7)
      };
    \end{axis}
  \end{tikzpicture}%
}}

\usepackage[capitalize,noabbrev]{cleveref}
\tikzifexternalizing{%
    \icmlshowauthorstrue
}{}

\theoremstyle{plain}
\newtheorem{theorem}{Theorem}[section]
\newtheorem{proposition}[theorem]{Proposition}

\newtheorem{corollary}[theorem]{Corollary}
\theoremstyle{definition}

\newtheorem{example}[theorem]{Example}
\theoremstyle{remark}

\newtheoremstyle{repeated}
    {\topsep}{\topsep}              
    {\itshape}                      
    {}                              
    {}                              
    {.}                             
    {\topsep}                       
    {\thmnote{\bfseries#3} \textnormal{(Restated)}}
\theoremstyle{repeated}
\newtheorem{repeatedtheory}{}

\icmltitlerunning{Verified SHAP: Provable Bounds for Exact Shapley Values of Neural Networks}

\begin{document}

\twocolumn[
  \icmltitle{Verified SHAP: \\ Provable Bounds for Exact Shapley Values of Neural Networks}



  \icmlsetsymbol{equal}{*}

\tikzifexternalizing{}{%
  \begin{icmlauthorlist}
    \icmlauthor{David Boetius}{equal,kn}
    \icmlauthor{Shahaf Bassan}{equal,huj}
    \icmlauthor{Guy Katz}{huj}
    \icmlauthor{Stefan Leue}{kn}
    \icmlauthor{Tobias Sutter}{stg}
  \end{icmlauthorlist}
}%

  \icmlaffiliation{kn}{University of Konstanz, Konstanz, Germany}
  \icmlaffiliation{huj}{Hebrew University of Jerusalem, Jerusalem, Israel}
  \icmlaffiliation{stg}{University of St.Gallen, St.Gallen, Switzerland}

  \icmlcorrespondingauthor{David Boetius}{david.boetius@uni-konstanz.de}
  \icmlcorrespondingauthor{Shahaf Bassan}{shahaf.bassan@gmail.com}

  \icmlkeywords{Machine Learning, ICML, SHAP, XAI, Explainable AI, Neural Network, Deep Learning, Branch and Bound, Neural Network Verification}

  \vskip 0.3in
]



\printAffiliationsAndNotice{\icmlEqualContribution}

\begin{abstract}
 Shapley additive explanations (SHAP) are widely recognised as computationally intractable for neural networks, since they induce an exponential search space over the input features. In this work, we take a first step towards scaling exact SHAP computation to larger search spaces by introducing an algorithm that leverages recent advances in neural network verification to compute arbitrarily tight exact lower and upper bounds on SHAP values for neural networks, ultimately recovering the exact SHAP values. We demonstrate that our approach scales to orders of magnitude larger search spaces than state-of-the-art exact methods. This provides an important first step towards exact SHAP computation and establishes a principled cornerstone for evaluating statistical approximation methods on larger search spaces.

\end{abstract}

\section{Introduction}

Shapley additive explanations (SHAP) is a widely used post-hoc explainability method for attributing a machine learning (ML) model's predictions to its input features. However, a central limitation of SHAP is its high computational complexity. While \emph{exact} SHAP values can be computed efficiently for certain model classes, such as tree-based~\citep{lundberg2020explainable,mitchell2020gputreeshap,yang2021fast} and additive models~\citep{enoueninstashap}, they unfortunately become prohibitively expensive to compute for neural networks, where explanations are often most needed.
Concretely, exact SHAP computation induces an \emph{exponentially} large search space over feature subsets, rendering exact computation impractical for most neural networks.


Consequently, the literature primarily focuses on \emph{estimating} SHAP values of neural networks, using methods such as \KernelSHAP{}~\citep{lundberg2017unified,covert2021improving}, \LeverageSHAP{}~\citep{musco2025leverageshap}, \TreeMSR{}~\citep{witter2025regression}, \AlgorithmName{FastSHAP}~\citep{jethani2021fastshap}, and \AlgorithmName{DeepSHAP}~\citep{Chen2021}. 
However, these methods share two fundamental limitations: 
\vspace{-.05cm}
\begin{enumerate}[\itshape(i)]
\item by construction, they only provide \emph{approximations} of the true SHAP values, which can be inaccurate in challenging settings such as highly non-linear models or scenarios with strong feature interactions, and
\item they lack a principled \emph{evaluation} framework: since computing \enquote{ground-truth} exact SHAP values for neural networks is prohibitively expensive, estimators are evaluated on toy-sized neural networks or ML models where exact SHAP is feasible---both of which are settings that may not generalise to real-world neural networks.
\end{enumerate}

\paragraph{Our Contributions.} 
In this work, we present \emph{Verified SHAP} (\OurAlgo{}), the first algorithm to leverage recent advances in \emph{neural network verification}~\citep{wang2021beta,zhouscalable,brix2024fifth,kotha2023provably,ferraricomplete,zhou2025clipandverify} to compute \emph{exact} SHAP values and \emph{provable bounds} on exact SHAP values for neural networks, scaling to significantly larger search spaces than previously tractable by exact methods. 
Our contribution serves three complementary purposes: 
\vspace{-.05cm}
\begin{enumerate}[\itshape(i)] 
    \item it represents a first step towards scaling the exact computation of SHAP values for neural networks to larger search spaces, with the promise that continued progress in neural network verification will enable our approach to scale to yet larger search spaces; 
    \item it allows computing provable SHAP lower and upper bounds up to \emph{arbitrary precision}, enabling fast, yet meaningful and trustworthy explanatory insights; and
    \item it provides a principled \enquote{\emph{ground-truth}} explanation baseline for evaluating statistical approximation methods on larger search spaces, serving as an important cornerstone for SHAP evaluation in the explainable AI literature.
\end{enumerate}


\paragraph{From Neural Network Verification to Exact SHAP.} 
The problem of automatically verifying that a neural network satisfies specifications, such as adversarial robustness, has seen rapid progress in recent years, driven largely by branch-and-bound verification methods~\citep{bunel2020branch,wang2021beta}. 
Recent works~\citep[e.g.,][]{wu2023verix, bassanexplaining, izza2024distance} adapt these techniques to compute certain forms of explanations, but instead of targeting the widely used SHAP framework, they focus on robustness-based explanations that are well-suited for applying existing neural network verifiers.
In contrast, SHAP values present significant challenges for applying neural network verifiers, as SHAP is 
\begin{enumerate*}[\itshape(i)] 
  \item inherently \emph{discrete};
  \item \emph{probabilistic}, since SHAP is defined via expectations; and 
  \item involves a summation over an \emph{exponential} number of feature sets.
\end{enumerate*}
In this work, we present the first algorithmic framework that overcomes these challenges and enables computing exact SHAP values using techniques from neural network verification.

\paragraph{Our Algorithm.} 
Our \OurAlgo{} algorithm is based on an incremental branch-and-bound neural network verification approach.
\OurAlgo{} recursively partitions the feature space, computing upper and lower bounds on the SHAP value within each resulting region. 
The central challenges lie in deriving sufficiently tight bounds and in designing efficient splitting strategies. 
Conceptually, our approach can be viewed as exploiting a decomposition of the neural network's input space into near-linear regions, within which the model's output---and consequently the SHAP value---can be tightly bounded. 
Owing to the recursive structure of the branch-and-bound procedure, our algorithm can produce arbitrarily tight upper and lower bounds on the SHAP values of a model~$\NN$.
Eventually, it recovers the exact SHAP values.

\paragraph{Empirical Evaluation.} 
We conduct an extensive evaluation of our framework on both tabular and vision benchmarks, demonstrating that it scales to substantially larger search spaces than current state-of-the-art exact SHAP approaches. 
Furthermore, we compare our exact SHAP bounds with a broad range of statistical SHAP estimators, showing that our algorithm complements these estimators when high accuracy and precision are required.
Finally, we use \OurAlgo{} to evaluate SHAP estimators on neural networks for larger search spaces than were previously feasible.

Overall, our results highlight the promise of \OurAlgo{} as both a first step towards exact SHAP computation for neural networks and a central benchmark for evaluating SHAP estimators. 
Due to space constraints, we only include brief proof sketches for our theoretical results in the main paper and defer the complete proofs to \cref{appendix:proofs}.

\section{Preliminaries}
This section formally defines SHAP values and introduces bound propagation as used for neural network verification.

\paragraph{Notation.}
Let~\(S\) be a set and~\(n \in \Nats\).
We define~\(\upto{n} := \{1, \ldots, n\}\), denote the cardinality of~\(S\) as~\(|S|\), the power set of~\(S\) as~\(\powerset{S}\), and the all-zero and all-one vectors as~\(\vec{0}_n, \vec{1}_n \in \Reals^n\), respectively.
Calligraphic upper-case letters like~\(\mathcal{S} \subseteq \powerset{S}\) denote sets of subsets, while sans-serif upper-case letters like~\(\mathsf{S} \subseteq \powerset{\powerset{S}}\) denote families of sets of subsets.
For~\(\x, \z \in \Reals^n\) and~\(S \subseteq \upto{n}\), we define~\(\assign{\x}{S}{\z} := \vec{u}\), where~\(\vec{u}_i = \x[i]\) if~\(i \in S\) and~\(\vec{u}_i = \z_i\) if~\(i \in \Bar{S} = \upto{n} \setminus S\) for~\(i \in \upto{n}\).
Given~\(\lb{\x}, \ub{\x} \in \Reals^n\) with~\(\lb{\x} \leq \ub{\x}\),~\([\lb{\x}, \ub{\x}]\) denotes the hyper-box~\(\{\x \in \Reals^n \mid \lb{\x} \leq \x \leq \ub{\x}\}\).

\subsection{Shapley Additive Explanations (SHAP)}\label{sec:bg-shap}

Shapley Additive Explanations (SHAP) are a popular game-theoretic explainability technique for attributing an ML model's output to input features using the classic Shapley value framework. Given a black-box model~$\NN: \Reals^n\to\Reals^m$ (in our case, a neural network), and some specific input~$\x\in\Reals^n$, we wish to locally explain the prediction~$\NN(\x)$. Formally, the SHAP value attribution of feature~$i\in \upto{n}$ is
\begin{align}
    \shapval &:= \sum_{S \in \otherfeatures} \frac{|S|!(n - |S| - 1)!}{n!} (\val(S \cup \{i\}) - \val(S)) \nonumber \\
    &= \sum_{S \in \otherfeatures} \coaliw(|S|) \contrib(S),\label{eqn:shapley}
\end{align}
where~\(\otherfeatures := \powerset{[n] \setminus \{i\}}\),~\(\coaliw(k) := \frac{1}{n}{n-1 \choose k}^{-1}\) is the \emph{weight} of a \emph{coalition}~\(S \in \otherfeatures\) of size~\(|S| = k \in \{0, \ldots, n-1\}\),~$\val(S)$ is the \emph{value function}, measuring the contribution of~$S$, and
\begin{equation}
    \contrib(S) := \val(S \cup \{i\}) - \val(S)\label{eqn:contrib}
\end{equation}
is the \emph{marginal contribution} of~\(i\) to the coalition~\(S\). 
A central question that arises when using SHAP values is the definition of the value function~$\val(S)$~\citep{sundararajan20many}. 
In this work, we use the standard \emph{marginal} value function~$\val(S):=\mathbb{E}_{\z\sim\mathcal{D}}[\NN(\x[S];\z_{\Bar{S}})]$, where~$(\x[S];\z_{\Bar{S}})$ denotes a vector where the features in~$S$ are taken from the vector~$\x$ and those in~$\Bar{S} = \upto{n} \setminus S$ are taken from the vector~$\z$. 
Moreover,~$\mathcal{D}$ denotes the data distribution, which is typically a marginalisation over a background dataset sampled from the training dataset. 

Computing~\(\shapval\) exactly is~\#P-hard~\citep{van2022tractability}, a class viewed as even \enquote{harder} than NP-hard~\citep{arora2009computational}. However, this does not determine whether \(\shapval\) may sometimes be efficiently computable in practice.



\subsection{Neural Network Verification}\label{sec:bg-nnver}

Neural network verification seeks to automatically prove input-output properties of neural networks over computationally hard (e.g., NP-hard) problems. Given a neural network~$\NN:\Reals^n\to\Reals^m$, an input specification~$\psi_\mathrm{in}$ (e.g., interval bounds on input features), and an output specification~$\psi_\mathrm{out}(\NN(\x))$, a neural network verifier formally proves whether the output specification holds for all inputs satisfying the input specification.

\paragraph{Bound Propagation.}
Bound propagation is a key ingredient in state-of-the-art neural network verification methods. 
It operates by deriving lower and upper bounds~\(\lb{\NN}, \ub{\NN}\) on the output of a neural network~\(\NN\) over an input hyper-box~\(X := [\lb{\x}, \ub{\x}]\). 
These bounds can then be used to prove whether the output specification~$\psi_{out}(\NN(\x))$ holds for all~$\x\in X$.
Formally, the bounds satisfy~\(\lb{\NN} \leq \NN(\x) \leq \ub{\NN}\),~\(\forall \x \in X\).
The bound propagation methods we consider here also satisfy~\(\lb{\NN} = \ub{\NN}\) if~\(\lb{\x} = \ub{\x}\)~\citep{moore2009introduction,boetius2025solving}.
%

Bound propagation relies on the fact that neural networks can be defined as a composition of more fundamental functions~\(\NN = \layer{K} \circ \cdots \circ \layer{1}\), where~\(\layer{1}, \ldots, \layer{K}\) are the layers of the network.
The input bounds~$X$ are propagated layer by layer through the network, where each step obtains an outer approximation of the layer's function image. 
This is done until the output layer, where~\([\lb{\NN}, \ub{\NN}]\) is obtained.


\paragraph{Interval Bound Propagation.}
One of the simplest bound propagation techniques is interval bound propagation (\IBP)~\citep{moore2009introduction}, also known as interval arithmetic, which is based on a set of rules for computing bounds for each layer type.
For example, if~\(\layer{1}(\x) = \mat{W}\x + \vec{b}\),
\begin{align*}
  \lbz &= \max(0, \mat{W})\lb{\x} + \min(0, \mat{W})\ub{\x} + \vec{b} \leq \layer{1}(\x) \\
  \ubz &= \max(0, \mat{W})\ub{\x} + \min(0, \mat{W})\lb{\x} + \vec{b} \geq \layer{1}(\x),
\end{align*}
provides us with bounds on~\(\layer{1}(\x)\) for~\(\x \in X = [\lb{\x}, \ub{\x}]\), where~\(\min\) and~\(\max\) are applied element-wise.
We can now propagate~\([\lbz, \ubz]\) forward to compute bounds on the next layer.
For example, if~\(\layer{2}(\z) = \max(0, \z)\), we can use~\(\lbz' = \max(0, \lbz) \leq \max(0, \z) \leq \max(0, \ubz) = \ubz'\) to obtain bounds on~\((\layer{2} \circ \layer{1})(\x)\).
Repeating this propagation for~\(\layer{3}\) with~\([\lbz', \ubz']\) and, subsequently, the remaining~\(\layer{k}\), eventually yields bounds on~\(\NN(\x)\) for~\(\x \in X\).

\paragraph{Linear Bound Propagation.}
While \IBP{} propagates intervals from inputs to outputs, linear bound propagation~(\LBP) approaches~\citep{zhang2018crown,wang2018formal,singh2019deeppoly,xu2021alphacrown} propagate two linear functions from outputs to inputs through~\(\NN\).
While the linear functions are eventually converted to an interval~\([\lb{\NN}, \ub{\NN}]\), \LBP{} can significantly reduce the \emph{approximation error}~\(\ub{\NN} - \max_{\x \in X} \NN(\x)\) and~\(\min_{\x \in X} \NN(\x) - \lb{\NN}\) that arises both in \IBP{} and \LBP{}.
In particular, \LBP{} introduces no approximation error for compositions of linear functions.
We refer to~\citet{zhang2018crown} for a detailed introduction to the \LBP{} method \CROWN{}.


\section{Provable Bounds on Exact SHAP}\label{sec:method}

In this section, we present our approach for computing exact SHAP values of neural networks. Unlike linear models and decision trees, existing exact methods for neural networks rely on enumerating all coalitions~\(S \in \otherfeatures{}\) to evaluate \cref{eqn:shapley}. Our method instead iteratively refines a partition of~\(\otherfeatures{}\) to compute guaranteed bounds~\(\lb{\shapval} \leq \shapval \leq \ub{\shapval}\) using bound propagation. This allows us to compute tight bounds on~\(\shapval\) to any desired precision.



We now introduce our approach in more detail, starting with computing guaranteed SHAP bounds based on a partition.
Next, we discuss our partitioning scheme and prove that our scheme terminates after a finite number of iterations. 
Finally, we show that our algorithm terminates immediately for linear models and can terminate early for piecewise-linear models, such as ReLU-activated neural networks. For clarity, we call the elements of a partition \emph{branches} in line with common branch-and-bound terminology.

\paragraph{Bounds on~\(\shapval\).}
Let~\(\branches[t] := (\branch[1][t], \ldots, \branch[K][t])\) be a partition of~\(\otherfeatures\), so that~\(\branch[k][t] \subseteq \otherfeatures\),~\(\forall k \in \upto{K}\).
For each branch~\(\branch \in \branches[t]\), we can compute lower and upper bounds~\([\lb{\contrib}_{\branch}, \ub{\contrib}_{\branch}]\), such that~\(\lb{\contrib}_{\branch} \leq \contrib(S) \leq \ub{\contrib}_{\branch}\) for all~\(S \in \branch\), using bound propagation for neural networks on~\(\contrib(S)\) as defined in \cref{eqn:contrib}.
The practical aspects of this are discussed in \cref{sec:impl}.
%
Assuming we can also compute~\(\sumcoaliw{\branch} := \sum_{S \in \branch} \coaliw(|S|)\), we obtain the result below.
\NewDocumentCommand{\TheoremSoundness}{s}{%
  Let~\(\branches[t]\) be a partition of~\(\otherfeatures\).
  It holds that
  \begin{equation*}
      \lb{\shapval}^{(t)} 
      := \sum_{\branch \in \branches[t]\hspace{-.75em}} \sumcoaliw{\branch}\lb{\contrib}_{\branch} 
      \leq \shapval 
      \leq \sum_{\branch \in \branches[t]\hspace{-.75em}} \sumcoaliw{\branch}\ub{\contrib}_{\branch} 
      =: \ub{\shapval}^{\mathrlap{(t)}}.
  \end{equation*}
}
\begin{theorem}[SHAP Bounds]\label{theory:soundness}
    \TheoremSoundness*%
\end{theorem}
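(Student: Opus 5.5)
The proof is a direct regrouping argument. Since \(\branches[t] = (\branch[1][t], \ldots, \branch[K][t])\) is a partition of \(\otherfeatures\), every coalition \(S \in \otherfeatures\) lies in exactly one branch. This lets us rewrite the sum in \cref{eqn:shapley} branch by branch:
\begin{equation*}
  \shapval = \sum_{S \in \otherfeatures} \coaliw(|S|)\contrib(S) = \sum_{\branch \in \branches[t]} \sum_{S \in \branch} \coaliw(|S|)\contrib(S).
\end{equation*}
The sum is finite, so reordering is unproblematic. Disjointness ensures that no term is counted twice, and the covering property ensures that no term is lost.

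Next, we bound each inner sum. The weight \(\coaliw(|S|) = \frac{1}{n}\binom{n-1}{|S|}^{-1}\) is strictly positive for every \(S \in \otherfeatures\). By construction of the bound-propagation bounds, \(\lb{\contrib}_{\branch} \leq \contrib(S) \leq \ub{\contrib}_{\branch}\) holds for all \(S \in \branch\). Multiplying by the nonnegative weight preserves these inequalities, so
\begin{equation*}
  \coaliw(|S|)\lb{\contrib}_{\branch} \leq \coaliw(|S|)\contrib(S) \leq \coaliw(|S|)\ub{\contrib}_{\branch}.
\end{equation*}
Summing over \(S \in \branch\) and factoring out the branch-constant bounds gives
\begin{equation*}
  \sumcoaliw{\branch}\lb{\contrib}_{\branch} \leq \sum_{S \in \branch} \coaliw(|S|)\contrib(S) \leq \sumcoaliw{\branch}\ub{\contrib}_{\branch},
\end{equation*}
using the definition \(\sumcoaliw{\branch} = \sum_{S \in \branch}\coaliw(|S|)\). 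Summing these inequalities over all branches and applying the regrouped identity yields \(\lb{\shapval}^{(t)} \leq \shapval \leq \ub{\shapval}^{(t)}\).

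There is no real obstacle here. The only points that need care are the positivity of the weights, which is what makes the factoring step valid, and the standing assumption that the bounds \([\lb{\contrib}_{\branch}, \ub{\contrib}_{\branch}]\) are sound for every \(S\) in the branch. That soundness is inherited from the guarantee of bound propagation applied to \(\contrib\), which is treated as given here and deferred to \cref{sec:impl}. If \(\NN\) has several outputs, the argument applies componentwise to the output being explained. If a branch is empty, \(\sumcoaliw{\branch} = 0\) and it contributes nothing to either side, so it needs no separate treatment.
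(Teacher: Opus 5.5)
Your proof is correct and takes essentially the same route as the paper's. Both regroup the Shapley sum along the partition, apply the branch-wise soundness of bound propagation together with $\coaliw(|S|) > 0$, and identify $\sum_{S \in \branch}\coaliw(|S|)$ with $\sumcoaliw{\branch}$; your explicit treatment of disjointness and covering, and of empty branches, only spells out steps the paper leaves implicit.
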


\emph{Proof Sketch.} We group coalitions~\(S\) into branches~\(\branch\).
By definition, each branch's total Shapley weight~\(\sumcoaliw{\branch}\) is exactly the sum of the coalition weights~\(\coaliw(|S|)\) inside it. 
Bound propagation ensures that each coalition's contribution is bounded by the contribution bounds~\([\lb{\contrib}_{\branch}, \ub{\contrib}_{\branch}]\) of the respective branch.
Finally, since the branches form a partition, summing~\(\sumcoaliw{\branch}\lb{\contrib}_{\branch}\) and~\(\sumcoaliw{\branch}\ub{\contrib}_{\branch}\) provides a lower, respectively, upper bound on the Shapley value~\(\shapval\).

\begin{figure}
    \centering
    \tikzexternalenable%
    \begin{subfigure}{.5\linewidth}
        \begin{tikzpicture}[trim left={-.5cm}]
            \begin{axis}[
              width=1.15\linewidth,
              font=\footnotesize,
              axis lines=left,
              xtick={0,1}, ytick={0,1}, 
              enlargelimits=0.5,
              xlabel={$\mask[{j_1}]$}, ylabel={$\mask[{j_2}]$},
              x label style={at={(axis description cs:0.975,0.0)},anchor=north},
              y label style={at={(axis description cs:0.02,0.96)},rotate=270,anchor=east},
            ]
              \addplot [black, mark=*, only marks] coordinates {
                  (0,0) (0,1)
              };
              \addplot [black, mark=*, only marks] coordinates {
                  (1,0) (1,1)
              };
              \node [label={[label distance=-.5mm]225:$\emptyset$}] at (0,0) {};
              \node [label={[label distance=-.5mm]315:$\hspace*{-2mm}\{j_1\}$}] at (1,0) {};
              \node [label={[label distance=-.5mm]135:$\{j_2\}\hspace*{-2mm}$}] at (0,1) {};
              \node [label={[label distance=-.5mm]45:$\hspace*{-6mm}\{j_1,j_2\}$}] at (1,1) {};
              
              \draw[CatB, fill=CatB, fill opacity=.6, ultra thick, rounded corners] (-.1,-.1) rectangle (1.1,.1);
              \draw[CatB, fill=CatB, fill opacity=.6, ultra thick, rounded corners] (-.1,0.9) rectangle (1.1,1.1);

              \draw[CatA, fill=CatA, fill opacity=.6, ultra thick, rounded corners] (-.1,-.1) rectangle (0.1,1.1);
              \draw[CatA, fill=CatA, fill opacity=.6, ultra thick, rounded corners] (0.9,-.1) rectangle (1.1,1.1);
            \end{axis}
        \end{tikzpicture}
    \end{subfigure}\hfill
    \begin{subfigure}{.5\linewidth}
        \centering
        \begin{tikzpicture}[trim left={-0cm}]
            \begin{axis}[
              width=1.4\linewidth,
              font=\footnotesize,
              xtick={.2,1.3}, ytick={1.4}, 
              extra y ticks={0.4},
              xticklabels={0,1},yticklabels={1},
              extra y tick label={0},
              xticklabel style={anchor=north},
              yticklabel style={anchor=east,xshift=1mm,yshift=.35mm},
              extra y tick style={anchor=east,tick label style={xshift=1.5mm,yshift=1.5mm}},
              xmin=0,xmax=1.6,ymin=0,ymax=1.8,
              zmin=0,zmax=1.6,
              ztick=\empty,
              axis lines=middle,
              clip=false,
              zlabel={$\contrib(S)$},
            ]
              \draw[
                CatB, fill=CatB, fill opacity=.6, ultra thick, rounded corners=2.35,
                rotate around={351:(.7,.4,0)}
              ] (.16,.32,0) rectangle (1.24,.48,0);
              \draw[
                CatB, fill=CatB, fill opacity=.6, ultra thick, rounded corners=2.35,
                rotate around={351:(.6,1.2)}
              ] (.13,1.37,0) rectangle (1.18,1.6,0);

              \draw[
                CatA, fill=CatA, fill opacity=.6, ultra thick, rounded corners=2.35,
                rotate around={306:(.0,.7,0)}
              ] (.555,-.42,0.54) rectangle (.66,-.42,1.52);
              \draw[
                CatA, fill=CatA, fill opacity=.6, ultra thick, rounded corners=2.35,
                rotate around={306:(.0,.7,0)}
              ] (.58,1.26,0.83) rectangle (.69,1.26,1.82);

              \draw plot[mark=*] coordinates {(0.2,0.4)};
              \draw plot[mark=*] coordinates {(0.2,1.4)};
              \draw plot[mark=*] coordinates {(1.2,0.4)};
              \draw plot[mark=*] coordinates {(1.2,1.4)};

              \addplot3 [
                mesh, black,
                shift={(.2,.4,.4)},
              ] coordinates {
(0.00,0.00,0.20) (0.00,0.25,0.30) (0.00,0.50,0.40) (0.00,0.75,0.50) (0.00,1.00,0.60)

(0.25,0.00,0.05) (0.25,0.25,0.30) (0.25,0.50,0.40) (0.25,0.75,0.40) (0.25,1.00,0.60)

(0.50,0.00,0.00) (0.50,0.25,0.15) (0.50,0.50,0.15) (0.50,0.75,0.15) (0.50,1.00,0.30)

(0.75,0.00,0.05) (0.75,0.25,0.20) (0.75,0.50,0.15) (0.75,0.75,0.15) (0.75,1.00,0.15)

(1.00,0.00,0.00) (1.00,0.25,0.00) (1.00,0.50,0.00) (1.00,0.75,0.00) (1.00,1.00,0.00)
              };

              \addplot3 [mesh, CatA, ultra thick, shift={(.2,.4,.4)}] coordinates {
(0.00,0.00,0.20) (0.00,0.25,0.30) (0.00,0.50,0.40) (0.00,0.75,0.50) (0.00,1.00,0.60)
              };
              \addplot3 [mesh, CatA, ultra thick, shift={(.2,.4,.4)}] coordinates {
(1.00,0.00,0.00) (1.00,0.25,0.00) (1.00,0.50,0.00) (1.00,0.75,0.00) (1.00,1.00,0.00)
              };
              \addplot3 [mesh, CatB, ultra thick, shift={(.2,.4,.4)}] coordinates {
(0.00,0.00,0.20) (0.25,0.00,0.05) (0.50,0.00,0.00) (0.75,0.00,0.05) (1.00,0.00,0.00)
              };
              \addplot3 [mesh, CatB, ultra thick, shift={(.2,.4,.4)}] coordinates {
(0.00,1.00,0.60) (0.25,1.00,0.60) (0.50,1.00,0.30) (0.75,1.00,0.15) (1.00,1.00,0.00)
              };
            \end{axis}
        \end{tikzpicture}
    \end{subfigure}
    \tikzexternaldisable%
    \caption[Splitting in VeriSHAP]{%
      \textbf{Splitting in \OurAlgo{}.}
      For \(\otherfeatures{} = \{j_1, j_2\}\), there are two options, \legendsplitbox{CatA} and~\legendsplitbox{CatB}, for splitting the initial partition (left).
      For the split~\legendsplitbox{CatA}, the branches are~\(\branch[1] = \{\emptyset, \{j_2\}\}\) and~\(\branch[2] = \{\{j_1\}, \{j_1, j_2\}\}\), defined by~\(\includeset[1] = \excludeset[2] = \emptyset\) and~\(\excludeset[1] = \includeset[2] = \{j_1\}\).
      This split allows computing the exact SHAP value directly since the contribution function~\(\contrib(S)\)~\legendmesh{black} is linear for this split (right).
    }%
    \label{fig:splitting-illustration}
\end{figure}

\paragraph{Partitioning~\(\otherfeatures\).}
We now describe how we create and refine the partition~\(\branches\).
We define the branches~\(\branch \in \branches\) in terms of sets of \emph{included} and \emph{excluded features}~\(\includeset, \excludeset \subseteq \upto{n} \setminus \{i\}\) so that~\(\branch := \{S \in \otherfeatures \mid \includeset{} \subseteq S, \excludeset{} \cap S = \emptyset\}\).
As an initial partition, we use the trivial partition~\(\branches[1] := (\branch[1][1])\) where~\(\branch[1][1]\) is defined by~\(\includeset[1][1] = \excludeset[1][1] = \emptyset\), so that~\(\branch[1][1] = \otherfeatures\).
Our refinement proceeds recursively.
The~\(t\)-th partition~\(\branches[t]\) is derived from the previous partition~\(\branches[t-1] = (\branch[1][t-1], \ldots, \branch[K][t-1])\) by selecting a branch~\(\branch[k] \in \branches[t-1]\) defined by~\(\includeset[k]\) and~\(\excludeset[k]\) and selecting a feature~\(j \in (\upto{n} \setminus \{i\}) \setminus (\includeset[k] \cup \excludeset[k])\).
Intuitively, we split~\(\branch[k]\) by putting all coalitions~\(S \in \branch[k]\) containing the selected feature~\(j\) into the first new branch~\(\branchone[k]\) and putting all~\(S \in \branch[k]\) not containing~\(j\) into the second new branch~\(\branchtwo[k]\).
Formally,~\(\branches[t] := (\branch[1][t-1], \ldots, \branchone[k], \branchtwo[k], \ldots, \branch[K][t-1])\), where~\(\branchone[k]\) is defined by~\(\includeone[k] := \includeset[k] \cup \{j\}\) and~\(\excludeone[k] := \excludeset[k]\), and~\(\branchtwo[k]\) is defined by~\(\includetwo[k] := \includeset[k]\) and~\(\excludetwo[k] := \excludeset[k] \cup \{j\}\).
\Cref{fig:splitting-illustration} illustrates this partitioning strategy.
We discuss strategies for selecting branches and features to split in \cref{sec:heuristics}.

A crucial assumption of our bounding approach is that we can compute~\(\sumcoaliw{\branch}\) efficiently.
As we show next, the partitioning approach described above enables this.
\NewDocumentCommand{\SumCoaliWProp}{}{
    Consider~\(\branch \in \branches[t]\) defined by~\(\includeset{}\) and~\(\excludeset{}\).
    Letting~\(r := |\includeset{}|\) and~\(s := |\includeset{}| + |\excludeset{}|\), we have~\(\sumcoaliw{\branch} = \sum_{S \in \branch} \coaliw(|S|) = (s + 1)^{-1}{s \choose r}^{\mathrlap{-1}}\).
}
\begin{proposition}[Closed-Form Expression for~\(\sumcoaliw{\branch}\)]\label{theory:sumcoaliw}
    \SumCoaliWProp%
\end{proposition}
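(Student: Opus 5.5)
Writing \(S = \includeset{} \cup T\) with \(T\) ranging over subsets of the \(m := n - 1 - s\) free features, the definition of \(\branch\) gives
\[
\sumcoaliw{\branch} = \sum_{k=0}^{m} {m \choose k}\, \coaliw(r+k), \qquad \coaliw(r+k) = \frac{(r+k)!\,(n-1-r-k)!}{n!}.
\]
The plan is to evaluate this sum in closed form. The cleanest route is the Beta-integral representation of the Shapley weights. The identity \(a!\,b!/(a+b+1)! = \int_0^1 u^a(1-u)^b\,du\) gives \(\coaliw(k) = \int_0^1 u^k (1-u)^{n-1-k}\,du\), which is the standard random-order interpretation of Shapley weights. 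Substituting this into the sum and exchanging sum and integral, I expect
\[
\sumcoaliw{\branch} = \int_0^1 u^r (1-u)^{|\excludeset{}|} \sum_{k=0}^{m} {m\choose k} u^k (1-u)^{m-k}\,du .
\]
Here the exponent of \(1-u\) outside the inner sum is \(n-1-r-m = s - r = |\excludeset{}|\). By the binomial theorem the inner sum equals \((u + 1 - u)^m = 1\).

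What remains is \(\int_0^1 u^r(1-u)^{s-r}\,du = \frac{r!\,(s-r)!}{(s+1)!} = (s+1)^{-1}{s\choose r}^{-1}\), which is the claim. As a sanity check, for the trivial branch (\(r = s = 0\)) this gives \(1\), matching the fact that the Shapley weights over \(\otherfeatures\) sum to one.

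The only real step is justifying the Beta representation of \(\coaliw\); I will cite or briefly verify the standard Beta-function identity by integration by parts. Everything else is a finite sum–integral interchange, which is immediate.

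An alternative without integrals is the probabilistic reading. If \(\pi\) is a uniformly random permutation of \(\upto{n}\), then \(\coaliw(|S|)\) is the probability that the set of features preceding \(i\) is exactly \(S\). Hence \(\sumcoaliw{\branch}\) is the probability that, among the \(s+1\) features \(\includeset{}\cup\excludeset{}\cup\{i\}\), all of \(\includeset{}\) precede \(i\) and all of \(\excludeset{}\) follow it. The relative order of these \(s+1\) features is uniform. Counting the orders with \(i\) in position \(r+1\), \(\includeset{}\) before it and \(\excludeset{}\) after it gives \(r!(s-r)!/(s+1)!\). I would likely present this version, or the integral one, as the main argument and keep the other as a remark.
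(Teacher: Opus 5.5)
Your integral argument is correct and is essentially the paper's proof. The paper likewise writes $\frac{1}{n}\binom{n-1}{k+r}^{-1} = \int_0^1 t^{k+r}(1-t)^{n-k-r-1}\,dt$, swaps the finite sum with the integral, collapses the sum with the binomial theorem, and evaluates $\int_0^1 t^r(1-t)^{s-r}\,dt = \frac{r!\,(s-r)!}{(s+1)!}$; the only cosmetic difference is that you use $(u+(1-u))^m = 1$, whereas the paper factors out $t^r(1-t)^{n-r-1}$ and sums powers of $t/(1-t)$, so your version avoids dividing by $1-t$. Your random-permutation argument is also correct and gives a genuinely integral-free route (it is the combinatorial fact behind the Beta identity), so it is worth keeping as a remark.
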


\emph{Proof Sketch.}
The proposition is established through a sequence of combinatorial manipulations, leveraging a beta-function identity and the truncated-sum binomial theorem.

Furthermore, our refinement process eventually produces a partition for which~\(\shapval\) can be computed exactly.

\NewDocumentCommand{\TerminationTheorem}{}{%
  There exists~\(t \in \Nats\), such that~\(\lb{\shapval}^{(t)} = \ub{\shapval}^{(t)} = \shapval\).
}
\begin{theorem}[Termination]\label{theory:termination}
    \TerminationTheorem%
\end{theorem}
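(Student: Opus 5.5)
We argue by running the refinement until every branch has been split on every available feature. Each split takes a branch with \(\includeset\cup\excludeset \neq \upto{n}\setminus\{i\}\) and produces two children whose fixed-feature set \(\includeset\cup\excludeset\) has grown by one element. We use the potential \(\Phi^{(t)} := \sum_{\branch\in\branches[t]} \bigl(n-1-|\includeset\cup\excludeset|\bigr)\), summed over the branches of \(\branches[t]\) with each branch's own \(\includeset\) and \(\excludeset\).
\begin{itemize}
\item A split replaces one term \(d\ge 1\) by two terms \(d-1\), so \(\Phi\) changes by \(d-2\). This is not monotone.
\item It is simpler to count leaves. The refinement is a binary tree of depth at most \(n-1\), so it has at most \(2^{n-1}\) leaves.
\item Every split increases the number of branches by one, so at most \(2^{n-1}-1\) splits are possible before no branch remains splittable.
\end{itemize}
Hence there exists \(t \le 2^{n-1}\) at which, provided the selection strategy only picks splittable branches while some exist, every branch satisfies \(\includeset\cup\excludeset = \upto{n}\setminus\{i\}\).

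Next, we show that such a branch is a singleton. If \(\includeset\cup\excludeset\) covers all of \(\upto{n}\setminus\{i\}\), then any \(S \in \branch\) must satisfy \(\includeset\subseteq S\) and \(S\cap\excludeset=\emptyset\). This forces \(S=\includeset\), so \(\branch=\{\includeset\}\). Consequently \(\branches[t]\) consists of the \(2^{n-1}\) singletons \(\{S\}\), \(S\in\otherfeatures\).

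For a singleton branch \(\{S\}\), the input to bound propagation is a set of degenerate boxes, since all features are fixed. Each of \(\val(S\cup\{i\})\) and \(\val(S)\) is a finite average over the background data of \(\NN\) evaluated at single points \(\assign{\x}{S}{\z}\). By the stated property of the bound propagation methods (\(\lb{\NN}=\ub{\NN}\) whenever \(\lb{\x}=\ub{\x}\)), we get \(\lb{\contrib}_{\{S\}}=\ub{\contrib}_{\{S\}}=\contrib(S)\).

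Finally, Proposition~\ref{theory:sumcoaliw} with \(r=|S|\) and \(s=n-1\) gives \(\sumcoaliw{\{S\}} = n^{-1}\binom{n-1}{|S|}^{-1} = \coaliw(|S|)\); this also follows directly from the definition. Substituting into the expressions in Theorem~\ref{theory:soundness}, both \(\lb{\shapval}^{(t)}\) and \(\ub{\shapval}^{(t)}\) equal \(\sum_{S}\coaliw(|S|)\contrib(S)=\shapval\), which proves the claim.

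The main obstacle is the precise meaning of "bound propagation on \(\contrib(S)\)" when \(\mathcal{D}\) is a distribution. The argument requires \(\mathcal{D}\) to be an empirical (finite) background distribution, so that the contribution is a finite combination of network evaluations at single points, where exactness follows from the degenerate-box property. If \(\mathcal{D}\) is continuous, the exactness step would need an additional assumption, and I would state that assumption explicitly.
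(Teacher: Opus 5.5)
Your proof is correct and follows the paper's argument. You refine until every branch is a singleton \(\{S\}\), which happens at \(t = 2^{n-1}\); you then use the degenerate-box exactness of bound propagation to get \(\lb{\contrib}_{\{S\}} = \ub{\contrib}_{\{S\}} = \contrib(S)\), and substitute into the bounds of \cref{theory:soundness}. Your leaf-counting argument and your explicit finite-background assumption fill in details the paper leaves implicit, but you can delete the abandoned potential-function bullet.
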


\emph{Proof Sketch.}
Since the number of features is finite, our refinement process eventually produces a partition where each branch~\(\branch\) contains a single set~\(S \in \otherfeatures\).
At this point, bound propagation computes~\(\lb{\contrib}_{\branch} = \ub{\contrib}_{\branch} = \contrib(S)\) so that the SHAP values are computed exactly.

As a direct consequence, our algorithm can also compute SHAP values up to arbitrary precision, which, in practice, can be faster than computing the exact SHAP values.
\NewDocumentCommand{\ArbitraryPrecisionCorollary}{}{%
    Given any precision level~$\delta \in \RealsNonNeg$, there exists~\(t \in \Nats\), such that~$\ub{\shapval}^{(t)} - \lb{\shapval}^{(t)} \leq \delta$. 
    This implies~$\lvert\hat{\shapval}-\shapval\rvert\leq \delta$ for every~\(\hat{\shapval} \in [\lb{\shapval}^{(t)}, \ub{\shapval}^{(t)}]\).
}
\begin{corollary}[Arbitrary Precision]\label{theory:arbitrary-precision}
    \ArbitraryPrecisionCorollary%
\end{corollary}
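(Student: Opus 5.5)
The corollary follows directly from \cref{theory:termination} combined with \cref{theory:soundness}; no new estimate is needed. Given any $\delta \in \RealsNonNeg$, we invoke \cref{theory:termination} to obtain some $t \in \Nats$ with $\lb{\shapval}^{(t)} = \ub{\shapval}^{(t)} = \shapval$. For this $t$ the width of the bounds is $\ub{\shapval}^{(t)} - \lb{\shapval}^{(t)} = 0 \leq \delta$, so the first claim holds with this same $t$ for every $\delta \geq 0$, including $\delta = 0$.

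For the second claim, take any $\hat{\shapval} \in [\lb{\shapval}^{(t)}, \ub{\shapval}^{(t)}]$. By \cref{theory:soundness}, $\shapval$ also lies in this interval, since $\branches[t]$ is a partition of $\otherfeatures$. Two points of an interval are at distance at most its length, so
\begin{equation*}
  \lvert \hat{\shapval} - \shapval \rvert \leq \ub{\shapval}^{(t)} - \lb{\shapval}^{(t)} \leq \delta .
\end{equation*}
This implication holds for \emph{any} $t$ whose bound width is at most $\delta$, not only the terminating one. We will state it that way, because in practice the width can drop below $\delta$ long before the partition reaches singletons.

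The only point needing care is that \cref{theory:termination} applies to the partitions actually produced by the refinement scheme. We need the refinement to keep splitting until each branch is a singleton, which happens because $|\includeset{}| + |\excludeset{}|$ strictly increases along a branch and is bounded by $n-1$. This is already part of \cref{theory:termination}, so we only cite it.

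If a stronger, monotone statement is wanted, namely that the width stays at most $\delta$ from some $t$ onward, we would additionally argue that splitting a branch does not increase its bound width. This requires that bound propagation on a sub-branch gives bounds no looser than on its parent, which may fail for general bound propagation methods. We therefore avoid relying on it and claim only existence, as the corollary states.
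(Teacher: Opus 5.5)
Your proposal is correct and follows the paper's proof: both invoke \cref{theory:termination} to obtain an iteration~\(t\) at which the bound width is~\(0 \leq \delta\). You also spell out the second claim via \cref{theory:soundness}, since both~\(\hat{\shapval}\) and~\(\shapval\) lie in~\([\lb{\shapval}^{(t)}, \ub{\shapval}^{(t)}]\); the paper leaves this step implicit.
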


While \cref{theory:termination} proves the termination of our approach, the runtime required until termination can be exponential in the number of features, due to the theoretical complexity of computing SHAP values exactly~\citep{van2022tractability, arenas2023complexity}.
However, as we show next, our approach terminates in the first iteration for linear models.
This result implies that our approach can terminate early even for non-linear models, particularly when the model can be decomposed into linear regions, as is the case for ReLU neural networks, which are inherently piecewise-linear.

\NewDocumentCommand{\LinearModelProposition}{}{%
    If~\([\lb{\contrib}, \ub{\contrib}]\) are computed using an \LBP{} method and~\(\NN(\x) = \transp{\vec{w}}\x + \vec{b}\), it holds that~\(\lb{\shapval}^{(1)} = \ub{\shapval}^{(1)} = \shapval\).
}
\begin{proposition}[Linear Models]\label{theory:linear-shap}
    \LinearModelProposition%
\end{proposition}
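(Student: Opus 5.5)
The plan is to show that at $t=1$, where $\branches[1]$ consists of the single branch $\branch[1][1]=\otherfeatures$, the \LBP{} bounds on $\contrib(S)$ over this branch coincide: $\lb{\contrib}_{\branch} = \ub{\contrib}_{\branch} = c$ for some constant $c$. Then \cref{theory:soundness} gives $\lb{\shapval}^{(1)} = \sumcoaliw{\branch}\, c = \ub{\shapval}^{(1)}$, and since these bracket $\shapval$, both equal $\shapval$. By \cref{theory:sumcoaliw} with $r=s=0$ we also get $\sumcoaliw{\branch}=1$, so the common value is $c$.

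\textbf{Step 1: $\contrib$ is constant and affine in the relaxation.} For $f(\x)=\transp{\vec{w}}\x+\vec{b}$, linearity of expectation gives
\[
\val(S)=\transp{\vec{w}}\,\mathbb{E}_{\z\sim\mathcal{D}}[\assign{\x}{S}{\z}]+\vec{b}.
\]
Coordinatewise, $\mathbb{E}[\assign{\x}{S}{\z}]_j = \x[j]$ if $j\in S$ and $\mathbb{E}[\z_j]$ otherwise. Hence
\[
\contrib(S)=\vec{w}_i\bigl(\x[i]-\mathbb{E}[\z_i]\bigr),
\]
which is independent of $S$ because $i\notin S$. So $\min_{S}\contrib(S)=\max_S\contrib(S)=c$.

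\textbf{Step 2: the bound propagation reproduces this exactly.} This is the step that needs care. Following \cref{sec:impl}, the computation encodes coalitions by a mask vector $\mask\in\{0,1\}^n$ relaxed to a box, with $\mask[j]\in[0,1]$ for free features. The input fed to $\NN$ is then $\mask\odot\x+(\vec{1}_n-\mask)\odot\z$, averaged over background samples $\z$, and $\contrib$ is the difference of two such evaluations with $\mask[i]$ fixed to $1$ and $0$ respectively.

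For a fixed background sample, this map is affine in $\mask$. Composing with the affine $\NN$, averaging over the finite background set, and taking the difference are all affine operations. So the whole computational graph from $\mask$ to $\contrib$ is a composition of affine maps, and by the fact stated in \cref{sec:bg-nnver}, \LBP{} introduces no approximation error here. It returns the exact affine function
\[
\mask\mapsto \sum_j a_j\mask[j]+c .
\]
By Step 1, on the vertices of the mask box this function equals the constant $c$. The coefficient $a_j$ of each free $\mask[j]$, $j\neq i$, is therefore zero: the $\x[j]$ and $\z_j$ terms cancel between $\val(S\cup\{i\})$ and $\val(S)$. Concretizing over the box then gives $\lb{\contrib}=\ub{\contrib}=c$.

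\textbf{Main obstacle.} The main difficulty is making Step 2 rigorous. The mask multiplies the input, and one must check that this does not introduce a bilinear relaxation. It does not, since $\x$ and $\z$ are constants, not variables. One must also confirm that the \LBP{} formulation treats the two evaluations in $\contrib$ jointly, with the same mask variables, so that the cancellation happens symbolically before concretization. If the two terms were bounded separately and then subtracted, interval-style slack proportional to $|\vec{w}_j(\x[j]-\z_j)|$ would remain. I would therefore state explicitly that $\contrib$ is propagated as a single network, which is what the paper's implementation does.
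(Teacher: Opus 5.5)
Your proof is correct and follows essentially the same route as the paper: show that $\contrib(S)=\vec{w}_i(\x[i]-\mathbb{E}_{\z\sim\mathcal{D}}\z_i)$ is independent of $S$, then use that \LBP{} is exact on compositions of affine maps to get $\lb{\contrib}_{\branch}=\ub{\contrib}_{\branch}$ already on the initial branch $\otherfeatures$. The differences are cosmetic: you conclude via the squeeze from \cref{theory:soundness} (with $\sumcoaliw{\branch}=1$) instead of citing Lundberg and Lee's formula for linear-model SHAP values, and you make explicit the paper's implicit assumption that $\contrib$ is propagated as one graph rather than by bounding the two value terms separately.
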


\emph{Proof Sketch.}
We first prove that the contribution~\(\contrib(S)\) is constant in~\(S\) for linear models.
Since \LBP{} methods have no approximation error for compositions of linear functions~\citep{zhang2018crown}---such as the contribution of a linear model---\LBP{} computes tight bounds equal to the constant contribution for every branch. This provides the exact SHAP value in the first iteration for a linear model.

An equivalent argument proves that our approach terminates immediately if, after sufficient splitting,~\(\contrib\) becomes linear in each branch, as illustrated in \cref{fig:splitting-illustration}.
This can be the case, e.g., for piecewise-linear functions, such as ReLU-activated neural networks, when each branch lies within a single linear segment.
Specifically, the runtime of \OurAlgo{} is upper-bounded by~$\mathcal{O}(B\cdot T_{BP})$, where~$B$ denotes the number of explored branches and~$T_{BP}$ is the cost of bound propagation per branch.
The central question is how large~\(B\) becomes in practice: if branching requires only a few iterations to restrict the network to regions where the marginal contribution is linear, or sufficiently close to linear for the bound propagation to produce tight bounds,~\(B\) remains small and \OurAlgo{} is efficient in practice.

\Cref{sec:experiments} demonstrates that~\(B\) indeed remains comparatively small in practice, allowing our approach to outperform the existing state-of-the-art for computing exact SHAP values.
Before moving on to \cref{sec:experiments}, we first treat more practical aspects of implementing our approach.



\section{Efficient SHAP Bounding}\label{sec:impl}

While \cref{sec:method} introduces our approach at a conceptual level, this section fills in details for practically bounding SHAP values of neural networks and introduces our concrete \OurAlgo{} algorithm.
Specifically, we discuss
\begin{enumerate*}[\itshape(i)]
    \item leveraging bound propagation for computing bounds on~\(\contrib\);
    \item pruning branches to manage memory demands;
    \item batch-processing branches to leverage parallel hardware; 
    \item computing sums of coalition weights~\(\sumcoaliw{\branch}\) particularly efficiently;
    and 
    \item reusing branches for simultaneously bounding the SHAP values of all features.
\end{enumerate*}
\Cref{sec:heuristics} discusses splitting strategies for partitioning and \cref{appendix:algorithm-additional} further elaborates on selected aspects of \OurAlgo{}.

\Cref{algo:main} summarises \OurAlgo{}.
In \cref{algo:main}, \AlgoBP{} refers to a bound propagation technique as introduced in \cref{sec:bg-nnver}, such as \CROWN{}.
\AlgoAssemble{} and \AlgoRefine{} implement \cref{theory:multi-shap-bounds,theory:sum-coaliws-recursive}, respectively.
\AlgoSelect{} and \AlgoSplit{} are introduced in \cref{sec:heuristics}.
All remaining sub-procedures and variables are described in the following paragraphs.

\begin{algorithm}[tb]
  \caption{\OurAlgo{}}\label{algo:main}
  \begin{algorithmic}
    \REQUIRE Value function~\(\maskval: \Bools^n \to \Reals\), batch size~$b$
    \STATE $[\lb{\mask}, \ub{\mask}] \gets [\vec{0}_{n}, \vec{1}_{n}];$\quad $\sumcoaliw{} \gets 1$
    \STATE $[\lb{\maskval}, \ub{\maskval}] \gets \AlgoBP(\maskval, [\lb{\mask}, \ub{\mask}])$
    \STATE $[\lb{\boldsymbol{\varphi}}^{(1)}, \ub{\boldsymbol{\varphi}}^{(1)}] \gets \AlgoAssemble(\sumcoaliw{}, \lb{\maskval}, \ub{\maskval})$ \COMMENT{\cref{theory:multi-shap-bounds}}
    \STATE $\branches[1] \gets \{(\lb{\mask}, \ub{\mask}, \sumcoaliw{}, \lb{\maskval}, \ub{\maskval})\}$
    \FOR{$t \in \{2, \ldots, 2^n\}$}
      \STATE $(\lb{\maskmat},\ub{\maskmat},\sumcoaliws,\lb{\maskval},\ub{\maskval}), \textit{others} \gets \AlgoSelect(\branches[t-1], b)$
      \STATE $[\lb{\maskmat}',\ub{\maskmat}'] \gets \AlgoSplit(\lb{\maskmat},\ub{\maskmat})$
      \STATE $[\lb{\maskval}', \ub{\maskval}'] \gets \AlgoBP(\maskval, [\lb{\maskmat}', \ub{\maskmat}'])$
      \STATE $\sumcoaliws\!' \gets \AlgoRefine(\sumcoaliws)$ \COMMENT{\cref{theory:sum-coaliws-recursive}}
      \STATE $[\lb{\boldsymbol{\varphi}}, \ub{\boldsymbol{\varphi}}] \gets \AlgoAssemble(\sumcoaliws{}, \lb{\maskval}, \ub{\maskval})$
      \STATE $[\lb{\boldsymbol{\varphi}}', \ub{\boldsymbol{\varphi}}'] \gets \AlgoAssemble(\sumcoaliws{}', \lb{\maskval}', \ub{\maskval}')$
      \STATE $\lb{\boldsymbol{\varphi}}^{(t)} \gets \lb{\boldsymbol{\varphi}}^{(t-1)} - \lb{\boldsymbol{\varphi}} + \lb{\boldsymbol{\varphi}}'$
      \STATE $\ub{\boldsymbol{\varphi}}^{(t)} \gets \ub{\boldsymbol{\varphi}}^{(t-1)} - \ub{\boldsymbol{\varphi}} + \ub{\boldsymbol{\varphi}}'$
      \STATE $\branches[t] \gets \textit{others} \cup \AlgorithmName{Prune}((\lb{\maskmat}',\ub{\maskmat}',\sumcoaliws\!',\lb{\maskval}',\ub{\maskval}'))$
    \ENDFOR
  \end{algorithmic}
\end{algorithm}

\paragraph{Computing~\(\lb{\contrib}_{\branch{}}\),~\(\ub{\contrib}_{\branch{}}\).}
We use bound propagation methods for neural networks as introduced in \cref{sec:bg-nnver} to compute~\(\lb{\contrib}_{\branch{}} \leq \contrib(S) \leq \ub{\contrib}_{\branch{}}, \forall S \in \branch\) where~\(\branch \subseteq \otherfeatures\) is a branch in the partition~\(\branches[t]\) defined by the sets of included and excluded features~\(\includeset, \excludeset \subseteq \upto{n} \setminus \{i\}\).
The challenge here is that bound propagation requires a function with real-valued vector arguments, while~\(\contrib\) has integer sets as arguments.
To resolve this, we represent sets of features~\(S\) using \emph{masks}~\(\mask \in \Bools^{n}\), such that~\(\mask[j] = 1 \equiv j \in S, \forall j \in \upto{n}\), and define~\(\maskcontrib: \Bools^{n} \to \Reals\) as~\(\maskcontrib(\mask) = \contrib(S)\).
The masks allow us to identify branches~\(\branch\) with the bounds~\(\lb{\mask} \leq \mask \leq \ub{\mask}\), where~\(\lb{\mask},\ub{\mask} \in \Bools^{n}\),~\(\lb{\mask}_j = 1 \equiv j \in \includeset\), and~\(\ub{\mask}_j = 0 \equiv j \in \excludeset\).
By relaxing the Boolean masks to the continuous domain~\({[0, 1]}^{n} \subset \Reals^{n}\), we can perform bound propagation on~\(\maskcontrib\) using~\([\lb{\mask},\ub{\mask}]\) as input bounds.

\begin{example}
    To better illustrate our use of masks, we show how to compute~\(\maskcontrib\) in practice.
    Let~\(\mask \in \Bools^{n}\) represent the set of features~\(S \in \otherfeatures\).
    Now,~\(\maskcontrib(\mask) = \maskval(\mask^{+i}) - \maskval(\mask)\), where~\({\mask[i]^{+i}} = 1\),~\({\mask[j]^{+i}} = \mask[j]\) for~\(j \in \upto{n} \setminus \{i\}\), and~\(\maskval: \Bools^n \to \Reals\) is~\(\maskval(\mask) = \mathbb{E}_{\z \sim \mathcal{D}}[f(\mask * \x + (1 - \mask) * \z)]\) with~\(*\) denoting element-wise multiplication and~\(\mathcal{D}\) is the data distribution.
    Here~\(\mask^{+i}\) corresponds to~\(S \cup \{i\}\) in \cref{eqn:contrib}, and~\(\mask * \x + (1 - \mask) * \z = (\x[S];\z_{\Bar{S}})\).
    \Cref{appendix:algorithm-additional} provides further details.
\end{example}

\paragraph{Pruning Branches (\AlgorithmName{Prune}).}
\newcommand{\prunedbranches}{\mathsf{P}\:\!\!\mathsf{B}}
For high-dimensional feature spaces, \OurAlgo{} may create a large number of branches leading to substantial memory demands. 
However, since branches~\(\branch\) with~\(\lb{\contrib}_{\branch} = \ub{\contrib}_{\branch}\) do not have to be refined further, these branches can be \emph{pruned}.
If~\(\prunedbranches\) is the family of all pruned branches, it suffices to store~\(\sum_{\branch \in \prunedbranches} \sumcoaliw{\branch}\lb{\contrib}_{\branch}\) instead of the individual branches in memory.
In \cref{algo:main}, this value is stored implicitly in~\([\lb{\boldsymbol{\varphi}}^{(t)}, \ub{\boldsymbol{\varphi}}^{(t)}]\).

\paragraph{Batch Processing.}
Bound propagation and branch and bound are particularly well-suited for modern massively-parallel hardware since they allow for processing many branches in parallel~\citep{xu2021alphacrown}. 
To optimally utilise such hardware, we implement \OurAlgo{} to split the branches in our partition in batches.
In \cref{algo:main},~\([\lb{\maskmat}, \ub{\maskmat}], \sumcoaliws\) and~\([\lb{\maskval}, \ub{\maskval}]\) represent batches of feature mask bounds~\([\lb{\mask}, \ub{\mask}]\),~\(\sumcoaliw{\branch}\) values, and value bounds~\([\lb{\val}, \ub{\val}]\), respectively.

\paragraph{Computing~\(\sumcoaliw{\branch}\).}
\Cref{theory:sumcoaliw} provides an efficient way to compute~\(\sumcoaliw{\branch}\).
It also gives rise to a recursive formula for~\(\sumcoaliw{\branch}\) that further accelerates the computation and lends itself well to batch-processing.

\NewDocumentCommand{\CorollaryCoaliWsRecursive}{}{
  Consider~\(\branch \in \branches[t]\) defined by~\(\includeset{}\) and~\(\excludeset{}\) with~\(r = |\includeset{}|\) and~\(s = |\includeset{}| + |\excludeset{}|\).
  For~\({\branch}'\) and~\({\branch}''\) being the branches derived from~\(\branch\) by including, respectively, excluding some feature, we have
  \begin{equation*}
    \sumcoaliw{\branchone} = \frac{r + 1}{s + 2} \sumcoaliw{\branch}
    \qquad
    \sumcoaliw{\branchtwo} = \frac{s + 1 - r}{s + 2} \sumcoaliw{\branch}.
  \end{equation*}
}
\begin{corollary}[Recursive Computation of~\(\sumcoaliw{\branch}\)]\label{theory:sum-coaliws-recursive}
    \CorollaryCoaliWsRecursive%
\end{corollary}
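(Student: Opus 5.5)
The corollary follows by applying \cref{theory:sumcoaliw} to the parent branch and to each child, then dividing. For the parent~\(\branch\), defined by~\(\includeset{}\) and~\(\excludeset{}\) with~\(r = |\includeset{}|\) and~\(s = |\includeset{}| + |\excludeset{}|\), \cref{theory:sumcoaliw} gives~\(\sumcoaliw{\branch} = \frac{1}{s+1}\binom{s}{r}^{-1} = \frac{r!\,(s-r)!}{(s+1)!}\).

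Next I would read off the parameters of the two children from the definition of the split, using that the split feature~\(j\) lies outside~\(\includeset{} \cup \excludeset{}\).
\begin{itemize}
  \item \(\branchone\) has included set~\(\includeset{} \cup \{j\}\) and excluded set~\(\excludeset{}\). Since~\(j \notin \includeset{}\), its parameters are~\(r' = r + 1\) and~\(s' = s + 1\).
  \item \(\branchtwo\) has included set~\(\includeset{}\) and excluded set~\(\excludeset{} \cup \{j\}\). Since~\(j \notin \excludeset{}\), its parameters are~\(r'' = r\) and~\(s'' = s + 1\).
\end{itemize}
Applying \cref{theory:sumcoaliw} to each child, which is legitimate because both are branches of~\(\branches[t+1]\), gives
\begin{equation*}
  \sumcoaliw{\branchone} = \frac{(r+1)!\,(s-r)!}{(s+2)!},
  \qquad
  \sumcoaliw{\branchtwo} = \frac{r!\,(s+1-r)!}{(s+2)!}.
\end{equation*}

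Dividing each by~\(\sumcoaliw{\branch}\) and cancelling factorials gives the ratios:
\begin{equation*}
  \frac{\sumcoaliw{\branchone}}{\sumcoaliw{\branch}} = \frac{(r+1)!}{r!}\cdot\frac{(s+1)!}{(s+2)!} = \frac{r+1}{s+2},
  \qquad
  \frac{\sumcoaliw{\branchtwo}}{\sumcoaliw{\branch}} = \frac{(s+1-r)!}{(s-r)!}\cdot\frac{(s+1)!}{(s+2)!} = \frac{s+1-r}{s+2}.
\end{equation*}
The division is valid because~\(\sumcoaliw{\branch} > 0\). As a consistency check, the two ratios sum to~\(1\), which matches the fact that~\(\branchone\) and~\(\branchtwo\) partition~\(\branch\).

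The main point needing care is the parameter bookkeeping, namely that~\(j\) is genuinely new to~\(\includeset{} \cup \excludeset{}\). This is guaranteed by the splitting rule, so no step is a real obstacle.
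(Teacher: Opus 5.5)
Your proposal is correct and is essentially the paper's proof: both apply the closed form of \cref{theory:sumcoaliw} to the parent and to the two children, whose parameters are $(r+1, s+1)$ and $(r, s+1)$, and then compare. The paper does the cancellation with the binomial identities $\binom{s+1}{r+1} = \frac{s+1}{r+1}\binom{s}{r}$ and $\binom{s+1}{r} = \frac{s+1}{s+1-r}\binom{s}{r}$, which is exactly your factorial cancellation written differently.
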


\emph{Proof Sketch.} \cref{theory:sum-coaliws-recursive} directly follows from \cref{theory:sumcoaliw} by applying two binomial identities.

\paragraph{Simultaneously Bounding~\(\shapval[1], \ldots, \shapval[n]\).}
\Cref{sec:method} describes computing bounds on~\(\shapval[i]\) for a fixed~\(i \in \upto{n}\).
This procedure can be executed in parallel for several features, e.g.,~\(i_1, i_2 \in \upto{n}\).
However, the branches computed for~\(i_1\) and~\(i_2\) overlap significantly: all branches that exclude both~\(i_1\) and~\(i_2\) are identical in the parallel runs.
We can avoid this duplication of effort by partitioning the entire set~\(\powerset{\upto{n}}\) instead of~\(\otherfeatures\), and computing bounds on~\(\shapval[i]\) for every~\(i \in \upto{n}\) by selecting the branches that contain, respectively, exclude the feature~\(i\).
In this case, we apply bound propagation to bound the \emph{value function}~\(\val(S)\) instead of the contribution~\(\contrib(S)\) for each branch.
Let~\(t \in \Nats\), let~\(\branches[t]\) be a partition of~\(\powerset{\upto{n}}\), and, for~\(\branch \in \branches[t]\), let~\({\lb{\val}}_{\branch} \leq \val(S) \leq {\ub{\val}}_{\branch}\),~\(\forall S \in \branch\) be the value bounds computed by bound propagation.
Using that~\(\contrib(S) = \val(S \cup \{i\}) - \val(S)\) and carefully accounting for~\(\sumcoaliw{\branch}\) following \cref{theory:sum-coaliws-recursive}, we obtain the following result by following similar steps as for proving \cref{theory:soundness}.
%
\NewDocumentCommand{\PropositionMultiSHAPBounds}{}{%
    Let~\(\branches[t]\) be a partition of~\(\powerset{\upto{n}}\).
    We have
    \begin{align*}
        \lb{\boldsymbol{\varphi}}_i^{(t)}\!
        &:=\! \sum_{\mathclap{\branch \in \branches[t]_{+i}}} \sumcoaliw{\branch}^{-} \lb{\val}_{\branch} 
        + \sum_{\mathclap{\branch \in \branches[t]_{\pm i}}} \sumcoaliw{\branch} (\lb{\val}_{\branch}\:\!\!-\:\!\!\ub{\val}_{\branch}) 
        - \sum_{\mathclap{\branch \in \branches[t]_{-i}}} \sumcoaliw{\branch}^{+} \ub{\val}_{\branch}
        \leq \shapval \\
        \ub{\boldsymbol{\varphi}}_i^{(t)}\!
        &:=\! \sum_{\mathclap{\branch \in \branches[t]_{+i}}} \sumcoaliw{\branch}^{-} \ub{\val}_{\branch} 
        + \sum_{\mathclap{\branch \in \branches[t]_{\pm i}}} \sumcoaliw{\branch} (\ub{\val}_{\branch}\:\!\!-\:\!\!\lb{\val}_{\branch}) 
        - \sum_{\mathclap{\branch \in \branches[t]_{-i}}} \sumcoaliw{\branch}^{+} \lb{\val}_{\branch}
        \geq \shapval,
    \end{align*}
    where~\(\branches[t]_{+i} \subseteq \branches[t]\) contains all~\(\branch \in \branches[t]\) with~\(i \in \includeset{}\),~\(\branches[t]_{-i} \subseteq \branches[t]\) contains all~\(\branch \in \branches[t]\) with~\(i \in \excludeset\),~\(\branches[t]_{\pm i} \subseteq \branches[t]\) contains the remaining branches,~\(\sumcoaliw{\branch}^{-} := \sumcoaliw{\branch}(s+1)/r \), and~\(\sumcoaliw{\branch}^{+} := \sumcoaliw{\branch}(s+1)/(s - r)\) with~\(r\) and~\(s\) as in \cref{theory:sumcoaliw}. 
}
\begin{proposition}\label{theory:multi-shap-bounds}
    \PropositionMultiSHAPBounds%
\end{proposition}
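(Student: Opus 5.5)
The plan is to rewrite \(\shapval\) as a signed weighted sum of values \(\val(S)\) over all \(S \in \powerset{\upto{n}}\). Then, branch by branch, I would replace each \(\val(S)\) by \(\lb{\val}_{\branch}\) or \(\ub{\val}_{\branch}\) according to the sign of its coefficient, and evaluate the summed coefficients with \cref{theory:sumcoaliw}.

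\emph{Step 1 (signed form).} From \cref{eqn:shapley},
\(\shapval = \sum_{S \in \powerset{\upto{n}},\, i \in S} \coaliw(|S|-1)\val(S) - \sum_{S \in \powerset{\upto{n}},\, i \notin S} \coaliw(|S|)\val(S)\),
using the bijection \(S \mapsto S \cup \{i\}\) from \(\otherfeatures\) onto the sets containing \(i\). Since \(\branches[t]\) partitions \(\powerset{\upto{n}}\), this sum splits over the branches. I would treat the three classes \(\branches[t]_{+i}\), \(\branches[t]_{-i}\) and \(\branches[t]_{\pm i}\) separately, writing \(r = |\includeset{}|\) and \(s = |\includeset{}| + |\excludeset{}|\) for the branch at hand.

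\emph{Step 2 (the three branch types).}
\begin{itemize}
\item \textbf{Case \(\branch \in \branches[t]_{+i}\).} Every \(S \in \branch\) contains \(i\), so all coefficients \(\coaliw(|S|-1)\) are nonnegative. Hence the contribution of \(\branch\) lies between \((\sum_{S\in\branch}\coaliw(|S|-1))\lb{\val}_{\branch}\) and the same sum times \(\ub{\val}_{\branch}\). The map \(S \mapsto S\setminus\{i\}\) sends \(\branch\) bijectively onto the branch of \(\otherfeatures\) with included set \(\includeset{}\setminus\{i\}\) and excluded set \(\excludeset{}\), which has parameters \((r-1, s-1)\). By \cref{theory:sumcoaliw} the coefficient sum is therefore \(s^{-1}{s-1\choose r-1}^{-1}\). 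A short identity check shows \(s{s-1 \choose r-1} = r{s \choose r}\), so this equals \(\sumcoaliw{\branch}(s+1)/r = \sumcoaliw{\branch}^{-}\).
\item \textbf{Case \(\branch \in \branches[t]_{-i}\).} Here every coefficient is \(-\coaliw(|S|)\), so the lower bound uses \(\ub{\val}_{\branch}\) and the upper bound uses \(\lb{\val}_{\branch}\). The branch is a branch of \(\otherfeatures\) once \(i\) is dropped from \(\excludeset{}\), giving parameters \((r, s-1)\). By \cref{theory:sumcoaliw} the weight is \(s^{-1}{s-1\choose r}^{-1}\). Using \(s{s-1\choose r} = (s-r){s\choose r}\), this equals \(\sumcoaliw{\branch}^{+}\).
\item \textbf{Case \(\branch \in \branches[t]_{\pm i}\).} The branch contains both \(S\) and \(S\cup\{i\}\) for every \(S\) in \(\branch_0 := \{S \in \branch \mid i \notin S\}\). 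So its contribution is \(\sum_{S\in\branch_0}\coaliw(|S|)(\val(S\cup\{i\})-\val(S))\). Since both values lie in \([\lb{\val}_{\branch},\ub{\val}_{\branch}]\), each difference lies in \([\lb{\val}_{\branch}-\ub{\val}_{\branch},\, \ub{\val}_{\branch}-\lb{\val}_{\branch}]\). Moreover, \(\branch_0\) is exactly the branch of \(\otherfeatures\) with the same \(\includeset{}, \excludeset{}\), so its weight is \(\sumcoaliw{\branch}\) by \cref{theory:sumcoaliw}.
\end{itemize}

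\emph{Step 3.} Summing the three cases over the partition yields \(\lb{\boldsymbol{\varphi}}_i^{(t)} \le \shapval \le \ub{\boldsymbol{\varphi}}_i^{(t)}\), in the same way as in \cref{theory:soundness}.

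The main obstacle is the bookkeeping in Step 2. One must correctly identify the corresponding \(\otherfeatures\)-branch and its \((r,s)\) parameters, and check that \(\sumcoaliw{\branch}\), as written in the statement, refers to the formula of \cref{theory:sumcoaliw} applied to \(\branch\)'s own \(r,s\), with \(i\) counted. The closed form in \cref{theory:sumcoaliw} does not depend on \(n\), which is what makes this identification legitimate. After that, the two binomial identities are routine. The degenerate cases \(r=0\) in \(\branches[t]_{+i}\) and \(s=r\) in \(\branches[t]_{-i}\) cannot occur, since \(i\) itself lies in \(\includeset{}\), respectively \(\excludeset{}\).
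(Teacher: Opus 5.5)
Your proof is correct and follows essentially the same route as the paper. You rewrite \(\shapval\) as a signed sum of values over \(\powerset{\upto{n}}\), identify each branch in \(\branches[t]_{+i}\), \(\branches[t]_{-i}\), \(\branches[t]_{\pm i}\) with the corresponding branch of \(\otherfeatures\) (with parameters \((r-1,s-1)\), \((r,s-1)\), \((r,s)\), respectively), and evaluate its weight with \cref{theory:sumcoaliw}. The differences are cosmetic: you bound each \(\branches[t]_{\pm i}\) branch through the paired differences \(\val(S\cup\{i\})-\val(S)\) instead of bounding the sums \((\ast)\) and \((\ddagger)\) separately, which gives the identical bound, and you check the weight conversions with direct binomial identities rather than invoking \cref{theory:sum-coaliws-recursive}.
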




\section{Splitting Strategies}\label{sec:heuristics}
Refining partitions as described in \cref{sec:method} requires selecting \begin{enumerate*}[\itshape(i)] \item branches to split and \item features to split on.\end{enumerate*}
This section briefly introduces several strategies for performing these selections, which are discussed in detail in \cref{appendix:additional-splitting-strategies} and compared empirically in \cref{appendix:ablations}.

\paragraph{Selecting Branches to Split (\AlgoSelect{}).}
\newcommand{\branchdiam}{d_{\branch}}
The partitioning strategy in \cref{sec:method} requires selecting a branch~\(\branch[k]\) to split.
Since we process branches in batches, we instead select a batch of~\(b \in \Nats\) branches to split simultaneously.
We present two strategies for selecting branches: \SelectMaxDiam{} and \SelectMinDiam{} select the~\(b\) branches where~\(\branchdiam := \sumcoaliw{\branch}(\ub{\contrib}_{\branch} - \lb{\contrib}_{\branch})\) is the largest, respectively, the smallest.

\paragraph{Selecting Features to Split on (\AlgoSplit{}).}
The branch-and-bound literature provides numerous strategies for selecting features to split, which can also be applied in \OurAlgo{}.
We present a baseline strategy, \SplitInOrder{}, which selects features in a predetermined order, and adapt strong branching~\citep{applegate1995strongbranching}, smart branching~\citep{bunel2020branch,boetius2025solving}, and smears~\citep{kearfott1996smears,wang2018efficient} to our setting.
While strong branching and smart branching simulate splitting every available feature, smears leverages bounds of the contribution's gradient to determine which feature to split on.
Concretely, for the branch~\(\branch\), the smears strategy selects~\(j^\ast := \argmax_{j} \ub{|\contribgrad|}_j\), where~\(\ub{|\contribgrad|} \geq |\contribgrad(S)|\) is computed using \IBP{}.
\Cref{appendix:additional-splitting-strategies} provides further details.


\begin{figure*}
    \centering
    \begin{subfigure}[t]{.2\linewidth}
        \centering
        \includegraphics[width=.95\linewidth]{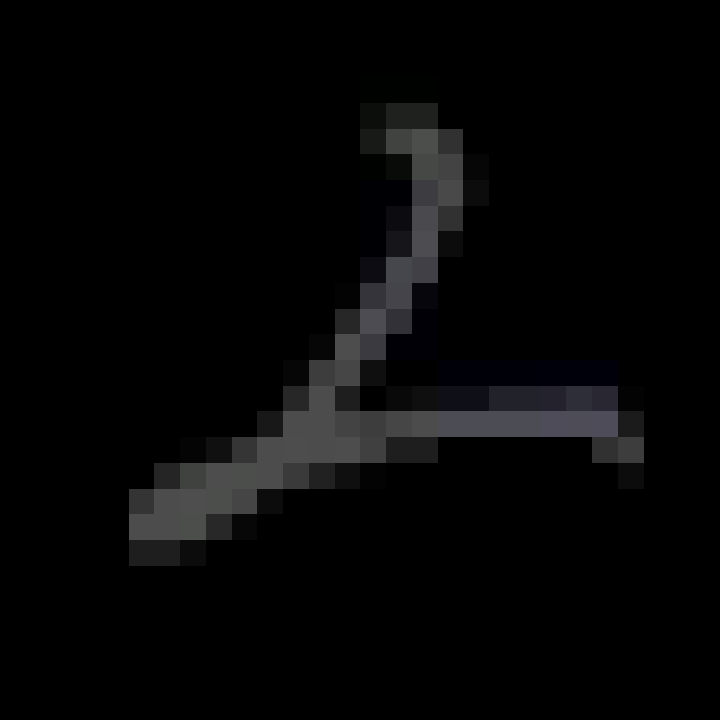}
        \caption{$t=1$}\label{fig:mnist-start}
    \end{subfigure}%
    \begin{subfigure}[t]{.2\linewidth}
        \centering
        \includegraphics[width=.95\linewidth]{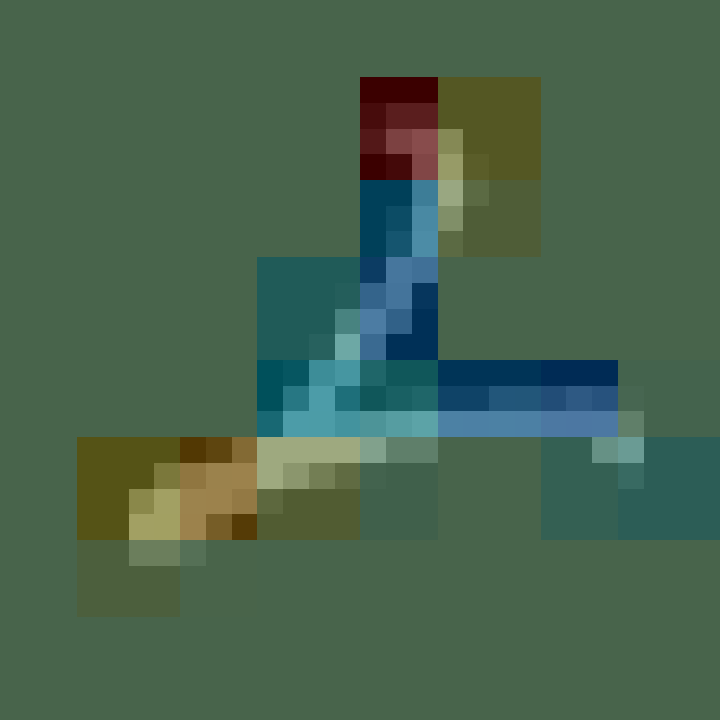}
        \caption{$t=121$}\label{fig:mnist-something-visible}  
    \end{subfigure}%
    \begin{subfigure}[t]{.2\linewidth}
        \centering
        \includegraphics[width=.95\linewidth]{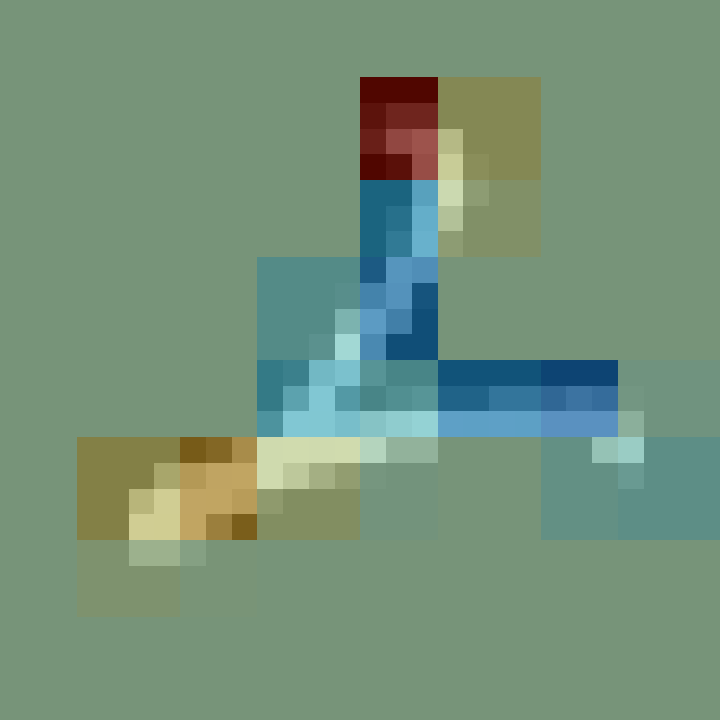}
        \caption{$t=278$ ($10\%$ HR)}\label{fig:mnist-bounds-tight}  
    \end{subfigure}%
    \begin{subfigure}[t]{.2\linewidth}
        \centering
        \includegraphics[width=.95\linewidth]{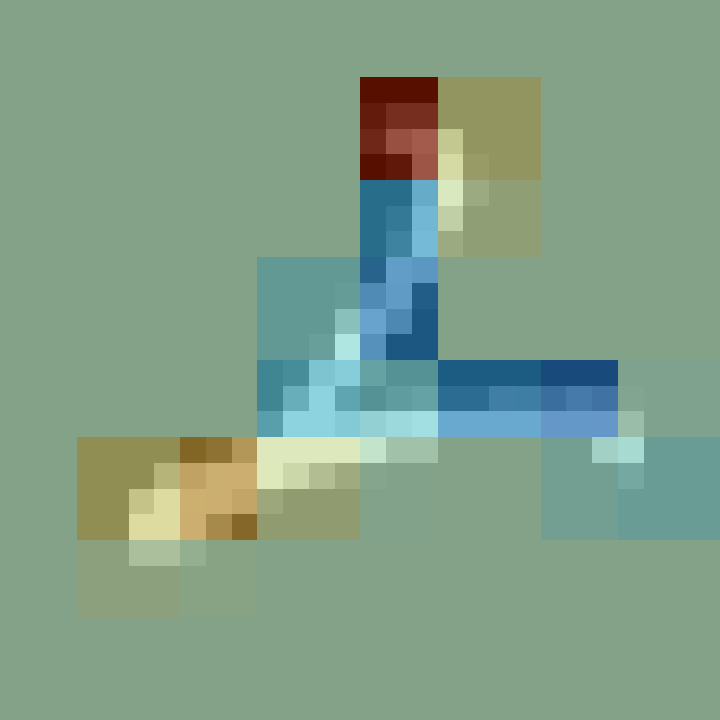}
        \caption{$t=512$ (Exact)}\label{fig:mnist-bounds-exact}  
    \end{subfigure}%
    \begin{subfigure}[t]{.2\linewidth}
        \centering
        \begin{tikzpicture}
            \begin{axis}[
                width=1.18\linewidth, height=1.18\linewidth,
                xlabel={half-range}, ylabel={midpoint},
                xlabel style={yshift=0pt, font=\footnotesize, inner sep=0pt},
                ylabel style={yshift=-8pt, xshift=3pt, font=\footnotesize, inner sep=0pt},
                tick label style={font=\footnotesize},
                enlargelimits=false, axis on top
            ]
                \addplot graphics [
                    xmin=0.0, xmax=1.9677828550338745, ymin=-1.9677828550338745, ymax=1.9677828550338745,
                ] {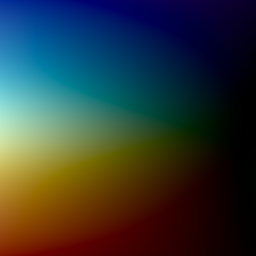};
            \end{axis}
        \end{tikzpicture}
        \caption{Bounds Colourmap}
    \end{subfigure}
    \caption{\textbf{MNIST \OurAlgo{} Results.}
        SHAP value bounds for the class \enquote{4} score of an MNIST CNN on a test set image of a \enquote{2}. The SHAP values are computed for an \(8\times8\) grid of superpixels. The SHAP bounds~\(\lb{\shapval}, \ub{\shapval}\) are visualised by their midpoints~\((\ub{\shapval} + \lb{\shapval}) / 2\) and half-ranges (HR)~\((\ub{\shapval} - \lb{\shapval}) / 2\) determining colour hue and lightness, respectively.
        \textbf{Lighter colour means tighter bounds in this figure.}
        In~\subref{fig:mnist-bounds-tight}, the largest half-range is less than~$10\%$ of the network output for \enquote{4} (\enquote{$10\%$ HR}) and in~\subref{fig:mnist-bounds-exact}, the bounds provide the exact SHAP value.
        An animated version of this figure is available in the supplementary material.
    }\label{fig:mnist-bounds}
\end{figure*}

\begin{figure}
    \centering
    \tikzexternalenable%
    \begin{tikzpicture}
        \begin{axis}[
            width=\linewidth, height=4.5cm,
            xlabel={Iteration~$t$}, ylabel={SHAP Value \(\shapval\)},
            xlabel style={yshift=-2pt},
            ylabel style={yshift=-2pt},
            ymin=-2.75, ymax=2.75,
            xmin=0, xmax=570,
            axis on top,
            axis background/.style={fill=black!65!white},
            every axis plot/.append style={
                mesh, mark=none, line width=3pt, 
                point meta=explicit symbolic,
            },
            mesh/color input=explicit,
        ]
          \addplot[] table [col sep=tab,x=Iteration,y=ub_0, meta=ub_0_color] {data/mnist_8x8_selected_features_bounds.tsv};
          \addplot[] table [col sep=tab,x=Iteration,y=lb_0, meta=lb_0_color] {data/mnist_8x8_selected_features_bounds.tsv};
          
          \addplot[] table [col sep=tab,x=Iteration,y=ub_12, meta=ub_12_color] {data/mnist_8x8_selected_features_bounds.tsv};
          \addplot[] table [col sep=tab,x=Iteration,y=lb_12, meta=lb_12_color] {data/mnist_8x8_selected_features_bounds.tsv};
          
          \addplot[] table [col sep=tab,x=Iteration,y=ub_38, meta=ub_38_color] {data/mnist_8x8_selected_features_bounds.tsv};
          \addplot[] table [col sep=tab,x=Iteration,y=lb_38, meta=lb_38_color] {data/mnist_8x8_selected_features_bounds.tsv};
          
          \node [pin={[font=\footnotesize,draw=black,fill=white,rectangle,pin distance=.2cm,pin edge={white, line width=1pt}]340:$i=38$}]  at (380,1.5) {};
          \node [pin={[font=\footnotesize,draw=black,fill=white,rectangle,pin distance=.2cm,pin edge={white, line width=1pt}]300:$i=0$}]  at (420,0.15) {};
          \node [pin={[font=\footnotesize,draw=black,fill=white,rectangle,pin distance=.2cm,pin edge={white, line width=1pt}]150:$i=12$}]  at (440,-2.1) {};

          \node [pin={[font=\footnotesize,draw=black,fill=white,rectangle,pin distance=.075cm,pin edge={white, line width=1pt}]265:$10\%$ HR}]  at (277,0.0) {};  
          \draw[black, line width=1.5pt, |-|] (277,-0.15) -- (277,0.15);
          
          \addplot graphics [
              xmin=530, xmax=5000, ymin=-1.9677828550338745, ymax=1.9677828550338745,
          ] {figures/mnist_bounds_zero_baseline/replay_colormap};
        \end{axis}
    \end{tikzpicture}
    \tikzexternaldisable%
    \caption{%
        \OurAlgo{} bounds for three features from \cref{fig:mnist-bounds}.
    }\label{fig:mnist-bounds-three-features}
\end{figure}

\section{Experiments}\label{sec:experiments}
In this section, we investigate the practical scalability of \OurAlgo{}, comparing it to state-of-the-art approaches for exactly computing and estimating SHAP values.
Our theoretical results in \cref{sec:method} establish that \OurAlgo{} can compute tight bounds on SHAP values, potentially without exploring all feature sets.
In the following, we 
\begin{enumerate}[\itshape(i)]
    \item investigate the practical convergence and tightness of \OurAlgo{}'s bounds,
    \item compare \OurAlgo{} to the state-of-the-art approach \ExactSHAP{}~\citep{shaplib} for computing exact SHAP values,
    \item investigate the relationship between \OurAlgo{} and SHAP estimators,
    \item compare SHAP estimators using the \enquote{ground-truth} exact SHAP values computed by \OurAlgo{} on larger search spaces than were previously feasible, and
    \item demonstrate the versatility of \OurAlgo{} by applying it to different neural network architectures.
\end{enumerate}
\Cref{appendix:datasets-networks-details} provides details on the neural networks and datasets used in this section.
\Cref{appendix:additional-experiments} contains additional experiments, such as a comparison of the splitting strategies introduced in \cref{sec:heuristics}.
In summary, we find that 
\begin{enumerate}[\itshape(i)]
    \item the bounds computed by \OurAlgo{} provide helpful insights long before \OurAlgo{} terminates;
    \item \OurAlgo{} scales to significantly larger search spaces than \ExactSHAP{}, both for computing exact SHAP values and, especially, for computing tight bounds;
    \item \OurAlgo{} complements SHAP estimators when high accuracy and precision are required;
    \item \OurAlgo{} provides novel insights into SHAP estimators by enabling evaluation on neural networks for larger search spaces; and
    \item \OurAlgo{} can compute tight bounds for a variety of network architectures.
\end{enumerate}

\paragraph{Experimental Setup.}
We implement \OurAlgo{} in Python using JAX~\citep{jax2018github}, leveraging a batched priority queue~\citep{chen2021gpu-priority-queue} for storing branches, and rely on the SHAP estimator implementations of~\citet{shaplib} and~\citet{witter2025regression}.
Following our ablation experiments in \cref{appendix:ablations}, we use the \CROWNIBP{}~\citep{zhang2020crown-ibp} \LBP{} algorithm for computing bounds, \SelectMaxDiam{} for branch selection, and \SplitSmears{} for selecting splits.
We run our experiments on an L40S NVIDIA GPU with 48GB of GPU memory deployed on an Ubuntu 24.04 machine.
Our code is available at~\url{https://github.com/sen-uni-kn/verishap/}.

\subsection{Insights into SHAP Values from Bounds}\label{sec:experiments-mnist}
For studying the convergence and tightness of the \OurAlgo{} bounds, we apply it to an \MNISTDataset{}~\citep{lecun1998mnist} convolutional neural network using the zero-baseline SHAP value function~\citep{sundararajan20many}.
\Cref{appendix:experiments-additional-image-datasets} provides further results on additional datasets and value functions.
To manage the input dimensionality, we compute SHAP values for an~\(8 \times 8\) grid of evenly spaced superpixels.
While the overall number of coalitions~\(2^{64} > 10^{19}\) is still unmanageably large, \OurAlgo{} is able to compute the exact SHAP values within~\(34\)s.
Furthermore, the bounds computed by \OurAlgo{} prove insightful many iterations before the exact SHAP values are computed, as visualised in \cref{fig:mnist-bounds,fig:mnist-bounds-three-features}.
While the initial bounds are too loose to be insightful, after~\(120\) iterations (\(25\)s), the bounds already reveal attribution patterns.
In subsequent iterations, the bounds tighten further until the exact value is computed after~\(512\) iterations (\(34\)s).
We found that \OurAlgo{} can compute the exact SHAP values in this setting because \CROWNIBP{} detects that the value function is constant across many coalitions.
As a result of \cref{theory:linear-shap}, this enables \OurAlgo{} to compute exact SHAP values without enumerating all coalitions.

\subsection{Comparison to \ExactSHAP{}}\label{sec:experiments-compare-to-exactshap}
We now compare \OurAlgo{} to the existing state-of-the-art algorithm \ExactSHAP{} for computing exact SHAP values.
\ExactSHAP{} is an optimised algorithm for enumerating all coalitions implemented in the \texttt{shap} library~\citep{shaplib}.
We train neural networks on a variety of widely used tabular datasets from the UCI ML repository~\citep{UCIMLRepo} and compute SHAP values for the first~\(10\) test set samples for each network, evaluating the marginal value function using a background dataset of~\(100\) samples.

\Cref{tab:ours-vs-exactshap} provides the median runtimes of \ExactSHAP{} and \OurAlgo{}.
\Cref{appendix:experiments-exactshap-additional} provides additional statistics.
While \ExactSHAP{} computes the exact SHAP values faster than \OurAlgo{} for up to~$|\powerset{\upto{n}}| = 2^{20} \simeq 10^{6}$ coalitions, \ExactSHAP{} exhausts the GPU memory for larger search spaces.
In contrast, \OurAlgo{} is able to compute exact SHAP values for instances with up to~$2^{25}\simeq 3\cdot10^7$ coalitions and provides tight bounds on SHAP values for up to~$2^{60}\simeq 10^{18}$ coalitions.
As \cref{tab:ours-vs-exactshap} shows, the runtime of \OurAlgo{} is not determined solely by the search space, but varies by dataset.
Although we also encountered three datasets (\HepatitisCDataset{}, \LungCancerDataset{}, \OnlineNewsDataset{}) with fewer than~$60$ features for which \OurAlgo{} was unable to compute tight bounds, \OurAlgo{} provides tight SHAP bounds for the majority of datasets.

\begin{table}
  \centering
  \caption{%
    \textbf{\OurAlgo{} vs. \ExactSHAP{}.}
    We report the median runtime required by \ExactSHAP{} and \OurAlgo{} for computing (\enquote{Exact}) or tightly bounding (\enquote{$10\%$ HR}, \enquote{$1\%$ HR}) the marginal SHAP values on tabular datasets of dimension~$n$, where \enquote{$p\%$ HR} denotes the half-range~$(\ub{\shapval} - \lb{\shapval}) / 2$ of the SHAP bounds~\(\lb{\shapval}, \ub{\shapval}\) being at most~$p\%$ of the network output to attribute, and \enquote{--} denotes exhausting the GPU memory or reaching the timeout of~\(600\)s.
  }\label{tab:ours-vs-exactshap}%
  \begin{tabular}{@{}l@{\hspace{.15cm}}l@{\hspace{.0cm}}R{.6cm}@{\hspace{.75cm}}R{.7cm}R{.7cm}R{.7cm}@{}}
        &                  & \llap{\textbf{\textsc{Exact}}\!\!} & \multicolumn{3}{l}{\hspace*{-.4cm}\textbf{\OurAlgo{} (Ours)}} \\
    $|\powerset{\upto{n}}|$ & \textbf{Dataset} & \llap{\textbf{\textsc{SHAP}}\,\!\!}  & \llap{$10$\% HR} & \llap{$1$\% HR} & \llap{Exact} \\
    \midrule
    $2^{16}{\scriptstyle>\!10^4}$ & \ObesityDataset{}       & $4$s & $ 18$s & $ 18$s & $ 18$s \\
    $2^{20}{\scriptstyle>\!10^6}$ & \GermanDataset{}        & $9$s & $ 16$s & $ 19$s & $ 20$s \\
    $2^{22}{\scriptstyle>\!10^6}$ & \MushroomDataset{}      &   -- & $ 17$s & $ 20$s & $ 25$s \\
    $2^{23}{\scriptstyle>\!10^6}$ & \DefaultDataset{}       &   -- & $127$s & $132$s & $133$s \\
    $2^{25}{\scriptstyle>\!10^7}$ & \AutomobileDataset{}    &   -- & $ 81$s & $213$s & $316$s \\
    $2^{27}{\scriptstyle>\!10^8}$ & \SteelDataset{}         &   -- & $ 37$s & $322$s &    --  \\
    $2^{30}{\scriptstyle>\!10^9}$ & \BreastCancerDataset{}  &   -- & $ 49$s & $322$s &    --  \\
    $2^{38}{\scriptstyle>\!10^{11}}$ & \AnnealingDataset{}     &   -- & $269$s &    --  &    --  \\
    $2^{60}{\scriptstyle>\!10^{18}}$ & \SonarDataset{}         &   -- & $ 13$s &    --  &    --
  \end{tabular}%
  \vspace*{-.1cm}
\end{table}

\subsection{Comparison to \KernelSHAP{} and \TreeMSR{}}\label{sec:experiments-compare-to-kernelshap}

\Cref{sec:experiments-compare-to-exactshap} shows that \OurAlgo{} outperforms the state-of-the-art approach for computing exact SHAP values. 
Since it is more common to statistically estimate SHAP values than to compute them exactly, we now compare \OurAlgo{} to the popular \KernelSHAP{}~\citep{lundberg2017unified} and the recent \TreeMSR{}~\citep{witter2025regression} estimators on the three highest-dimensional datasets from \cref{tab:ours-vs-exactshap} for which we obtained exact SHAP values.

\Cref{fig:ours-vs-estimators} presents our results.
The figure shows that both estimators cannot achieve the same accuracy as \OurAlgo{} within the memory constraints of our hardware.
While an estimator run requires significantly less time than \OurAlgo{}, it does not provide an indication of the estimate variance. 
When repeating the estimation~\(100\) times, as in \cref{fig:ours-vs-estimators}, \KernelSHAP{} requires 346s, 123s, and 351s on \MushroomDataset{}, \DefaultDataset{}, and \AutomobileDataset{}, respectively, at the largest feasible sample sizes.
Similarly, \TreeMSR{} requires 895s, 132s, and 341s on \MushroomDataset{}, \DefaultDataset{}, and \AutomobileDataset{}, respectively.
For both estimators, this exceeds the overall runtime of \OurAlgo{} on \MushroomDataset{} and \AutomobileDataset{}.
Overall, \OurAlgo{} complements statistical estimators when high accuracy and precision are required.
\Cref{appendix:experiments-additional-estimators} contains equivalent results for additional statistical SHAP estimators.

\begin{figure}
  \centering
  \tikzexternalenable%
  \hspace*{-.1cm}
  \begin{tikzpicture}
    \begin{groupplot}[
        group style={
            group size=2 by 3,
            xlabels at=edge bottom,
            ylabels at=edge left,
            xticklabels at=edge bottom,
            yticklabels at=edge left,
            vertical sep=.4cm,
            horizontal sep=.2cm,
        },
        height=3.1cm, width=.55\linewidth,
        xlabel={Sample Size},
        xticklabel style={font=\footnotesize},
        yticklabel style={font=\footnotesize},
        label style={inner sep=0pt, font=\footnotesize, align=center},
        title style={align=center, inner sep=0pt},
        ticklabel style={inner sep=2pt},
    ]
      \nextgroupplot[
        xmode=log, ymode=log, 
        title={\KernelSHAP{}}, 
        ylabel={\clap{\MushroomDataset{}}\\ Max. Error\\[-3pt]},
        ylabel style={yshift=-.1cm, xshift=.15cm},
        xmin=1000,xmax=100000000, ymin=0.0002, ymax=0.1,
        xtickten={4,6},
        minor xtick={1000,2000,3000,4000,5000,6000,7000,8000,9000,10000,20000,30000,40000,50000,60000,70000,80000,90000,100000,200000,300000,400000,500000,600000,700000,800000,900000,1000000,2000000,3000000,4000000,5000000,6000000,7000000,8000000,9000000,10000000},
      ]
      \addplot[name path=Zero, forget plot, domain=100:100000000, samples=2] {0.000000001};
      \addplot[name path=OursHigh, FirstSHAPLine, forget plot, domain=100:100000000, samples=2] {0.009887456893920898};  
      \addplot[name path=OursLow, FirstSHAPLine, forget plot, domain=100:100000000, samples=2] {0.0009784698486328125};  
      \addplot[FirstSHAPArea, opacity=0.3, forget plot] fill between [of=Zero and OursHigh];
      \addplot[FirstSHAPArea, opacity=0.3] fill between [of=Zero and OursLow];
      
      \addplot[name path=KernelSHAPMin, SecondSHAPLine, forget plot] table [col sep=comma,x=num_samples,y=KernelSHAP_min_error] {data/bab_vs_estimators/mushroom-mlp-8x1_linf_0.csv};
      \addplot[name path=KernelSHAPMax, SecondSHAPLine, forget plot] table [col sep=comma,x=num_samples,y=KernelSHAP_max_error] {data/bab_vs_estimators/mushroom-mlp-8x1_linf_0.csv};
      \addplot[SecondSHAPArea] fill between [of=KernelSHAPMin and KernelSHAPMax];
      
      
      \node[FirstTimeMarker] at (axis cs:12500000,0.009887456893920898) {21s};
      \node[FirstTimeMarker] at (axis cs:12500000,0.0009784698486328125) {23s};
      
      \nextgroupplot[
        xmode=log, ymode=log, 
        title={\TreeMSR{}}, 
        ylabel style={yshift=-.1cm, xshift=-0.0cm},
        xmin=1000,xmax=100000000, ymin=0.0002, ymax=0.1,
        xtickten={4,6},
        minor xtick={1000,2000,3000,4000,5000,6000,7000,8000,9000,10000,20000,30000,40000,50000,60000,70000,80000,90000,100000,200000,300000,400000,500000,600000,700000,800000,900000,1000000,2000000,3000000,4000000,5000000,6000000,7000000,8000000,9000000,10000000},
      ]
      \addplot[name path=Zero, forget plot, domain=100:1000000000, samples=2] {0.000000001};
      \addplot[name path=OursHigh, FirstSHAPLine, forget plot, domain=100:1000000000, samples=2] {0.009887456893920898};  
      \addplot[name path=OursLow, FirstSHAPLine, forget plot, domain=100:1000000000, samples=2] {0.0009784698486328125};  
      \addplot[FirstSHAPArea, opacity=0.3, forget plot] fill between [of=Zero and OursHigh];
      \addplot[FirstSHAPArea, opacity=0.3] fill between [of=Zero and OursLow];
      
      \addplot[name path=TreeMSRMin, ThirdSHAPLine, forget plot] table [col sep=comma,x=num_samples,y=TreeMSR_min_error] {data/bab_vs_estimators/mushroom-mlp-8x1_linf_0.csv};
      \addplot[name path=TreeMSRMax, ThirdSHAPLine, forget plot] table [col sep=comma,x=num_samples,y=TreeMSR_max_error] {data/bab_vs_estimators/mushroom-mlp-8x1_linf_0.csv};
      \addplot[ThirdSHAPArea] fill between [of=TreeMSRMin and TreeMSRMax];
      
      \node[FirstTimeMarker] at (axis cs:12500000,0.009887456893920898) {21s};
      \node[FirstTimeMarker] at (axis cs:12500000,0.0009784698486328125) {23s};

      
      \nextgroupplot[
        xmode=log, ymode=log, 
        ylabel={\clap{\DefaultDataset{}}\\ Max. Error\\[-3pt]},
        xmin=1000,xmax=8000000, ymin=0.003, ymax=0.2,
        xtickten={4,6},
        minor xtick={1000,2000,3000,4000,5000,6000,7000,8000,9000,10000,20000,30000,40000,50000,60000,70000,80000,90000,100000,200000,300000,400000,500000,600000,700000,800000,900000,1000000,2000000,3000000,4000000,5000000,6000000,7000000,8000000,9000000,10000000},
      ]
      \addplot[name path=Zero, forget plot, domain=100:10000000, samples=2] {0.000000001};
      \addplot[name path=OursHigh, FirstSHAPLine, forget plot, domain=100:10000000, samples=2] {0.019828736782073975};  
      \addplot[name path=OursLow, FirstSHAPLine, forget plot, domain=100:10000000, samples=2] {0.006113827228546143};  
      \addplot[FirstSHAPArea, opacity=0.3, forget plot] fill between [of=Zero and OursHigh];
      \addplot[FirstSHAPArea, opacity=0.3] fill between [of=Zero and OursLow];

      \addplot[name path=KernelSHAPMin, SecondSHAPLine, forget plot] table [col sep=comma,x=num_samples,y=KernelSHAP_min_error] {data/bab_vs_estimators/default-mlp-64x3_linf_0.csv};
      \addplot[name path=KernelSHAPMax, SecondSHAPLine, forget plot] table [col sep=comma,x=num_samples,y=KernelSHAP_max_error] {data/bab_vs_estimators/default-mlp-64x3_linf_0.csv};
      \addplot[SecondSHAPArea] fill between [of=KernelSHAPMin and KernelSHAPMax];


      \node[FirstTimeMarker] at (axis cs:1300000,0.019828736782073975) {132s};
      \node[FirstTimeMarker] at (axis cs:1300000,0.006113827228546143) {133s};

      \nextgroupplot[
        xmode=log, ymode=log, 
        xmin=1000,xmax=8000000, ymin=0.003, ymax=0.2,
        xtickten={4,6},
        minor xtick={1000,2000,3000,4000,5000,6000,7000,8000,9000,10000,20000,30000,40000,50000,60000,70000,80000,90000,100000,200000,300000,400000,500000,600000,700000,800000,900000,1000000,2000000,3000000,4000000,5000000,6000000,7000000,8000000,9000000,10000000},
      ]
      \addplot[name path=Zero, forget plot, domain=100:10000000, samples=2] {0.000000001};
      \addplot[name path=OursHigh, FirstSHAPLine, forget plot, domain=100:10000000, samples=2] {0.019828736782073975};  
      \addplot[name path=OursLow, FirstSHAPLine, forget plot, domain=100:10000000, samples=2] {0.006113827228546143};  
      \addplot[FirstSHAPArea, opacity=0.3, forget plot] fill between [of=Zero and OursHigh];
      \addplot[FirstSHAPArea, opacity=0.3] fill between [of=Zero and OursLow];

      \addplot[name path=TreeMSRMin, ThirdSHAPLine, forget plot] table [col sep=comma,x=num_samples,y=TreeMSR_min_error] {data/bab_vs_estimators/default-mlp-64x3_linf_0.csv};
      \addplot[name path=TreeMSRMax, ThirdSHAPLine, forget plot] table [col sep=comma,x=num_samples,y=TreeMSR_max_error] {data/bab_vs_estimators/default-mlp-64x3_linf_0.csv};
      \addplot[ThirdSHAPArea] fill between [of=TreeMSRMin and TreeMSRMax];

      \node[FirstTimeMarker] at (axis cs:1300000,0.019828736782073975) {132s};
      \node[FirstTimeMarker] at (axis cs:1300000,0.006113827228546143) {133s};

      \nextgroupplot[
        xmode=log, ymode=log, 
        ylabel={\clap{\AutomobileDataset{}}\\ Max. Error\\[-3pt]},
        xmin=1000,xmax=40000000, ymin=0.00008, ymax=0.01,
        xtickten={4,6},
        minor xtick={1000,2000,3000,4000,5000,6000,7000,8000,9000,10000,20000,30000,40000,50000,60000,70000,80000,90000,100000,200000,300000,400000,500000,600000,700000,800000,900000,1000000,2000000,3000000,4000000,5000000,6000000,7000000,8000000,9000000,10000000},
      ]
      \addplot[name path=Zero, forget plot, domain=100:100000000, samples=2] {0.000000001};
      \addplot[name path=OursHigh, FirstSHAPLine, forget plot, domain=100:100000000, samples=2] {0.001021549105644226};  
      \addplot[name path=OursLow, FirstSHAPLine, forget plot, domain=100:100000000, samples=2] {0.00018627941608428955};  
      \addplot[FirstSHAPArea, opacity=0.3, forget plot] fill between [of=Zero and OursHigh];
      \addplot[FirstSHAPArea, opacity=0.3] fill between [of=Zero and OursLow];

      \addplot[name path=KernelSHAPMin, SecondSHAPLine, forget plot] table [col sep=comma,x=num_samples,y=KernelSHAP_min_error] {data/bab_vs_estimators/automobile-mlp-32x2_linf_0.csv};
      \addplot[name path=KernelSHAPMax, SecondSHAPLine, forget plot] table [col sep=comma,x=num_samples,y=KernelSHAP_max_error] {data/bab_vs_estimators/automobile-mlp-32x2_linf_0.csv};
      \addplot[SecondSHAPArea] fill between [of=KernelSHAPMin and KernelSHAPMax];


      \node[FirstTimeMarker] at (axis cs:4700000,0.001021549105644226) {304s};
      \node[FirstTimeMarker] at (axis cs:4700000,0.00018627941608428955) {313s};

      \nextgroupplot[
        xmode=log, ymode=log, 
        xmin=1000,xmax=40000000, ymin=0.00008, ymax=0.01,
        xtickten={4,6},
        minor xtick={1000,2000,3000,4000,5000,6000,7000,8000,9000,10000,20000,30000,40000,50000,60000,70000,80000,90000,100000,200000,300000,400000,500000,600000,700000,800000,900000,1000000,2000000,3000000,4000000,5000000,6000000,7000000,8000000,9000000,10000000},
      ]
      \addplot[name path=Zero, forget plot, domain=100:100000000, samples=2] {0.000000001};
      \addplot[name path=OursHigh, FirstSHAPLine, forget plot, domain=100:100000000, samples=2] {0.001021549105644226};  
      \addplot[name path=OursLow, FirstSHAPLine, forget plot, domain=100:100000000, samples=2] {0.00018627941608428955};  
      \addplot[FirstSHAPArea, opacity=0.3, forget plot] fill between [of=Zero and OursHigh];
      \addplot[FirstSHAPArea, opacity=0.3] fill between [of=Zero and OursLow];

      \addplot[name path=TreeMSRMin, ThirdSHAPLine, forget plot] table [col sep=comma,x=num_samples,y=TreeMSR_min_error] {data/bab_vs_estimators/automobile-mlp-32x2_linf_0.csv};
      \addplot[name path=TreeMSRMax, ThirdSHAPLine, forget plot] table [col sep=comma,x=num_samples,y=TreeMSR_max_error] {data/bab_vs_estimators/automobile-mlp-32x2_linf_0.csv};
      \addplot[ThirdSHAPArea] fill between [of=TreeMSRMin and TreeMSRMax];

      \node[FirstTimeMarker] at (axis cs:4700000,0.001021549105644226) {304s};
      \node[FirstTimeMarker] at (axis cs:4700000,0.00018627941608428955) {313s};
      
    \end{groupplot}
  \end{tikzpicture}
  \tikzexternaldisable%
  \caption[VeriSHAP vs. KernelSHAP \& TreeMSR]{%
      \textbf{\OurAlgo{}~\legendcolorbox{FirstSHAPArea} vs. \KernelSHAP{}~\legendcolorbox{SecondSHAPArea} \& \TreeMSR{}~\legendcolorbox{ThirdSHAPArea}}
      We run \KernelSHAP{} and \TreeMSR{} with increasing sample sizes until each exhausts the available GPU memory, repeating each run~\(100\) times.
      For each sample size, we plot the range across repetitions of the \emph{largest error} between the estimated SHAP value and the true SHAP value for any feature.
      For \OurAlgo{}, we mark the half-ranges of the SHAP bounds at different~\tikz[baseline={([yshift={-2.7mm}]current bounding box.north)}]{\node[FirstTimeMarker] at (0,0) {runtimes}}.
  }\label{fig:ours-vs-estimators}
\end{figure}

\begin{figure}[t]
  \centering
  \tikzexternalenable%
  \hspace*{-.1cm}
  \begin{tikzpicture}
    \begin{groupplot}[
        group style={
            group size=3 by 1,
            xlabels at=edge bottom,
            ylabels at=edge left,
            xticklabels at=edge bottom,
            vertical sep=.25cm,
            horizontal sep=.8cm,
        },
        height=3.1cm, width=.415\linewidth,
        xlabel={Sample Size},
        xticklabel style={font=\footnotesize},
        yticklabel style={font=\footnotesize},
        label style={inner sep=0pt, font=\footnotesize, align=center},
        title style={align=center, inner sep=0pt},
        ticklabel style={inner sep=2pt},
    ]
      \nextgroupplot[
        xmode=log, ymode=log, 
        title={\GermanDataset{}}, 
        ylabel={MSE\\[-7pt]},
        ylabel style={yshift=-.1cm},
        xmin=200,xmax=100000,
      ]
      \addplot[SecondSHAP] table [col sep=comma,x=num_samples,y=KernelSHAP_mean_error] {data/estimators_comparison/german-mlp-8x1_l2_mean.csv};
      \addplot[FourthSHAP] table [col sep=comma,x=num_samples,y=LeverageSHAP_mean_error] {data/estimators_comparison/german-mlp-8x1_l2_mean.csv};
      \addplot[ThirdSHAP] table [col sep=comma,x=num_samples,y=TreeMSR_mean_error] {data/estimators_comparison/german-mlp-8x1_l2_mean.csv};
      
      \nextgroupplot[
        xmode=log, ymode=log, 
        title={\MushroomDataset{}}, 
        ylabel style={yshift=-.1cm, xshift=.15cm},
        xmin=200,xmax=100000,
      ]
      \addplot[SecondSHAP] table [col sep=comma,x=num_samples,y=KernelSHAP_mean_error] {data/estimators_comparison/mushroom-mlp-8x1_l2_mean.csv};
      \addplot[FourthSHAP] table [col sep=comma,x=num_samples,y=LeverageSHAP_mean_error] {data/estimators_comparison/mushroom-mlp-8x1_l2_mean.csv};
      \addplot[ThirdSHAP] table [col sep=comma,x=num_samples,y=TreeMSR_mean_error] {data/estimators_comparison/mushroom-mlp-8x1_l2_mean.csv};
      
      \nextgroupplot[
        xmode=log, ymode=log, 
        title={\DefaultDataset{}}, 
        ylabel style={yshift=-.1cm, xshift=.15cm},
        xmin=200,xmax=100000,
      ]
      \addplot[SecondSHAP] table [col sep=comma,x=num_samples,y=KernelSHAP_mean_error] {data/estimators_comparison/default-mlp-64x3_l2_mean.csv};
      \addplot[FourthSHAP] table [col sep=comma,x=num_samples,y=LeverageSHAP_mean_error] {data/estimators_comparison/default-mlp-64x3_l2_mean.csv};
      \addplot[ThirdSHAP] table [col sep=comma,x=num_samples,y=TreeMSR_mean_error] {data/estimators_comparison/default-mlp-64x3_l2_mean.csv};
      
    \end{groupplot}
  \end{tikzpicture}
  \tikzexternaldisable%
  \caption[KernelSHAP vs. LeverageSHAP vs. TreeMSR]{%
      \textbf{\KernelSHAP{}~\legendcolorline{SecondSHAP} vs. \LeverageSHAP{}~\legendcolorline{FourthSHAP} vs. \TreeMSR{}~\legendcolorline{ThirdSHAP}}.
      We report the mean squared error (MSE) of the SHAP estimators across~\(100\) runs on three representative datasets. 
  }\label{fig:estimators-comparison}
\end{figure}

\subsection{Evaluating SHAP Estimators Using \OurAlgo{}}\label{sec:estimators-comparison}
We evaluate statistical SHAP estimators using the exact SHAP values computed by \OurAlgo{} on datasets where obtaining exact SHAP values was previously infeasible.
Following \citet{witter2025regression}, we compute the value function using the dataset mean as a singleton background dataset and report mean squared errors (MSE) in this experiment.
As \cref{fig:estimators-comparison} shows, \LeverageSHAP{}~\citep{musco2025leverageshap} outperforms \KernelSHAP{}, confirming a trend emerging on smaller search spaces~\citep{witter2025regression, musco2025leverageshap}.
\TreeMSR{} shows volatile performance across datasets when applied to neural networks, a nuanced insight that is not directly apparent from the results of~\citet{witter2025regression} for explaining tree-based models with large search spaces.
This exemplifies the potential of the \enquote{ground-truth} exact SHAP values computed by \OurAlgo{} on larger search spaces to provide insights that are not apparent in tree-based ML models and neural networks on lower-dimensional datasets.

\subsection{Activation Functions and Network Architectures}\label{sec:network-architectures}
To demonstrate the versatility of \OurAlgo{}, we apply \OurAlgo{} to \MushroomDataset{} networks with different activation functions and architectures.
Concretely, we apply \OurAlgo{} to fully-connected networks with ReLU,~\(\tanh\), and~\(\mathrm{Swish}\)~\cite{Hendrycks2016GELU} as activation function, as well as a RestNet~\cite{he2016resnet} with fully-connected layers.
\Cref{tab:mushroom-network-architectures} provides the runtime of \OurAlgo{} for the different networks.
While the runtime of \OurAlgo{} increases for the non-piecewise-linear~\(\tanh\) and~\(\mathrm{Swish}\)-activated neural networks, as well as the larger ResNet architecture, \OurAlgo{} is able to compute exact SHAP values for all networks.
\Cref{appendix:experiments-network-size} provides a complementary experiment comparing the scalability of \OurAlgo{} in the network size.

\begin{table}
    \centering
    \caption{
      \textbf{\OurAlgo{} for Different Activation Functions and Network Architectures.}
      In this table, \enquote{\(\mathrm{ReLU}\)},~\enquote{\(\tanh\)}, and~\enquote{\(\mathrm{Swish}\)} represent fully-connected networks with the respective activation functions, and~\enquote{ResNet} represents a ReLU-activated fully-connected network with residual connections, all trained on \MushroomDataset{}. The column headings are as for \cref{tab:ours-vs-exactshap}.
      The timeout for this experiment is~\(900s\).
    }\label{tab:mushroom-network-architectures}
    \begin{tabular}{lrrrr}
    & \multicolumn{3}{c}{\textbf{\OurAlgo{} (Ours) Runtime}} \\
         \textbf{Network} & \(10\%\) HR & \(1\%\) HR & \(0.1\%\) HR &	Exact \\ \midrule
\(\mathrm{ReLU}\)  & \(17\)s & \(20\)s  & \(24\)s  & \(25\)s \\
\(\tanh\)  & \(19\)s & \(37\)s  & \(61\)s  & \(670\)s \\
\(\mathrm{Swish}\) &	\(18\)s & \(35\)s  & \(56\)s  & \(70\)s \\
ResNet    & \(72\)s & \(135\)s & \(188\)s & \(199\)s
    \end{tabular}
\end{table}

\section{Related Work}

\paragraph{Exact SHAP.} To obtain Shapley values while preserving their full game-theoretic guarantees, one must compute them exactly. 
While exact computation is feasible, e.g., for tree-based models~\citep{bifet2022linear} and additive models~\citep{enoueninstashap, bordt2023shapley}, computing exact Shapley values is \#P-hard for general neural networks~\citep{van2022tractability, arenas2023complexity, marzouk2025computational}.
Unlike existing approaches that rely on restricted neural network architectures to enable exact SHAP computation~\citep{marzoukshap, chen2023harsanyinet, heidari2025tractable, muschalikexact}, our algorithm is applicable to general neural networks.


\paragraph{SHAP Value Estimation Methods.} 
Due to the computational complexity of computing exact SHAP values, a multitude of approaches exist for estimating SHAP values, including \KernelSHAP{}~\citep{lundberg2017unified,covert2021improving}, \TreeMSR{} and \LinearMSR{}~\citep{witter2025regression}, \LeverageSHAP{}~\citep{musco2025leverageshap}, \AlgorithmName{FastSHAP}~\citep{jethani2021fastshap}, \AlgorithmName{DeepSHAP}~\citep{Chen2021}, as well as Monte Carlo-based approaches~\citep{strumbelj2014explaining, mitchell2022sampling}, uncertainty-based estimates~\citep{ancona2019explaining, watson2023explaining}, and further model-specific methods for kernel models~\citep{chau2022rkhs, chau2023explaining}.
Our work addresses the substantially more challenging task of computing \emph{exact} SHAP values, and has the potential to serve as a \enquote{ground-truth} framework for evaluating SHAP estimators.

\paragraph{Neural Network Verification \& Formal Explainability.} 
Our approach is based on the recent rapid progress in the field of neural network verification~\citep{brix2024fifth, wang2021beta, zhou2025clipandverify, zhouscalable, singh2019deeppoly, muller2021scaling, chiusdp, ferraricomplete, boetius2025solving, wu2024marabou}. 
Recently, related techniques have also been used in the context of formal explainability~\citep{marques2022delivering, ignatiev2019aaai, darwiche2020reasons, bassan2023towards, audemard2022trading}, a subfield that seeks explanations with formal guarantees. 
However, in this context~\citep[see, e.g.,][]{bassanexplaining, wu2023verix, izza2024distance, emanuele2021guaranteed, labbaf2025, hadad2026formal, soria2026formal}, neural network verifiers are used for explanation types that relate directly to adversarial robustness, which simplifies the incorporation of neural network verifiers. 
Our work is the first to apply ideas from neural network verification to computing exact SHAP values.

\section{Limitations and Future Work}

Since computing exact SHAP values is~\#P-hard, all algorithms for computing them, including \OurAlgo{}, face substantial computational challenges.
As \OurAlgo{} relies on neural network verification, which has made rapid progress in recent years~\citep{brix2024fifth,wang2021beta,zhou2025clipandverify,zhouscalable,chiusdp}, and has demonstrated remarkable success on other NP-hard~\citep{katz2017reluplex} and \#P-hard problems~\citep{boetius2025solving}, its scalability will improve as neural network verification continues to advance. Importantly, \OurAlgo{} already scales to substantially larger search spaces than existing exact SHAP approaches. While \OurAlgo{} can still be slow for some networks, users can diagnose such cases early from the initial SHAP bounds, as demonstrated in \cref{appendix:predict-runtime}. 
We also acknowledge existing critiques of SHAP~\citep{huang2024failings,kumar2020problems,slack2020fooling,fryer2021shapley,biradar2024axiomatic}; our goal is not to defend its axiomatic foundations, but to provide a way to compute provably exact values for a widely used explanation method. 
%
%

Future work may extend our algorithm to higher-order Shapley interactions~\citep{fumagalli2023shap, fumagalli2024kernelshap, sundararajan20many, kolpaczki2024svarm}, exact SHAP under alternative value functions~\citep{mohammadi2025computing, letoffe2025towards} and input distributions~\citep{ghalebikesabi2021locality}, as well as other feature attribution indices~\citep{banieckiexplaining, barcelo2025computation}.


\section{Conclusion}\label{sec:conclusion}
We present \OurAlgo{}, the first algorithm that leverages recent advances in neural network verification to compute \emph{exact} SHAP values of neural networks. Our approach scales to search spaces orders of magnitude larger than those supported by existing methods, marking an important first step towards exact SHAP computation for neural networks. Moreover, our method flexibly provides provably exact bounds on SHAP values with arbitrary precision, enabling more scalable early termination while still yielding meaningful explanatory insights. Finally, our verification-based approach produces reliable \enquote{ground-truth} explanations for evaluating SHAP estimators for neural networks on larger search spaces. 
Hence, our work is an important step forward for producing and evaluating trustworthy explanations for neural networks through provable, verification-based guarantees.

\section*{Acknowledgements}
The work of Leue and Boetius was partially funded through the DFG research grant LE 1342/4 \enquote{SCADNet - Structural Causal Analysis of Deep Neural Networks}. The work of Katz and Bassan was partially funded by the European Union (ERC, VeriDeL, 101112713). Views and
opinions expressed are however those of the author(s) only and do not necessarily reflect those of the
European Union or the European Research Council Executive Agency. Neither the European Union
nor the granting authority can be held responsible for them. This research was additionally supported
by a grant from the Israeli Science Foundation (grant number 558/24).

%
%

\section*{Impact Statement}
Since our work advances both the practical and theoretical aspects of AI explainability, it shares broader implications common to the field, such as vulnerability to adversarial manipulation, privacy risks, and potential bias. However, by computing \emph{exact} SHAP values and certified bounds with provable guarantees, we aim to make these explanations more trustworthy and to provide a rigorous foundation for evaluating SHAP estimators in more realistic, higher-dimensional settings.

{%
  \def\UrlBreaks{\do\a\do\b\do\c\do\d\do\e\do\f\do\g\do\h\do\i\do\j\do\k\do\l\do\m\do\n\do\o\do\p\do\q\do\r\do\s\do\t\do\u\do\v\do\w\do\x\do\y\do\z\do\A\do\B\do\C\do\D\do\E\do\F\do\G\do\H\do\I\do\J\do\K\do\L\do\M\do\N\do\O\do\P\do\Q\do\R\do\S\do\T\do\U\do\V\do\W\do\X\do\Y\do\Z}

  \bibliography{main}
  \bibliographystyle{icml2026}
}%

\newpage
\appendix
\onecolumn


%
%
%
%
%

\NewDocumentCommand{\faketocline}{O{section} m m}{
  \contentsline{#1}{\numberline{\ref{#2}} \hyperref[#2]{#3}}{\pageref{#2}}{}
}
\NewDocumentCommand{\theorytocline}{O{section} m m}{
  \contentsline{#1}{\hyperref[#2]{#3}}{\pageref{#2}}{}
}

\vspace*{0cm}
\noindent{
    \hfill\bfseries{\huge Appendix}\hfill \llap{Page No.}
    \rule{\textwidth}{0.4pt}%
    \par%
}
\vspace*{-.25cm}
\faketocline{appendix:proofs}{Proofs}
\faketocline[subsection]{appendix:proofs-sumcoaliw}{Sums of Coalition Weights}
\theorytocline[subsubsection]{appendix:proofs-sumcoaliw-one}{\Cref{theory:sumcoaliw}}
\theorytocline[subsubsection]{appendix:proofs-sumcoaliw-two}{\Cref{theory:sum-coaliws-recursive}}
\faketocline[subsection]{appendix:proofs-shap-bounds}{SHAP Bounds}
\theorytocline[subsubsection]{appendix:proofs-soundness}{\Cref{theory:soundness}}
\theorytocline[subsubsection]{appendix:proofs-multi-shap-bounds}{\Cref{theory:multi-shap-bounds}}
\faketocline[subsection]{appendix:proofs-termination}{Termination}
\theorytocline[subsubsection]{appendix:proofs-termination-one}{\Cref{theory:termination}}
\theorytocline[subsubsection]{appendix:proofs-termination-two}{\Cref{theory:arbitrary-precision}}
\theorytocline[subsubsection]{appendix:proofs-linear-model}{\Cref{theory:linear-shap}}
\faketocline{appendix:algorithm-additional}{Additional Details on \OurAlgo{}}
\faketocline{appendix:additional-splitting-strategies}{Splitting Strategies}
\faketocline{appendix:datasets-networks-details}{Datasets and Networks}
\faketocline{appendix:additional-experiments}{Additional Experiments}
\faketocline[subsection]{appendix:ablations}{Ablation Studies}
\faketocline[subsection]{appendix:experiments-network-size}{Scalability in Network Size}
\faketocline[subsection]{appendix:experiments-additional-image-datasets}{Additional Experiments on Image Datasets}
\faketocline[subsection]{appendix:experiments-exactshap-additional}{Additional Details on the Comparison to \ExactSHAP{}}
\faketocline[subsection]{appendix:experiments-additional-estimators}{Comparison to Additional Statistical Estimators}
\faketocline[subsection]{appendix:additional-compare-estimators}{Evaluating SHAP Estimators using \OurAlgo{}}

\par\noindent\rule{\textwidth}{0.4pt}

\vspace*{1cm}


\section{Proofs}\label{appendix:proofs}
We restate all theoretical results for easier reference in this section.

\subsection{Sums of Coalition Weights}\label{appendix:proofs-sumcoaliw}

\begin{repeatedtheory}[\Cref{theory:sumcoaliw}]\label{appendix:proofs-sumcoaliw-one}
    Consider~\(\branch \in \branches[t]\) defined by~\(\includeset{}\) and~\(\excludeset{}\).
    Letting~\(r := |\includeset{}|\) and~\(s := |\includeset{}| + |\excludeset{}|\), we have 
    \begin{equation}
        \sumcoaliw{\branch} = \sum_{S \in \branch} \coaliw(|S|) = \frac{1}{(s + 1){s \choose r}}.\label{eqn:sumcoaliw}
    \end{equation}
\end{repeatedtheory}

\emph{Proof.} 
By plugging the value of~$\coaliw$ into \cref{eqn:sumcoaliw}, \cref{theory:sumcoaliw} becomes equivalent to showing that

\begin{equation}
    \frac{1}{n}\sum^{n-1-s}_{k=0}\binom{n-1-s}{k}\binom{n-1}{k+r}^{-1}=\frac{1}{(s+1)\binom{s}{r}}.\label{eqn:sumcoaliws-expanded}
\end{equation}

By the definition of the binomial coefficient,
\begin{equation*}
    \binom{n-1}{k+r}^{-1} = \frac{(k+r)!(n-1-k-r)!}{(n-1)!}.
\end{equation*}

We also recall the following identity for the Beta function 
\begin{equation}
    \beta(a,b) = \int^1_0t^{a-1}(1-t)^{b-1}dt=\frac{(a-1)!(b-1)!}{(a+b-1)!},\label{beta_function}
\end{equation}

as well as the classical truncated binomial theorem, which for~$x \in \Reals$ takes the form
\begin{equation}
    \sum_{k=0}^{K}\binom{K}{k}x^k 
    = (1+x)^{K}.\label{eqn:binomial-thm-truncated}
\end{equation}

Setting~$a := k+r+1$ and~$b := n-k-r$ in \cref{beta_function} yields
\begin{align*}
 &    &  \beta(k+r+1,n-k-r) &= \frac{(k+r)!(n-k-r-1)!}{n!}  \\
 &\iff& n\cdot \beta(k+r+1,n-k-r) &= \frac{(k+r)!(n-k-r-1)!}{(n-1)!}=\binom{n-1}{k+r}^{-1}.
\end{align*}

Hence, we obtain
\begin{align*}
  &    &  \binom{n-1}{k+r}^{-1} &= n\int^1_0 t^{k+r}(1-t)^{n-k-r-1}dt  \\
  &\iff&  \frac{1}{n}\cdot\binom{n-1}{k+r}^{-1} &= \int^1_0 t^{k+r}(1-t)^{n-k-r-1}dt  \\
  &\iff&  \frac{1}{n}\cdot\binom{n-1}{k+r}^{-1}\cdot \binom{n-1-s}{k} &= \binom{n-1-s}{k}\cdot \int^1_0 t^{k+r}(1-t)^{n-k-r-1}dt.
\end{align*}

Substituting this result into the left-hand side of \cref{eqn:sumcoaliws-expanded} yields\begin{equation}
    \begin{aligned}
       \frac{1}{n}\sum^{n-1-s}_{k=0}\binom{n-1-s}{k}\binom{n-1}{k+r}^{-1}
       &= \sum^{n-1-s}_{k=0}\binom{n-1-s}{k}\cdot \int^1_0 t^{k+r}(1-t)^{n-k-r-1}dt \\
       &= \sum^{n-1-s}_{k=0}\binom{n-1-s}{k}\cdot \int^1_0 t^{r}(1-t)^{n-r-1}\Big(\frac{t}{1-t}\Big)^kdt.
    \end{aligned}\label{eqn:sumcoaliw-as-beta}
\end{equation}

Setting~$x:=\frac{t}{1-t}$ and~\(K := n-1-s\) in \cref{eqn:binomial-thm-truncated}, we get
\begin{equation*}
    \sum_{k=0}^{n-1-s}\binom{n-1-s}{k}\Big(\frac{t}{1-t}\Big)^k=\Big(\frac{1}{1-t}\Big)^{n-1-s}.
\end{equation*}
Inserting this expression into \cref{eqn:sumcoaliw-as-beta}, we obtain
\begin{align*}
    \frac{1}{n}\sum^{n-1-s}_{k=0}\binom{n-1-s}{k}\binom{n-1}{k+r}^{-1}
    &= \sum^{n-1-s}_{k=0}\binom{n-1-s}{k}\cdot \int^1_0 t^{r}(1-t)^{n-r-1}\Big(\frac{t}{1-t}\Big)^kdt \\
    &= \int^1_0 t^{r}(1-t)^{n-r-1}\Big(\frac{1}{1-t}\Big)^{n-1-s}dt \\
    &= \int^1_0 t^{r}(1-t)^{s-r}dt.
\end{align*}

Using \cref{beta_function}, we obtain
\begin{equation*}
    \int^1_0 t^{r}(1-t)^{s-r}dt=\beta(r+1,s-r+1)=\frac{r!(s-r)!}{(s+1)!}= \frac{r!(s-r)!}{s!(s+1)}=\frac{1}{\binom{s}{r}(s+1)}.
\end{equation*}
Therefore,
\begin{equation*}
\sumcoaliw{\branch} = \frac{1}{n}\sum^{n-1-s}_{k=0}\binom{n-1-s}{k}\binom{n-1}{k+r}^{-1}=\int^1_0 t^{r}(1-t)^{s-r}dt=\frac{1}{\binom{s}{r}(s+1)},
\end{equation*}
which completes the proof.

$\qedsymbol{}$


\begin{repeatedtheory}[\Cref{theory:sum-coaliws-recursive}]\label{appendix:proofs-sumcoaliw-two}
    \CorollaryCoaliWsRecursive%
\end{repeatedtheory}

\emph{Proof.}
  Let~\(\branch\),~\(\branchone\),~\(\branchtwo\),~\(r\), and~\(s\) be as in \cref{theory:sum-coaliws-recursive}.
  By \cref{theory:sumcoaliw}, we have
  \begin{align*}
    \sumcoaliw{\branchone}
    &= \frac{1}{(s+2){s+1 \choose r+1}} 
    = \frac{r + 1}{s + 2}\frac{1}{(s+1){s \choose r}} \\
    &= \frac{r + 1}{s + 2}\sumcoaliw{\branch} \\
    \sumcoaliw{\branchtwo}
    &= \frac{1}{(s+2){s+1 \choose r}} 
    = \frac{s + 1 - r}{s + 2}\frac{1}{(s+1){s \choose r}} \\
    &= \frac{s + 1 - r}{s + 2}\sumcoaliw{\branch}.
  \end{align*}
  
$\qedsymbol{}$


\subsection{SHAP Bounds}\label{appendix:proofs-shap-bounds}
\begin{repeatedtheory}[\Cref{theory:soundness}]\label{appendix:proofs-soundness}
    \TheoremSoundness%
\end{repeatedtheory}

\emph{Proof.} 
By definition, we have~\(\sumcoaliw{\branch} = \sum_{S \in \branch} \coaliw(|S|)\). 
%
%
It directly follows that
\begin{align*}
    \lb{\shapval}^{(t)} 
    &= \sum_{\branch \in \branches[t]} \sum_{S \in \branch} \coaliw(|S|)\lb{\contrib}_{\branch} \\
    \ub{\shapval}^{(t)} 
    &= \sum_{\branch \in \branches[t]} \sum_{S \in \branch} \coaliw(|S|)\ub{\contrib}_{\branch}.
\end{align*}
Since~\([\lb{\contrib}_{\branch}, \ub{\contrib}_{\branch}]\) are computed using bound propagation, it holds that~$\lb{\contrib}_{\branch} \leq \contrib(S) \leq \ub{\contrib}_{\branch}$,~\(\forall S \in \branch\). 
Hence, since~\(\coaliw(k) > 0\),~\(\forall k \in \{0, \ldots, n-1\}\), we obtain our final result by applying \cref{eqn:shapley}
\begin{equation*}
\lb{\shapval}^{(t)} 
    = \sum_{S \in \otherfeatures} \coaliw(|S|)\lb{\contrib}_{\branch}\leq
\shapval=\sum_{S \in \otherfeatures} \coaliw(|S|) \contrib(S)
    \leq \sum_{S \in \otherfeatures} \coaliw(|S|)\ub{\contrib}_{\branch} = \ub{\shapval}^{(t)}.
\end{equation*}
$\qedsymbol{}$

\begin{repeatedtheory}[\Cref{theory:multi-shap-bounds}]\label{appendix:proofs-multi-shap-bounds}
    \PropositionMultiSHAPBounds%
\end{repeatedtheory}

\emph{Proof.}
We first recall \cref{eqn:shapley} and rewrite it as
\begin{equation}
    \shapval 
    = \sum_{S \in \otherfeatures} \coaliw(|S|) \contrib(S) 
    = \sum_{S \in \otherfeatures} \coaliw(|S|) \val(S \cup \{i\}) - \sum_{S \in \otherfeatures} \coaliw(|S|) \val(S)
    = \underbrace{\sum_{S \in \otherfeatures^{+}} \coaliw(|S| - 1) \val(S)}_{(\ast)} 
      - \underbrace{\sum_{S \in \otherfeatures} \coaliw(|S|) \val(S)}_{(\ddagger)},\label{eqn:multishap-proof-one}
\end{equation}
where~\(\otherfeatures^{+} = \{S \cup \{i\} \mid S \in \otherfeatures\}\) is the set of all feature sets, or \emph{coalitions}, containing~\(i\).
We now derive bounds on~\((\ast)\) and~\((\ddagger)\).
\begin{description}
    \item[\textnormal{\itshape Bounds on~\((\ast)\).}] 
    Since~\(\branches[t]\) is a partition of~\(\powerset{\upto{n}} \supset \otherfeatures^{+}\), we can partition~\(\otherfeatures^{+}\) along~\(\branches[t]\).
    Since~\(\otherfeatures^{+}\) is the set of all coalitions containing~\(i\), we have~\(\otherfeatures^{+} \cap \branch \neq \emptyset\) for all~\(\branch \in \branches[t]_{+i} \cup \branches[t]_{\pm i}\) and~\(\otherfeatures^{+} \cap \branch = \emptyset\) for all~\(\branch \in \branches[t]_{-i}\).
    In particular, by definition, the branches in~\(\branches[t]_{+i}\) \emph{only} contain coalitions containing~\(i\) and \emph{no} branch in~\(\branches[t]_{-i}\) contains coalitions containing~\(i\), while the branches in~\(\branches[t]_{\pm i}\) contain both coalitions with and without~\(i\).
    This allows us to partition~\(\otherfeatures^{+}\) as follows
    \begin{align*}
        \sum_{S \in \otherfeatures^{+}} \coaliw(|S| - 1) \val(S) 
        &= \sum_{\branch \in \branches[t]_{+i}} \sum_{S \in \branch} \coaliw(|S| - 1) \val(S)
           + \sum_{\branch \in \branches[t]_{\pm i}} \sum_{S \in \branch \cap \otherfeatures^{+}} \coaliw(|S| - 1) \val(S) \\
        &\geq \sum_{\branch \in \branches[t]_{+i}} \lb{\val}_{\branch} \sum_{S \in \branch} \coaliw(|S| - 1)
           + \sum_{\branch \in \branches[t]_{\pm i}} \lb{\val}_{\branch} \sum_{S \in \branch \cap \otherfeatures^{+}} \coaliw(|S| - 1),
    \end{align*}
    where the inequality holds since~\(\lb{\val}_{\branch} \leq \val(S) \leq \ub{\val}_{\branch}\),~\(\forall S \in \branch\), since~\([\lb{\val}_{\branch}, \ub{\val}_{\branch}]\) is computed using bound propagation.
    We now derive a closed-form expression for the terms~\(\sum_{S} \coaliw(|S| - 1)\) in terms of~\(\sumcoaliw{\branch}\) for~\(\branch \in \branches[t]_{+i}\) and~\(\branch \in \branches[t]_{\pm i}\).
    The main difficulty here is that, when~\(\branches[t]\) partitions~\(\powerset{\upto{n}}\) instead of~\(\otherfeatures\) and~\(\sumcoaliw{\branch}\) is computed in terms of the number of included and excluded features following \cref{theory:sumcoaliw},~\(\sumcoaliw{\branch}\) is effectively computed as if there were~\(n+1\) features instead of~\(n\).
    We follow through this reasoning for~\(\branches[t]_{+i}\) and~\(\branches[t]_{\pm i}\) separately.
    \begin{itemize}
      \item 
        We first consider~\(\branch \in \branches[t]_{+i}\) defined by~\(\includeset \ni i\) and~\(\excludeset \not\ni i\).
        Consider an equivalent branch~\({\branch}^{-} \subseteq \otherfeatures\) from a partitioning of~\(\otherfeatures\) defined by~\({\excludeset}^{-} = \excludeset\) and~\({\includeset}^{-} = \includeset \setminus \{i\}\), since~\(i\) cannot be included in a branch when partitioning~\(\otherfeatures\).
        It holds that~\(|\branch| = |{\branch}^{-}|\) and, since the coalitions in~\({\branch}^{-}\) contain exactly one element less than the corresponding coalitions in~\(\branch\), we have~\(
            \sum_{S \in {\branch}^{-}} \coaliw(|S|) = \sum_{S \in \branch} \coaliw(|S| - 1)
        \).
        Let~\(\sumcoaliw{\branch}^{-} = \sum_{S \in {\branch}^{-}} \coaliw(|S|)\).
        By \cref{theory:sumcoaliw}, we can compare~\(\sumcoaliw{\branch}\) and~\(\sumcoaliw{\branch}^{-}\) in terms of the number of included and excluded features.
        Let~\(r = |\includeset|\) and~\(s = |\includeset| + |\excludeset|\). 
        Since~\(|{\includeset}^{-}| = r - 1\) and~\(|{\includeset}^{-}| + |{\excludeset}^{-}| = s - 1\), by \cref{theory:sum-coaliws-recursive}, we have
        \begin{equation*}
            \sumcoaliw{\branch}
            = \frac{|{\includeset}^{-}| + 1}{|{\includeset}^{-}| + |{\excludeset}^{-}| + 2}\sumcoaliw{\branch}^{-} 
            \qquad\iff\qquad
            \sumcoaliw{\branch}
            = \frac{r}{s + 1}\sumcoaliw{\branch}^{-} 
            \qquad\iff\qquad
            \frac{s + 1}{r}\sumcoaliw{\branch} 
            = \sumcoaliw{\branch}^{-}.
        \end{equation*}
      \item
        Now, we consider~\(\branch \in \branches[t]_{\pm i}\) defined by~\(\includeset \not\ni i\) and~\(\excludeset \not\ni i\).
        There is a branch~\(\hat{\branch} \subseteq \otherfeatures\) from a partitioning of~\(\otherfeatures\) defined by~\(\hat{\includeset} = \includeset\) and~\(\hat{\excludeset} = \excludeset\) that satisfies~\(\hat{\branch} = \branch \cap \otherfeatures\).
        Since~\(|\hat{\branch}| = |\branch \cap \otherfeatures| = |\branch \cap \otherfeatures^{+}|\), we have~\(
            \sumcoaliw{\hat{\branch}} = \sum_{S \in \hat{\branch}} \coaliw(|S|) = \sum_{S \in \branch \cap \otherfeatures^{+}} \coaliw(|S| - 1)
        \).
        By \cref{theory:sumcoaliw}, we have~\(\sumcoaliw{\hat{\branch}} = \sumcoaliw{\branch}\).
    \end{itemize}
    Overall, we obtain
    \begin{equation*}
        \sum_{S \in \otherfeatures^{+}} \coaliw(|S| - 1) \val(S) 
        \geq \sum_{\branch \in \branches[t]_{+i}} \lb{\val}_{\branch} \sum_{S \in \branch} \coaliw(|S| - 1)
           + \sum_{\branch \in \branches[t]_{\pm i}} \lb{\val}_{\branch} \sum_{S \in \branch \cap \otherfeatures^{+}} \coaliw(|S| - 1)
        = \sum_{\branch \in \branches[t]_{+i}} \sumcoaliw{\branch}^{-} \lb{\val}_{\branch}
           + \sum_{\branch \in \branches[t]_{\pm i}} \sumcoaliw{\branch}\lb{\val}_{\branch}.
    \end{equation*}
    From an equivalent chain of arguments, we also obtain the upper bound
    \begin{equation*}
        \sum_{S \in \otherfeatures^{+}} \coaliw(|S| - 1) \val(S) 
        \leq \sum_{\branch \in \branches[t]_{+i}} \sumcoaliw{\branch}^{-} \ub{\val}_{\branch}
           + \sum_{\branch \in \branches[t]_{\pm i}} \sumcoaliw{\branch}\ub{\val}_{\branch}.
    \end{equation*}
    \item[\textnormal{\textit{Bounds on~\((\ddagger)\).}}]
    We follow similar steps as for~\((\ast)\).
    As for~\(\otherfeatures^{+}\), we can similarly partition~\(\otherfeatures\) along~\(\branches[t]\).
    Since~\(\otherfeatures = \powerset{\upto{n} \setminus \{i\}}\), we have~\(\otherfeatures \cap \branch \neq \emptyset\) for all~\(\branch \in \branches[t]_{-i} \cup \branches[t]_{\pm i}\) and~\(\otherfeatures \cap \branch = \emptyset\) for all~\(\branch \in \branches[t]_{+i}\).
    Since~\(\val(S) \geq \lb{\val}_{\branch}\),~\(\forall S \in \branch\), we obtain
    \begin{align*}
        \sum_{S \in \otherfeatures} \coaliw(|S|) \val(S) 
        &\geq \sum_{\branch \in \branches[t]_{-i}} \lb{\val}_{\branch} \sum_{S \in \branch} \coaliw(|S|)
           + \sum_{\branch \in \branches[t]_{\pm i}} \lb{\val}_{\branch} \sum_{S \in \branch \cap \otherfeatures} \coaliw(|S|).
    \end{align*}
    As for~\((\ast)\), we express the terms~\(\sum_{S} \coaliw(|S|)\) in terms of~\(\sumcoaliw{\branch}\), while differentiating~\(\branch \in \branches[t]_{-i}\) and~\(\branch \in \branches[t]_{\pm i}\).
    \begin{itemize}
        \item 
          Let~\(\branch \in \branches[t]_{-i}\) be defined by~\(\includeset \not\ni i\) and~\(\excludeset \ni i\).
          Consider a branch~\({\branch}^{+}\) containing the same coalitions as~\(\branch\) but from a partitioning of~\(\otherfeatures\).
          Being from a partitioning of~\(\otherfeatures\),~\({\branch}^{+}\) is defined by~\({\includeset}^{+} = \includeset\) and~\({\excludeset}^{+} = \excludeset \setminus \{i\}\), since~\(i\) cannot be excluded in a partitioning of~\(\otherfeatures\), as it is always excluded.
          Since~\({\branch}^{+}\) contains the same elements as~\(\branch\), we trivially have~\(\sum_{S \in {\branch}^{+}} \coaliw(|S|) = \sum_{S \in {\branch}} \coaliw(|S|)\).
          Let~\(\sumcoaliw{\branch}^{+} = \sum_{S \in {\branch}^{+}} \coaliw(|S|)\),~\(r = |\includeset|\), and~\(s = |\includeset| + |\excludeset|\).
          We have~\(|{\includeset}^{+}| = r\) and~\(|{\includeset}^{+}| + |{\excludeset}^{+}| = s - 1\).
          By applying \cref{theory:sumcoaliw} and \cref{theory:sum-coaliws-recursive}, we obtain
          \begin{equation*}
              \sumcoaliw{\branch}
              = \frac{|{\includeset}^{+}| + |{\excludeset}^{+}| + 1 - |{\includeset}^{+}|}{|{\includeset}^{+}| + |{\excludeset}^{+}| + 2}\sumcoaliw{\branch}^{+} 
              \qquad\iff\qquad
              \sumcoaliw{\branch}
              = \frac{s - r}{s + 1}\sumcoaliw{\branch}^{+} 
              \qquad\iff\qquad
              \frac{s + 1}{s - r}\sumcoaliw{\branch} 
              = \sumcoaliw{\branch}^{+}.
          \end{equation*}
        \item
          Let~\(\branch \in \branches[t]_{\pm i}\) be defined by~\(\includeset \not\ni i\) and~\(\excludeset \not\ni i\).
          As for~\((\ast)\), there is a branch~\(\hat{\branch} \subseteq \otherfeatures\) from a partitioning of~\(\otherfeatures\) defined by~\(\hat{\includeset} = \includeset\) and~\(\hat{\excludeset} = \excludeset\) that satisfies~\(\hat{\branch} = \branch \cap \otherfeatures\).
          Therefore,~\(
                \sumcoaliw{\hat{\branch}} = \sum_{S \in \hat{\branch}} \coaliw(|S|) = \sum_{S \in \branch \cap \otherfeatures} \coaliw(|S|)
          \).
          By \cref{theory:sumcoaliw}, we have~\(\sumcoaliw{\hat{\branch}} = \sumcoaliw{\branch}\).
    \end{itemize}
    Overall, we obtain the bounds
    \begin{equation*}
        \sum_{\branch \in \branches[t]_{-i}} \sumcoaliw{\branch}^{+} \lb{\val}_{\branch}
           + \sum_{\branch \in \branches[t]_{\pm i}} \sumcoaliw{\branch}\lb{\val}_{\branch}
        \leq 
        \sum_{S \in \otherfeatures} \coaliw(|S|) \val(S) 
        \leq 
        \sum_{\branch \in \branches[t]_{-i}} \sumcoaliw{\branch}^{+} \ub{\val}_{\branch}
           + \sum_{\branch \in \branches[t]_{\pm i}} \sumcoaliw{\branch}\ub{\val}_{\branch}.
    \end{equation*}
\end{description}
Inserting these bounds into \cref{eqn:multishap-proof-one} gives us \cref{theory:multi-shap-bounds}, i.e.,
\begin{align*}
  \shapval 
  & \geq 
  \sum_{\branch \in \branches[t]_{+i}} \sumcoaliw{\branch}^{-} \lb{\val}_{\branch}
     + \sum_{\branch \in \branches[t]_{\pm i}} \sumcoaliw{\branch}\lb{\val}_{\branch}
     - \sum_{\branch \in \branches[t]_{-i}} \sumcoaliw{\branch}^{+} \ub{\val}_{\branch}
     - \sum_{\branch \in \branches[t]_{\pm i}} \sumcoaliw{\branch}\ub{\val}_{\branch} \\
  &=
  \sum_{\branch \in \branches[t]_{+i}} \sumcoaliw{\branch}^{-} \lb{\val}_{\branch}
     + \sum_{\branch \in \branches[t]_{\pm i}} \sumcoaliw{\branch}(\lb{\val}_{\branch} - \ub{\val}_{\branch})
     - \sum_{\branch \in \branches[t]_{-i}} \sumcoaliw{\branch}^{+} \ub{\val}_{\branch} = \lb{\boldsymbol{\varphi}}_i^{(t)} \\
  \shapval
  & \leq 
  \sum_{\branch \in \branches[t]_{+i}} \sumcoaliw{\branch}^{-} \ub{\val}_{\branch}
     + \sum_{\branch \in \branches[t]_{\pm i}} \sumcoaliw{\branch}\ub{\val}_{\branch}
     - \sum_{\branch \in \branches[t]_{-i}} \sumcoaliw{\branch}^{+} \lb{\val}_{\branch}
     - \sum_{\branch \in \branches[t]_{\pm i}} \sumcoaliw{\branch}\lb{\val}_{\branch} \\
  &=
  \sum_{\branch \in \branches[t]_{+i}} \sumcoaliw{\branch}^{-} \ub{\val}_{\branch}
     + \sum_{\branch \in \branches[t]_{\pm i}} \sumcoaliw{\branch}(\ub{\val}_{\branch} - \lb{\val}_{\branch})
     - \sum_{\branch \in \branches[t]_{-i}} \sumcoaliw{\branch}^{+} \lb{\val}_{\branch} = \ub{\boldsymbol{\varphi}}_i^{(t)}.
\end{align*}

$\qedsymbol{}$

\subsection{Termination}\label{appendix:proofs-termination}
\begin{repeatedtheory}[\Cref{theory:termination}]\label{appendix:proofs-termination-one}
    \TerminationTheorem{}
\end{repeatedtheory}

\emph{Proof.} 
To show that \OurAlgo{} must eventually terminate, we note that in iteration~$t=2^{n-1}$, each branch contains only a single set of features~\(S \in \otherfeatures\).
Since we compute~\([\lb{\contrib}_{\branch}, \ub{\contrib}_{\branch}]\) using bound propagation, by the assumptions we pose in \cref{sec:bg-nnver} on bound propagation methods, for each branch~\(\branch \in \branches[t]\) we have~\(\lb{\contrib}_{\branch} = \ub{\contrib}_{\branch} = {\contrib}(S)\) where~\(\branch = \{S\}\).
Now, as a direct result of \cref{theory:soundness}, we have~\(\lb{\shapval}^{(t)} = \ub{\shapval}^{(t)} = \shapval\).
Hence, \OurAlgo{} terminates with~\(\lb{\shapval}^{(2^{n-1})} = \ub{\shapval}^{(2^{n-1})} = \shapval\).

$\qedsymbol{}$

\begin{repeatedtheory}[\Cref{theory:arbitrary-precision}]\label{appendix:proofs-termination-two}
    \ArbitraryPrecisionCorollary%
\end{repeatedtheory}

\emph{Proof.} From \cref{theory:termination} we know that the algorithm reaches the exact SHAP value at some iteration, hence it is also guaranteed that for any given precision tolerance~$\delta\in\RealsNonNeg$, there exists some iteration over which the computed bounds satisfy~\(\ub{\shapval}^{(t)} -\lb{\shapval}^{(t)} \leq \delta\).
Hence, at this iteration, we can terminate the algorithm and return the computed bounds up to this precision.

$\qedsymbol{}$

\begin{repeatedtheory}[\Cref{theory:linear-shap}]\label{appendix:proofs-linear-model}
    \LinearModelProposition{}%
\end{repeatedtheory}

{%
\emph{Proof.} 
\RenewDocumentCommand{\assign}{m m m o}{{[{#1}_{#2};{#3}_{\Bar{#2}}]}_{\IfNoValueF{#4}{#4}}}
Let~\(\NN(\x) = \transp{\vec{w}}\x + \vec{b}\).
We follow the arguments provided by~\citet[Corollary 1]{lundberg2017unified} to show that the contribution~\(\contrib(S)\) is constant, or, in other words, the value of~\(\contrib(S)\) is independent of~\(S\).
First, by the linearity of~\(\NN\), we obtain~\(\val(S) = \transp{\vec{w}} \mathbb{E}_{\z \in \mathcal{D}} \assign{\x}{S}{\z}  + \vec{b}\), where~\(\assign{\x}{S}{\z}\) denotes a vector where the features in~$S$ are taken from the corresponding features in~\(\x\) and those in~\(\Bar{S} = \upto{n} \setminus S\) are taken from~\(\z\).
In consequence,
\begin{align*}
    \contrib(S) 
    &= \val(S \cup \{i\}) - \val(S) \\
    &= \transp{\vec{w}} \mathbb{E}_{\z \in \mathcal{D}} [\x[S \cup \{i\}];\z_{\overbar{S \cup \{i\}}}] + \vec{b} - \transp{\vec{w}} \mathbb{E}_{\z \in \mathcal{D}} \assign{\x}{S}{\z} - \vec{b} \\
    &= \transp{\vec{w}} (\mathbb{E}_{\z \in \mathcal{D}} [\x[S \cup \{i\}];\z_{\overbar{S \cup \{i\}}}] - \mathbb{E}_{\z \in \mathcal{D}} \assign{\x}{S}{\z}) \\
    &= \sum_{j=1}^n \vec{w}_{j}(\mathbb{E}_{\z \in \mathcal{D}} [\x[S \cup \{i\}];\z_{\overbar{S \cup \{i\}}}]_j - \mathbb{E}_{\z \in \mathcal{D}} \assign{\x}{S}{\z}[j]) \\
    &= \vec{w}_{i}(\x[i] - \mathbb{E}_{\z \in \mathcal{D}} \z_i).
\end{align*}
By propagating linear functions as bounds, \LBP{} methods do not suffer from approximation error for compositions of linear functions.
Since~\(\contrib\) is a composition of linear functions, \LBP{} produces~\(\lb{\contrib}_{\branch} = \ub{\contrib}_{\branch} = \vec{w}_{i}(\x[i] - \mathbb{E}_{\z \in \mathcal{D}} \z_i)\) for any branch~\(\branch\), including the initial branch~\(\branch[1][1] = \otherfeatures\).
Therefore,
\begin{align*}
    \lb{\shapval}^{(1)} 
    = \sumcoaliw{\branch[1][1]} \lb{\contrib}_{\branch[1][1]} 
    = \vec{w}_{i}(\x[i] - \mathbb{E}_{\z \in \mathcal{D}} \z_i)
    = \sumcoaliw{\branch[1][1]} \ub{\contrib}_{\branch[1][1]} 
    = \ub{\shapval}^{(1)},
\end{align*}
and since~\(\shapval = \vec{w}_{i}(\x[i] - \mathbb{E}_{\z \in \mathcal{D}} \z_i)\)~\citep[Corollary 1]{lundberg2017unified}, this proves that~\(\lb{\shapval}^{(1)} = \ub{\shapval}^{(1)} = \shapval\).

$\qedsymbol{}$
}%

\section{Splitting Strategies}\label{appendix:additional-splitting-strategies}
This section provides additional details on the splitting strategies introduced in \cref{sec:heuristics}.
\Cref{appendix:ablations} provides an experimental comparison of both the \AlgoSelect{} and \AlgoSplit{} strategies presented below.

\paragraph{Selecting Branches to Split (\AlgoSelect{}).}
The partitioning strategy in \cref{sec:method} requires selecting a branch~\(\branch[k]\) to split.
Since we process branches in batches, we instead select a batch~\(\branch[k_1], \ldots, \branch[k_b]\) of branches to split simultaneously, where~\(b \in \Nats\) is the batch size.
We present two strategies for selecting branches to split: \SelectMaxDiam{} selects the~\(b\) branches~\(\branch\) in~\(\branches[t]\), where~\(\branchdiam := \sumcoaliw{\branch}(\ub{\contrib}_{\branch} - \lb{\contrib}_{\branch})\) is the largest, while \SelectMinDiam{} selects the~\(b\) branches with the smallest~\(\branchdiam\).
The motivation behind \SelectMaxDiam{} is to greedily select those branches that contribute the most to the gap between~\(\lb{\shapval}\) and~\(\ub{\shapval}\).
Conversely, \SelectMinDiam{} seeks to select those branches first that are likely to be pruned soon, in order to reduce the memory demands of \OurAlgo{}.

\paragraph{Selecting Features to Split on (\AlgoSplit{}).}
The branch-and-bound literature provides numerous strategies for selecting features to split, which can also be applied in \OurAlgo{}.
We adapt strong branching~\citep{applegate1995strongbranching}, smart branching~\citep{bunel2020branch,boetius2025solving}, and smears~\citep{kearfott1996smears,wang2018efficient} to our setting and compare the strategies experimentally in \cref{appendix:ablations}.
In the following, let~\(\branch \in \branches[t]\) be defined by~\(\includeset,\excludeset \subseteq \upto{n} \setminus \{i\}\).
We select a feature~\(j^\ast\) to split from the set of features~\(\splittable := \upto{n} \setminus (\{i\} \cup \includeset \cup \excludeset)\) that are available for splitting.

\begin{itemize}
    \item \textbf{\SplitInOrder{}.} A \naive{} approach for selecting features to split is selecting the features in a predefined order, e.g.,~\(j^\ast := \min \splittable\). 
    \SplitInOrder{} provides a baseline for evaluating other strategies.
    \item \textbf{\SplitStrongBranching{}.} This more powerful splitting strategy greedily selects those features that lead to the smallest diameter of the bounds on the contribution~\(\lb{\contrib}_{\branch}, \ub{\contrib}_{\branch}\) across the branches resulting from the split.
    Formally, if~\([\lb{\contrib}'_{j}, \ub{\contrib}'_{j}]\) and~\([\lb{\contrib}''_{j}, \ub{\contrib}''_{j}]\) are the contribution bounds of the two branches resulting from splitting feature~\(j\), \SplitStrongBranching{} selects~\(j^\ast := \argmin_{j} \max(\ub{\contrib}'_{j} - \lb{\contrib}'_{j}, \ub{\contrib}''_{j} - \lb{\contrib}''_{j})\) for splitting.
    \SplitStrongBranching{} is a comparatively expensive strategy, since it has to compute bounds on the contribution for each possible split.
    \item \textbf{\SplitSmartBranchingIBP{}.} This splitting strategy aims to approximate \SplitStrongBranching{}, while reducing its computational cost. 
    It is an instance of a \emph{smart branching} strategy.
    In particular, \SplitSmartBranchingIBP{} applies the smart branching strategy of~\citet{boetius2025solving} that evaluates all possible splits in the same manner as \SplitStrongBranching{}, but uses a cheaper bound propagation method (IBP) for bounding the contributions.
    \item \textbf{\SplitSmears{}.} The magnitude of the contribution gradient~\(\contribgrad(S) \in \Reals^{n-1}\) gives an indication along which features the contribution varies most. 
    \SplitSmears{} leverages this information for determining which feature to split.
    For the branch~\(\branch\), \SplitSmears{} selects~\(j^\ast := \argmax_{j} \ub{|\contribgrad|}_j\), where~\(\ub{|\contribgrad|} \geq |\contribgrad(S)|\) is computed using \IBP{}.
\end{itemize}

\section{Additional Details on \OurAlgo{}}\label{appendix:algorithm-additional}
This section complements \cref{sec:impl,sec:heuristics,appendix:additional-splitting-strategies} by providing further details on \begin{enumerate*}[\itshape(i)]
    \item relaxing the Boolean masks~\(\mask\) to bounds over a continuous domain~\([\lb{\mask}, \ub{\mask}]\), and
    \item performing bound propagation on the marginal value function.
\end{enumerate*}
For ease of exposition, we describe these aspects for computing bounds on a single SHAP value~\(\shapval\), while our discussion extends naturally to simultaneously bounding all SHAP values~\(\shapval[1], \ldots, \shapval[n]\), as described in \cref{sec:impl}.

\paragraph{Relaxing Boolean Masks to Bounds.}
As described in \cref{sec:impl}, we represent sets of features, or coalitions,~\(S \in \otherfeatures\) using Boolean masks~\(\mask \in \Bools^{n}\) for computing bounds on the contribution~\(\contrib: \otherfeatures \to \Reals\). 
We represent branches~\(\branch \subseteq \otherfeatures\) using bounds on the Boolean masks, i.e.~\([\lb{\mask}, \ub{\mask}] \subseteq \Reals^{n}\), where~\(\lb{\mask}, \ub{\mask} \in \Bools^{n}\).
Performing bound propagation on~\(\contrib\) given~\([\lb{\mask}, \ub{\mask}]\) corresponds to relaxing the Boolean masks to the continuous domain~\({[0, 1]}^{n}\).
For this purpose, we define~\(\maskcontrib: {[0, 1]}^{n} \to \Reals\) so that~\(\maskcontrib(\mask) = \contrib(S)\) if~\(\mask\) corresponds to~\(S\).

Most operations performed by~\(\contrib\) can equally be applied by~\(\maskcontrib\).
However,~\(\contrib\) also relies on computing~\(S \cup \{i\}\) in \cref{eqn:contrib} and the vector~\((\x[S];\z_{\Bar{S}})\) that is part of the marginal value function.
We now define equivalent operations for Boolean masks that are used by~\(\maskcontrib\) in place of these operations.
Let~\(\mask\) be the mask vector corresponding to~\(S\).
First, for~\(S \cup \{i\}\), we insert a~\(1\) into~\(\mask\) at index~\(i\) to obtain~\(\mask^{+i}\), which corresponds to~\(S \cup \{i\}\).
This is a linear operation that is similarly applicable to real-valued vectors.
Second, we use that we already identified the Booleans with~\(\Bools\), so that~\((\x[S];\z_{\Bar{S}}) = \mask * \x + (1 - \mask) * \z\), where~\(*\) denotes element-wise multiplication. 
Overall, we have~\(\maskcontrib(\mask) = \maskval(\mask^{+i}) - \maskval(\mask)\), where~\(\maskval(\mask) = \mathbb{E}_{z \sim \mathcal{D}}[\NN(\mask * \x + (1 - \mask) * \z)]\).
Therefore, by relaxing the Boolean masks to~\({[0, 1]}^{n-1}\), we also consider linear combinations of~\(\x\) and~\(\z\).
However, since all operations besides~\(\NN\) are linear, this does not affect the tightness of linear bound propagation techniques as shown in \cref{theory:linear-shap}.

\paragraph{Propagating Bounds through the Marginal Value Function.}
The marginal value function as defined in \cref{sec:bg-shap} is the expected value of the neural network output, typically over a dataset of background samples~\(\mathcal{D}\).
We now discuss performing bound propagation on this expected value.
Since, practically,~\(\mathcal{D}\) is a finite dataset, we have~\(\val(S) = \mathbb{E}_{\z \sim \mathcal{D}}[\NN(\x[S];\z_{\Bar{S}})] = 1 / |\mathcal{D}| \sum_{\z \in \mathcal{D}}\NN(\x[S];\z_{\Bar{S}})\), where~\(|\mathcal{D}|\) is the size of the background dataset.
Since computing the average is a linear function of the network output, bound propagation proceeds as described in \cref{sec:bg-nnver} for linear functions.
Concretely, if~\([\lb{\NN}_\z, \ub{\NN}_\z]\) are bounds on~\(\NN(\x[S];\z_{\Bar{S}})\) for each~\(\z \in \mathcal{D}\), we have
\begin{equation*}
    \frac{1}{|\mathcal{D}|} \sum_{\z \in \mathcal{D}} \lb{\NN}_\z \leq \val(S) \leq \frac{1}{|\mathcal{D}|} \sum_{\z \in \mathcal{D}} \ub{\NN}_\z.
\end{equation*}

\begin{table}
    \NewDocumentCommand{\FC}{m o}{$\text{FC}_{#1}^{\IfNoValueF{#2}{#2}}$}
    \NewDocumentCommand{\Out}{m}{$\text{Out}_{#1}$}
    \NewDocumentCommand{\Conv}{m m}{$\text{Conv}_{#1}({#2})$}
    \NewDocumentCommand{\AvgPool}{m}{$\text{AvgPool}({#1})$}
    \NewDocumentCommand{\BN}{}{$\text{BN}$}
    \NewDocumentCommand{\SkipConnection}{}{$\text{SkipConnection}$}
    \NewDocumentCommand{\ReLU}{}{$\text{ReLU}$}
    \centering
    \caption{
        \textbf{Datasets and Neural Networks used in our Experiments.}
        \enquote{Architecture} lists the neural network used on each dataset, specified as the sequence of layers.
        We use~\FC{k}[\sigma] to denote a $\sigma$-activated fully-connected layer of~\(k\) neurons. If~\(\sigma\) is omitted, the layer is ReLU-activated.
        Similarly, \Out{m} denotes the linear output layer of dimension~\(m\),~\Conv{c}{k_1\times k_2} denotes a convolution layer with~\(c\) channels, a window size of~\(k_1\times k_2\), a stride of one, and \enquote{same} padding,~\AvgPool{k_1\times k_2} denotes an average pooling layer with a window size of~\(k_1\times k_2\), strides of~\(k_1\) and~\(k_2\), and \enquote{valid} padding,~\SkipConnection{} denotes a residual connection adding the input of the previous layer to the output of the layer,~\BN{} denotes batch normalisation, and~\ReLU{} denotes a ReLU activation layer for convolutional neural networks.
        For classification tasks, \enquote{Performance} specifies accuracy.
        For regression tasks, \enquote{Performance} provides the root mean square error.
    }\label{tab:datasets-and-networks}
    \begin{tabular}{@{}clL{3.4cm}cC{.95cm}C{.95cm}@{}}
             & & & & \multicolumn{2}{c}{\,\,\textbf{Performance}} \\
         $n$ & \multicolumn{1}{c}{\textbf{Dataset}} & \multicolumn{1}{c}{\textbf{Architecture}} & \textbf{Task}  & \textbf{Train} & \textbf{~Test~} \\ \midrule
        $12$ & \AdultDataset{}~\citep{AdultDataset}                    & \FC{32}, \FC{32}, \Out{2}          & Classification & $85\%$  & $85\%$ \\
        $16$ & \ObesityDataset{}~\citep{ObesityDataset}                & \FC{32}, \FC{32}, \Out{7}          & Classification & $100\%$ & $95\%$ \\
        $20$ & \GermanDataset{}~\citep{GermanDataset}                  & \FC{8}, \Out{1}                    & Classification & $81\%$  & $77\%$ \\
        $23$ & \DefaultDataset{}~\citep{DefaultDataset}                & \FC{64}, \FC{64}, \FC{64}, \Out{1} & Classification & $90\%$  & $71\%$ \\
        $25$ & \AutomobileDataset{}~\citep{AutomobileDataset}          & \FC{32}, \FC{32}, \Out{1}          & Regression     & $0.14$  & $0.59$ \\
        $27$ & \SteelDataset{}~\citep{SteelDataset}                    & \FC{8}, \FC{8}, \Out{7}            & Classification & $78\%$  & $72\%$ \\
        $28$ & \HepatitisCDataset{}~\citep{HepatitisCDataset}          & \FC{8}, \FC{8}, \Out{4}            & Classification & $53\%$  & $25\%$ \\
        $30$ & \BreastCancerDataset{}~\citep{BreastCancerDataset}      & \FC{32}, \FC{32}, \Out{1}          & Classification & $100\%$ & $97\%$ \\
        $38$ & \AnnealingDataset{}~\citep{AnnealingDataset}            & \FC{32}, \FC{32}, \Out{1}          & Classification & $100\%$ & $99\%$ \\
        $56$ & \LungCancerDataset{}~\citep{LungCancerDataset}          & \FC{4}, \FC{4}, \Out{3}            & Classification & $84\%$  & $57\%$ \\
        $58$ & \OnlineNewsDataset{}~\citep{OnlineNewsDataset}          & \FC{64}, \FC{64}, \Out{1}          & Classification & $67\%$  & $66\%$ \\
        $60$ & \SonarDataset{}~\citep{SonarDataset}                    & \FC{32}, \FC{32}, \Out{1}          & Classification & $100\%$ & $100\%$ \\ 
        \midrule
        $22$ & \MushroomDataset{}~\citep{MushroomDataset} & \FC{1}, \Out{1}                    & Classification & $96\%$ & $95\%$ \\
        $22$ & \MushroomDataset{}~\citep{MushroomDataset} & \FC{2}, \Out{1}                    & Classification & $96\%$ & $96\%$ \\
        $22$ & \MushroomDataset{}~\citep{MushroomDataset} & \FC{4}, \Out{1}                    & Classification & $99\%$ & $99\%$ \\
        $22$ & \MushroomDataset{}~\citep{MushroomDataset} & \FC{8}, \Out{1}                    & Classification & $100\%$ & $100\%$ \\
        $22$ & \MushroomDataset{}~\citep{MushroomDataset} & \FC{16}, \Out{1}                    & Classification & $100\%$ & $100\%$ \\
        $22$ & \MushroomDataset{}~\citep{MushroomDataset} & \FC{32}, \Out{1}                    & Classification & $100\%$ & $100\%$ \\
        $22$ & \MushroomDataset{}~\citep{MushroomDataset} & \FC{64}, \Out{1}                    & Classification & $100\%$ & $100\%$ \\
        $22$ & \MushroomDataset{}~\citep{MushroomDataset} & \FC{128}, \Out{1}                    & Classification & $100\%$ & $100\%$ \\
        $22$ & \MushroomDataset{}~\citep{MushroomDataset} & \FC{256}, \Out{1}                    & Classification & $100\%$ & $100\%$ \\
        $22$ & \MushroomDataset{}~\citep{MushroomDataset} & \FC{512}, \Out{1}                    & Classification & $100\%$ & $100\%$ \\
        $22$ & \MushroomDataset{}~\citep{MushroomDataset} & \FC{1024}, \Out{1}                    & Classification & $100\%$ & $100\%$ \\
        $22$ & \MushroomDataset{}~\citep{MushroomDataset} & \FC{2048}, \Out{1}                    & Classification & $100\%$ & $100\%$ \\
        $22$ & \MushroomDataset{}~\citep{MushroomDataset} & \FC{4096}, \Out{1}                    & Classification & $100\%$ & $100\%$ \\
        $22$ & \MushroomDataset{}~\citep{MushroomDataset} & \FC{8192}, \Out{1}                    & Classification & $100\%$ & $100\%$ \\
        $22$ & \MushroomDataset{}~\citep{MushroomDataset} & \FC{16384}, \Out{1}                    & Classification & $100\%$ & $100\%$ \\
        $22$ & \MushroomDataset{}~\citep{MushroomDataset} & \FC{32768}, \Out{1}                    & Classification & $100\%$ & $100\%$ \\
        \midrule
        $22$ & \MushroomDataset{}~\citep{MushroomDataset} & \FC{8}[\tanh], \Out{1}               & Classification & $98\%$ & $98\%$ \\
        $22$ & \MushroomDataset{}~\citep{MushroomDataset} & \FC{8}[\mathrm{swish}], \Out{1}      & Classification & $100\%$ & $100\%$ \\
        $22$ & \MushroomDataset{}~\citep{MushroomDataset} & ResNet\({}^{\ast}\)                  & Classification & $100\%$ & $100\%$ \\
        \midrule
        $784$ & \MNISTDataset{}~\citep{lecun1998mnist}                 & ConvNet\({}^{\dagger}\)  & Classification & $98.9\%$ & $98.6\%$ \\
        $784$ & \FashionMNISTDataset{}~\citep{xiao2017fashionmnist}    & ConvNet\({}^{\dagger}\)  & Classification & $97.0\%$ & $89.0\%$ \\
        $3072$ & \CIFARTenDataset{}~\citep{krizhevsky2009cifar10}       & ConvNet\({}^{\ddagger}\) & Classification & $87.3\%$ & $68.7\%$ \\
        $3072$ & \GTSRBDataset{}~\citep{stallkamp2011gtsrb}             & ConvNet\({}^{\star}\)    & Classification &$100\%$ & $90.5\%$ \\
    \end{tabular}
    
    \vspace*{.2cm}
    \({}^{\ast}\)\FC{22}, \SkipConnection{}, \FC{22}, \SkipConnection{}, \FC{22}, \SkipConnection{}, \Out{1} \\
    \({}^{\dagger}\)\Conv{4}{5\times5}, \AvgPool{2\times2}, \BN, \ReLU, \Conv{8}{5\times5}, \AvgPool{2\times2}, \BN, \ReLU, \FC{392}, \FC{64}, \Out{10} \\
    \({}^{\ddagger}\)\Conv{16}{5\times5}, \AvgPool{2\times2}, \BN, \ReLU, \Conv{32}{5\times5}, \AvgPool{2\times2}, \BN, \ReLU, \FC{2048}, \FC{256}, \Out{10} \\
    \({}^{\star}\)\Conv{32}{5\times5}, \AvgPool{2\times2}, \BN, \ReLU, \Conv{64}{5\times5}, \AvgPool{2\times2}, \BN, \ReLU, \FC{4096}, \FC{256}, \Out{10}
\end{table}

\section{Datasets and Networks}\label{appendix:datasets-networks-details}
\Cref{tab:datasets-and-networks} summarises the datasets and neural networks we use in our experiments.

\paragraph{Neural Networks.}
For all tabular datasets, we train fully connected neural networks, manually tuning the architecture to achieve maximal test set performance.
On the \MushroomDataset{} dataset, we train neural networks with different architectures to demonstrate the versatility of \OurAlgo{} in \cref{sec:network-architectures}. Additionally, we train neural networks of increasing size on \MushroomDataset{} to test the scalability of \OurAlgo{} with respect to the network size in \cref{appendix:experiments-network-size}.
The main \MushroomDataset{} neural network used in our remaining experiments contains a single hidden layer of~\(8\) ReLU neurons.
We train convolutional neural networks on all vision datasets, using a standard convolutional architecture, and manually tune the network size to achieve good test set performance.

\paragraph{Datasets.}
All tabular datasets are taken from the UCI ML Repository~\citep{UCIMLRepo}, which provides concise descriptions of each dataset.
\MNISTDataset{} is the standard, widely known hand-written digit recognition dataset introduced by~\citet{lecun1998mnist}.
\FashionMNISTDataset{}~\citep{xiao2017fashionmnist} is a more challenging dataset in the same format as \MNISTDataset{}, concerned with classifying images of fashion items.
\CIFARTenDataset{}~\citep{krizhevsky2009cifar10} is concerned with recognising vehicles and animals in~\(32 \times 32\) pixel colour images.
\GTSRBDataset{}~\citep{stallkamp2011gtsrb} contains colour images of German traffic signs scaled to the same size as \CIFARTenDataset{}.

\section{Additional Experiments}\label{appendix:additional-experiments}
This appendix contains additional experiments on different bound propagation algorithms, branch selection strategies, and split selection strategies for \OurAlgo{} in \cref{appendix:ablations}, quantifying the effect of different sources of bound looseness in \OurAlgo{} in \cref{appendix:experiments-relaxations-impact}, analysing the scalability of \OurAlgo{} in the network size in \cref{appendix:experiments-network-size}, further results on image datasets in \cref{appendix:mnist-additional-statistics,appendix:experiments-additional-image-datasets}, extended statistics for our comparison to \ExactSHAP{} in \cref{appendix:experiments-exactshap-additional}, results for additional statistical SHAP estimators in \cref{appendix:experiments-additional-estimators,appendix:additional-compare-estimators}, and an experiment on predicting the runtime of \OurAlgo{} in \cref{appendix:predict-runtime}.
Below, we first provide additional details on the value functions, summarise the metrics used in our experiments, and provide additional details on our hardware setup and our hyper-parameter values for \OurAlgo{}.

\paragraph{Value Functions.}
Our experiments leverage three variants of the marginal value function introduced in \cref{sec:bg-shap}.
In \cref{sec:experiments-compare-to-exactshap,sec:experiments-compare-to-kernelshap,appendix:additional-compare-estimators,appendix:ablations}, we study the marginal value function~\(\val_{\text{marginal}}(S) := \mathbb{E}_{\z \sim \mathcal{D}}[\NN(\x[S];\z_{\Bar{S}})]\), where~\(\mathcal{D}\) is a background dataset of~\(100\) samples.
We additionally use the \emph{baseline} value function~\(\val_{\text{baseline}}(S) := \NN(\x[S];\z_{\Bar{S}})\) for the \emph{baseline input}~\(\z \in \Reals^n\) that corresponds to the marginal value function under a singleton background dataset.
Concretely, we use the \emph{zero-baseline} value function, where~\(\z = \mathbf{0}_n\), in \cref{sec:experiments-mnist,appendix:experiments-additional-image-datasets} and the \emph{mean-baseline} value function, where~\(\z\) is the input mean over the training dataset, in \cref{sec:estimators-comparison,appendix:experiments-additional-image-datasets,appendix:additional-compare-estimators}.

\paragraph{Evaluation Metrics.}
We frequently compare SHAP bounds to the attributed neural network output to assess the tightness of the bounds.
Given a multi-class classifier or multi-regressor~\(\NN: \Reals^n \to \Reals^m\), one output~\(k \in \upto{m}\) is selected to be attributed.
Given SHAP bounds~\([\lb{\shapval}, \ub{\shapval}]\), we evaluate, for example, whether the bounds half-range~\((\ub{\shapval} - \lb{\shapval}) / 2\) is at most~\(p\%\) of the network output to attribute~\({\NN(\x)}_k\) (\enquote{$p\%$ HR}).
Furthermore, we rely on the maximum (\(\ell_\infty\)) error between estimated SHAP values~\(\hat{\boldsymbol{\varphi}}\) and the exact SHAP values~\(\boldsymbol{\varphi}\), which is~\(\max_{i} |\hat{\boldsymbol{\varphi}}_i - \boldsymbol{\varphi}_i|\).
For better comparability with~\citet{witter2025regression}, we also report the mean squared error~\(\frac{1}{n}\|\hat{\boldsymbol{\varphi}} - \boldsymbol{\varphi}\|_2^2\).

\paragraph{Hardware \& Hyperparameters.}
Our main hardware is an L40S NVIDIA GPU with 48GB of GPU memory, deployed on an Ubuntu 24.04 machine with 188GB of RAM and an AMD EPYC 9375F 32-Core processor.
Unless otherwise noted, we run \OurAlgo{} with a batch size of~\(4096\).

\subsection{Ablation Studies}\label{appendix:ablations}
We compare different bound propagation algorithms, branch selection strategies, and split selection strategies. 
For this comparison, we apply \OurAlgo{} to marginal SHAP with~\(100\) background data points on a selection of the datasets from \cref{sec:experiments-compare-to-exactshap}. Concretely, we use the \GermanDataset{}, \MushroomDataset{}, \DefaultDataset{}, \AutomobileDataset{}, \SteelDataset{}, and \SonarDataset{} datasets.
In this experiment, we compute SHAP values only for the first test sample and set a timeout of~\(400\)s.

\paragraph{Bound Propagation Algorithms.}
We compare the \IBP~\citep{moore2009introduction}, \CROWN{}~\citep{zhang2018crown}, \CROWNIBP{}~\citep{zhang2020crown-ibp}, and \AlphaCROWN{}~\citep{xu2021alphacrown} bound propagation algorithms for computing bounds on the value function in \OurAlgo{}. 
In this comparison, we use the \SelectMaxDiam{} branch selection and \SplitSmears{} split selection strategies.
Due to the increased memory requirements of \CROWN{} for recursively propagating bounds, \CROWN{} exceeds the memory of our GPU for all networks we use in this comparison. For this reason, \CROWN{} is excluded from the remainder of this comparison.

\Cref{tab:bound-propagation-ablation} contains the results of this comparison.
\IBP{} is fastest for computing exact SHAP values, but reaches the timeout or requires more runtime to compute tight bounds on the higher-dimensional datasets. 
While \AlphaCROWN{} computes the tightest contribution bounds, it is more runtime-intensive than \CROWNIBP{} and \IBP{}, causing it to require more runtime both for computing exact SHAP values and tight bounds.
Furthermore, the additional runtime cost of \AlphaCROWN{} means that it cannot perform as many iterations as \CROWNIBP{} and \IBP{} within the timeout.
For this reason, the final bounds that \OurAlgo{} computes on the \SteelDataset{} and \SonarDataset{} datasets are less tight when using \AlphaCROWN{} than for \CROWNIBP{}, while still significantly tighter than for \IBP{}.
Concretely, on \SteelDataset{}, the half-range of the SHAP value bounds is~\(1.52\%\) of the network output to attribute when using \CROWNIBP{},~\(1.95\%\) for \AlphaCROWN{}, and~\(17.70\%\) for \IBP{}.
On \SonarDataset{}, the half-range is~\(1.13\%\) for \CROWNIBP{},~\(1.37\%\) for \AlphaCROWN{}, and~\(28.77\%\) for \IBP{}.
Overall, the bound propagation algorithm~\CROWNIBP{} that we apply in our experiments performs best for computing tight bounds for higher-dimensional datasets.

\paragraph{Branch Selection.}
Another central component of \OurAlgo{} is the strategy used to select the branches to split. 
We compare the \SelectMaxDiam{} and \SelectMinDiam{} strategies introduced in \cref{sec:heuristics}, while using \CROWNIBP{} as the bound propagation algorithm and \SplitSmears{} for selecting splits.

The results of our comparison are summarised in \cref{tab:branch-selection-ablation}.
While \SelectMinDiam{} reduces the runtime for computing exact SHAP values, it increases the runtime required to obtain tight bounds compared to \SelectMaxDiam{}.
Since the inherent computational cost of \SelectMinDiam{} and \SelectMaxDiam{} is the same, the runtime differences we observe for computing exact SHAP values require further investigation, which we leave for future work.
In our experiments, we rely on \SelectMaxDiam{}, as it enables \OurAlgo{} to compute tight bounds faster on most datasets.

\paragraph{Split Selection.}
Finally, we compare the \SplitInOrder{}, \SplitSmears{}, \SplitSmartBranchingIBP{}, and \SplitStrongBranching{} split selection strategies introduced in \cref{appendix:additional-splitting-strategies}.
For this comparison, we use \CROWNIBP{} as the bound propagation algorithm and \SelectMaxDiam{} for selecting branches.

\Cref{tab:split-selection-ablation} summarises the results of this comparison.
While \SplitInOrder{} allows \OurAlgo{} to compute exact SHAP values the fastest in most cases, due to its minimal runtime requirements, \SplitInOrder{} does not allow for computing tight bounds on the \SteelDataset{} and \SonarDataset{} datasets.
Both \SplitSmartBranchingIBP{} and \SplitStrongBranching{} only produce results on the lower-dimensional \GermanDataset{} and \MushroomDataset{} datasets, due to their high runtime requirements.
\SplitSmears{} is the only strategy that allows for computing tight bounds on the \SteelDataset{} and \SonarDataset{} datasets.
Furthermore, it offers a comparable runtime to \SplitInOrder{} for computing exact SHAP values.
For this reason, we apply \SplitSmears{} in our experiments.

\begin{table}[p]
    \caption{%
        \textbf{Bound Propagation Algorithms.}
        This table contains the runtime required by \OurAlgo{} for computing SHAP values (\enquote{Exact}) or the bounds half-range reaching~\(10\%\) of the network output to attribute (\enquote{10\% HR}). 
        A \enquote{--} denotes \OurAlgo{} reaching the timeout, which is~\(400\)s for this experiment.
    }\label{tab:bound-propagation-ablation}
    \centering
    \begin{tabular}{lrrrrrr}
                 & \multicolumn{3}{c}{\textbf{Exact Runtime}} & \multicolumn{3}{c}{\textbf{\(\mathbf{10\%}\) HR Runtime}} \\ \cmidrule(lr){2-4}\cmidrule(lr){5-7}
\textbf{Dataset} & \textbf{\AlphaCROWN} & \textbf{\CROWNIBP} & \textbf{\IBP} & \textbf{\AlphaCROWN} & \textbf{\CROWNIBP} & \textbf{\IBP} \\
\midrule
\GermanDataset     &  $22$s & $ 20$s &  $\mathbf{14}$\textbf{s} & $ 18$s &                  $ 16$s & $\mathbf{12}$\textbf{s} \\
\MushroomDataset   &  $28$s & $ 25$s &  $\mathbf{19}$\textbf{s} & $ 19$s &                  $ 16$s & $\mathbf{15}$\textbf{s} \\
\DefaultDataset    & $138$s & $132$s &  $\mathbf{92}$\textbf{s} & $138$s &                  $132$s & $\mathbf{92}$\textbf{s} \\
\AutomobileDataset & $317$s & $315$s & $\mathbf{257}$\textbf{s} & $ 68$s & $\mathbf{52}$\textbf{s} &                   $74$s \\
\SteelDataset      &     -- &     -- &                       -- & $114$s & $\mathbf{86}$\textbf{s} &                      -- \\
\SonarDataset      &     -- &     -- &                       -- & $ 16$s & $\mathbf{13}$\textbf{s} &                      -- 
    \end{tabular}
\end{table}

\begin{table}[p]
    \caption{%
        \textbf{Branch Selection Strategies.}
        This table contains the runtime required by \OurAlgo{} for computing SHAP values (\enquote{Exact}) or the bounds half-range reaching~\(10\%\) of the network output to attribute (\enquote{10\% HR}). 
        A \enquote{--} denotes \OurAlgo{} reaching the timeout, which is~\(400\)s for this experiment.
    }\label{tab:branch-selection-ablation}
    \centering
    \begin{tabular}{lrrrrrr}
                 & \multicolumn{2}{c}{\textbf{Exact Runtime (s)}} & \multicolumn{2}{c}{\textbf{\(\mathbf{10\%}\) HR Runtime (s)}} \\ \cmidrule(lr){2-3}\cmidrule(lr){4-5}
\textbf{Dataset} & \textbf{\SelectMaxDiam} & \textbf{\SelectMinDiam} & \textbf{\SelectMaxDiam} & \textbf{\SelectMinDiam} \\
\midrule
\GermanDataset     & $ 20$s &  $\mathbf{18}$\textbf{s} &  $\mathbf{16}$\textbf{s} &                  $ 18$s \\
\MushroomDataset   & $ 25$s &  $\mathbf{21}$\textbf{s} &  $\mathbf{16}$\textbf{s} &                  $ 21$s \\
\DefaultDataset    & $133$s & $\mathbf{110}$\textbf{s} &                   $133$s & $\mathbf{110}$\textbf{s} \\
\AutomobileDataset & $315$s & $\mathbf{190}$\textbf{s} & $ \mathbf{52}$\textbf{s} &                   $190$s \\
\SteelDataset      &     -- &                       -- & $ \mathbf{86}$\textbf{s} &                       -- \\
\SonarDataset      &     -- &                       -- &                   $ 13$s &                   $ 13$s
    \end{tabular}
\end{table}

\begin{table}[p]
    \caption{%
        \textbf{Split Selection Strategies.}
        This table contains the runtime required by \OurAlgo{} for computing SHAP values (\enquote{Exact}) or the bounds half-range reaching~\(10\%\) of the network output to attribute (\enquote{10\% HR}). 
        A \enquote{--} denotes \OurAlgo{} reaching the timeout, which is~\(400\)s for this experiment.
        In this table, \enquote{\AlgorithmName{IStB}} and~\enquote{\AlgorithmName{StB}} abbreviate~\SplitSmartBranchingIBP{} and \SplitStrongBranching{}, respectively.
    }\label{tab:split-selection-ablation}
    \centering
    \begin{tabular}{lrrrrrrrr}
                 & \multicolumn{4}{c}{\textbf{Exact Runtime (s)}} & \multicolumn{4}{c}{\textbf{\(\mathbf{10\%}\) HR Runtime (s)}} \\ \cmidrule(lr){2-5}\cmidrule(lr){6-9}
\textbf{Dataset} & \textbf{\SplitInOrder{}} & \textbf{\SplitSmears{}} & \textbf{\AlgorithmName{IStB}} & \textbf{\AlgorithmName{StB}} & \textbf{\SplitInOrder{}} & \textbf{\SplitSmears{}} & \textbf{\AlgorithmName{IStB}} & \textbf{\AlgorithmName{StB}} \\
\midrule
\GermanDataset     & $ \mathbf{18}$\textbf{s} &                   $ 20$s & $21$s & $28$s & $ \mathbf{15}$\textbf{s} &                   $ 16$s & $16$s & $19$s \\
\MushroomDataset   &                   $ 34$s & $ \mathbf{26}$\textbf{s} & $48$s & $77$s &                   $ 25$s & $ \mathbf{16}$\textbf{s} & $34$s & $50$s \\
\DefaultDataset    & $\mathbf{111}$\textbf{s} &                   $133$s &    -- &    -- & $\mathbf{111}$\textbf{s} &                   $133$s &    -- &    -- \\
\AutomobileDataset & $\mathbf{289}$\textbf{s} &                   $314$s &    -- &    -- &                   $132$s & $ \mathbf{52}$\textbf{s} &    -- &    -- \\
\SteelDataset      &                       -- &                       -- &    -- &    -- &                       -- & $ \mathbf{87}$\textbf{s} &    -- &    -- \\
\SonarDataset      &                       -- &                       -- &    -- &    -- &                       -- & $ \mathbf{13}$\textbf{s} &    -- &    -- 
    \end{tabular}
\end{table}

\subsection{Impact of Continuous Relaxation}\label{appendix:experiments-relaxations-impact}
As described in \cref{sec:impl}, \OurAlgo{} relaxes discrete masks to a continuous space in order to apply bound propagation. Therefore, the bounds computed by \OurAlgo{} can be loose for two reasons: the bound propagation and the continuous relaxation. In this section, we tell the two sources of looseness apart by computing the true range of the baseline value function~\(\val\) over the discrete coalition space and the relaxed continuous space on low-dimensional datasets. Comparing these ranges to the bounds computed by bound propagation reveals which technique is the larger source of looseness. We apply neural network verification based on mixed integer linear programming (MILP) to compute the true range over the relaxed continuous space. For the discrete space, we enumerate all coalitions to compute the true range. \Cref{fig:looseness-sources} reveals that the contribution of the continuous relaxation is marginal across the \AdultDataset{}, \ObesityDataset{}, and \GermanDataset{} datasets.

\begin{figure}
  \centering
  \tikzexternaldisable%
  \begin{tikzpicture}
    \begin{axis}[
        xbar stacked,
        bar width=18pt,
        width=.85\linewidth, height=4.5cm,
        xmin=0,
        enlarge y limits=0.35,
        symbolic y coords={German, Obesity, Adult},
        ytick=data,
        xlabel={Normalized Bound Half-Range},
        tick label style={font=\footnotesize},
        label style={font=\footnotesize},
        every axis plot/.append style={fill, draw=none},
    ]
        \addplot[xbar, fill=CatC, draw=none] table[row sep=\\, x=val, y=dataset] {
          dataset val   \\
          Adult   0.66  \\
          Obesity 2.26  \\
          German  1.25  \\
        };
        
        \addplot[xbar, fill=CatB, draw=none] table[row sep=\\, x=val, y=dataset] {
          dataset val   \\
          Adult   0.03  \\
          Obesity 0.00  \\
          German  0.01  \\
        };
        
        \addplot[xbar, fill=CatA, draw=none] table[row sep=\\, x=val, y=dataset] {
          dataset val   \\
          Adult   1.64  \\
          Obesity 4.10  \\
          German  0.84  \\
        };
    \end{axis}
  \end{tikzpicture}
  \caption[]{%
    \textbf{Sources of Looseness in the \OurAlgo{} Value Bounds.}
    This plot decomposes the initial value function~\(\val\) bounds computed by \OurAlgo{} into the true range of~\(v\) over the discrete coalition space~\legendcolorbox{CatC}, the looseness introduced by the continuous relaxation~\legendcolorbox{CatB}, and the looseness introduced by \CROWNIBP{}~\legendcolorbox{CatA}. For each dataset, we normalise the bounds by the network output to attribute. Since the looseness introduced by the continuous relaxation is marginal compared to the \CROWNIBP{} bounds for all three datasets, the \legendcolorbox{CatB} bar is barely visible in this plot.
  }\label{fig:looseness-sources}
\end{figure}

\subsection{Scalability in Network Size}\label{appendix:experiments-network-size}
In this section, we investigate the scalability of \OurAlgo{} with respect to the size of the neural network for which we compute SHAP values.
For this experiment, we train neural networks of increasing size on the \MushroomDataset{}. 
The networks are described in \cref{appendix:datasets-networks-details}.
To reduce computational demand, we use the mean-baseline value function and compute SHAP values only for the first test sample in this experiment.
To still obtain reliable runtime measurements, we repeat each run five times.

\Cref{fig:network-size-scalability} summarises the results of our comparison.
As apparent from this figure, the runtime of \OurAlgo{} scales \emph{linearly} with the network size, measured by the number of network parameters, on the \MushroomDataset{}.
This perhaps surprising result is consistent with earlier observations in the related field of probabilistic verification of neural networks~\citep{boetius2025solving}.
A key observation is that the network complexity determines the runtime of \OurAlgo{} only indirectly, via the complexity of the decision boundary.
For example, as \cref{theory:linear-shap} proves, \OurAlgo{} terminates immediately also for high-dimensional linear functions with many parameters, since no splitting is required for computing tight bounds on the contribution function.
As most networks used here achieve identical performance on \MushroomDataset{}, they likely share similar decision boundaries. 
Another indicator of the decision boundary's complexity is the number of iterations, i.e., splits, that \OurAlgo{} performs to compute the exact SHAP values. 
As \cref{fig:network-size-small-rt,fig:network-size-small-iters} show, the only substantial jump in runtime from the smallest network containing only a single neuron to the next larger network containing two neurons is connected to a jump in the number of iterations of \OurAlgo{}, which indicates changes in the complexity of the decision boundary.

\begin{figure}
    \pgfplotsset{
      x label style={at={(axis description cs:0.35,-0.2)},anchor=north},
    }
    \centering
    \begin{subfigure}{.245\textwidth}
        \centering
        \begin{tikzpicture}
            \begin{axis}[height=4cm, width=\linewidth, ylabel=runtime (s), xlabel=parameters]
                \addplot[CatA, mark=*, mark size=2pt, ultra thick] table[x=num_parameters, y=runtime, col sep=comma] {data/network_size_runtimes.csv};
            \end{axis}
        \end{tikzpicture}
        \caption{All Networks (Runtime)}\label{fig:network-size-all-rt}
    \end{subfigure}
    \begin{subfigure}{.245\textwidth}
        \centering
        \begin{tikzpicture}
            \begin{axis}[height=4cm, width=\linewidth, ylabel=iterations, xlabel=parameters]
                \addplot[CatA, mark=*, mark size=2pt, ultra thick] table[x=num_parameters, y=iterations, col sep=comma] {data/network_size_runtimes.csv};
            \end{axis}
        \end{tikzpicture}
        \caption{All Networks (Iterations)}\label{fig:network-size-all-iters}
    \end{subfigure} \hfill
    \begin{subfigure}{.245\textwidth}
        \centering
        \begin{tikzpicture}
            \begin{axis}[height=4cm, width=\linewidth, ylabel=runtime (s), xlabel=parameters, xmax=10000]
                \addplot[CatA, mark=*, mark size=2pt, ultra thick] table[x=num_parameters, y=runtime, col sep=comma] {data/network_size_runtimes.csv};
            \end{axis}
        \end{tikzpicture}
        \caption{Small Networks (Runtime)}\label{fig:network-size-small-rt}
    \end{subfigure}
    \begin{subfigure}{.245\textwidth}
        \centering
        \begin{tikzpicture}
            \begin{axis}[height=4cm, width=\linewidth, ylabel=iterations, xlabel=parameters, xmax=10000]
                \addplot[CatA, mark=*, mark size=2pt, ultra thick] table[x=num_parameters, y=iterations, col sep=comma] {data/network_size_runtimes.csv};
            \end{axis}
        \end{tikzpicture}
        \caption{Small Networks (Iterations)}\label{fig:network-size-small-iters}
    \end{subfigure}
    \caption{
      \textbf{Scalability in Network Size.} 
      We plot the runtime and the number of iterations \OurAlgo{} requires to compute the exact SHAP values for neural networks trained on the \MushroomDataset{} dataset of increasing size.
      We measure network size by the number of parameters a network possesses.
      The right plots~\subref{fig:network-size-small-rt} and~\subref{fig:network-size-small-iters} show cut-outs of the left plots~\subref{fig:network-size-all-rt} and~\subref{fig:network-size-all-iters} focussing on the results for small networks.
    }\label{fig:network-size-scalability}
\end{figure}

\subsection{Additional Statistics for \Cref{sec:experiments-mnist}}\label{appendix:mnist-additional-statistics}
We provide additional diagnostic statistics for our MNIST experiment presented in \cref{sec:experiments-mnist} and \cref{fig:mnist-bounds}.
\Cref{fig:bab-run-summary} complements \cref{fig:mnist-bounds-three-features} by plotting the total number of branches, not currently pruned (active) branches, and pruned branches, as well as the reasons for pruning branches and the margin of the computed SHAP bounds.
When inspected together with \cref{fig:mnist-bounds}, \cref{fig:bab-run-summary} reveals that the SHAP bounds computed by \OurAlgo{} become tight before any branches are pruned. 
Furthermore, all branches are pruned because they have tight value function bounds, rather than because they are fully split.
The minimum and average bounds margin differ only minimally from the maximal bounds margin presented in \cref{fig:bab-bounds-margin}.

\begin{figure}
    \pgfplotsset{
      x label style={at={(axis description cs:0.5,-0.2)},anchor=north},
    }
    \centering
    \tikzexternaldisable%
    \begin{subfigure}[t]{.31\textwidth}
        \centering
        \begin{tikzpicture}
            \begin{axis}[
                height=4.5cm, width=\linewidth,
                xlabel=iteration, ylabel={\# branches},
                scaled y ticks=base 10:-3,
                legend style={font=\scriptsize, at={(0.02,0.98)}, anchor=north west},
                legend cell align=left,
            ]
                \addplot[CatC, mark=none, ultra thick] table[x=iteration, y=total,      col sep=comma] {data/mnist_bounds_zero_baseline/run_summary_branches.csv};
                \addplot[CatB, mark=none, ultra thick] table[x=iteration, y=active,     col sep=comma] {data/mnist_bounds_zero_baseline/run_summary_branches.csv};
                \addplot[CatA, mark=none, ultra thick] table[x=iteration, y=pruned_cum, col sep=comma] {data/mnist_bounds_zero_baseline/run_summary_branches.csv};
            \end{axis}
        \end{tikzpicture}
        \caption{
            \textbf{Number of Branches.}
            The total number of branches~\legendcolorlineThick{CatC}, number of not currently pruned (active) branches~\legendcolorlineThick{CatB}, and the cumulative number of pruned branches~\legendcolorlineThick{CatA} for each iteration of \OurAlgo{}.
        }\label{fig:bab-branch-counts}
    \end{subfigure}\hfill
    \begin{subfigure}[t]{.31\textwidth}
        \centering
        \begin{tikzpicture}
            \begin{axis}[
                height=4.5cm, width=\linewidth,
                xlabel=iteration, ylabel=\# branches,
                scaled y ticks=base 10:-3,
                legend style={font=\scriptsize, at={(0.02,0.98)}, anchor=north west},
                legend cell align=left,
            ]
                \addplot[CatA, mark=none, ultra thick] table[x=iteration, y=tight_cum,       col sep=comma] {data/mnist_bounds_zero_baseline/run_summary_pruning.csv};
                \addplot[CatB, mark=none, ultra thick] table[x=iteration, y=fully_split_cum, col sep=comma] {data/mnist_bounds_zero_baseline/run_summary_pruning.csv};
                \legend{Tight (cum.), Fully split (cum.)}
            \end{axis}
        \end{tikzpicture}
        \caption{
            \textbf{Pruning Reasons.}
            The number of branches pruned due to having tight value function bounds~\legendcolorlineThick{CatA} and due to being fully split~\legendcolorlineThick{CatB} for each iteration of \OurAlgo{}.
        }\label{fig:bab-pruning-reasons}
    \end{subfigure}\hfill
    \begin{subfigure}[t]{.31\textwidth}
        \centering
        \begin{tikzpicture}
            \begin{axis}[
                height=4.5cm, width=\linewidth,
                xlabel=iteration, ylabel={$\max_i \overline{\varphi_i} - \underline{\varphi_i}$},
                legend style={font=\scriptsize, at={(0.98,0.98)}, anchor=north east},
                legend cell align=left,
            ]
                \addplot[CatA, mark=none, ultra thick] table[x=iteration, y=max_gap,  col sep=comma] {data/mnist_bounds_zero_baseline/run_summary_margin.csv};
            \end{axis}
        \end{tikzpicture}
        \caption{
            \textbf{Bounds Margin.}
            The largest SHAP bounds margin across features for each iteration of \OurAlgo{}.
        }\label{fig:bab-bounds-margin}
    \end{subfigure}
    \caption{\textbf{Extended \MNISTDataset{} Experiment Analysis.}
      This figure provides additional statistics for our \MNISTDataset{} experiment described in \cref{sec:experiments-mnist} and visualised in \cref{fig:mnist-bounds}
    }\label{fig:bab-run-summary}
\end{figure}

\subsection{Experiments on Additional Image Datasets}\label{appendix:experiments-additional-image-datasets}
Complementing the results presented in \cref{sec:experiments-mnist}, we apply \OurAlgo{} to compute SHAP values on additional image datasets using both the zero-baseline and the mean-baseline value functions.
Concretely, we compute bounds on \MNISTDataset{}, \FashionMNISTDataset{}, \CIFARTenDataset{}, and \GTSRBDataset{} convolutional neural networks.
\Cref{tab:datasets-and-networks} provides the architectures of these networks.
For \MNISTDataset{} and \FashionMNISTDataset{}, we group the input images into evenly-spaced grids of superpixels, for which we compute SHAP values.
On \CIFARTenDataset{} and \GTSRBDataset{}, we compute superpixels using the SLIC~\citep{achanta12slic} and watershed~\citep{neubert14watershed} image segmentation algorithms.
While we run \OurAlgo{} with a batch size of~\(4096\) until it reaches a timeout of~\(900\)s on \MNISTDataset{} and \FashionMNISTDataset{}, due to the size of the neural networks for \CIFARTenDataset{} and \GTSRBDataset{}, we reduce the batch size to~\(1024\) and increase the timeout to~\(2\) hours.
\Cref{fig:image-mean-baseline-bounds} contains the results of \OurAlgo{} using the mean-baseline value function. 
\Cref{fig:image-zero-baseline-bounds} complements \cref{fig:mnist-bounds} by presenting additional \OurAlgo{} results for the zero-baseline value function.

\begin{figure*}
    \centering
    \begin{tabular}{@{}cC{.19\linewidth}@{}C{.19\linewidth}@{}C{.19\linewidth}@{}C{.19\linewidth}@{}C{.19\linewidth}@{}}
        & \textbf{First Iteration} & & & \textbf{Exact} & \textbf{Bounds Colourmap} \\
        \rotatebox{90}{\hspace{-.5cm}\MNISTDataset{} \(5\times5\)} &
        \includegraphics[width=.95\linewidth]{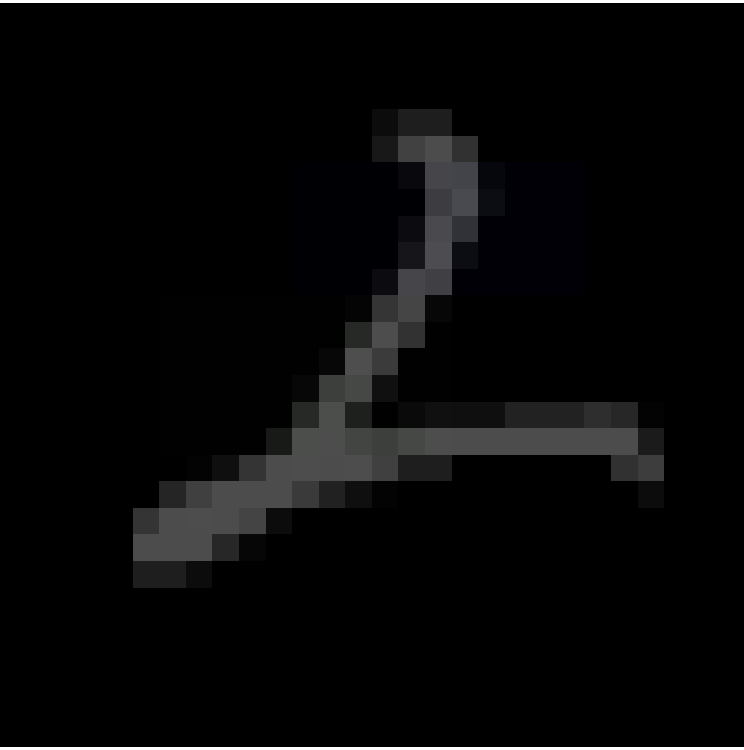}\par $t=1$ &
        \includegraphics[width=.95\linewidth]{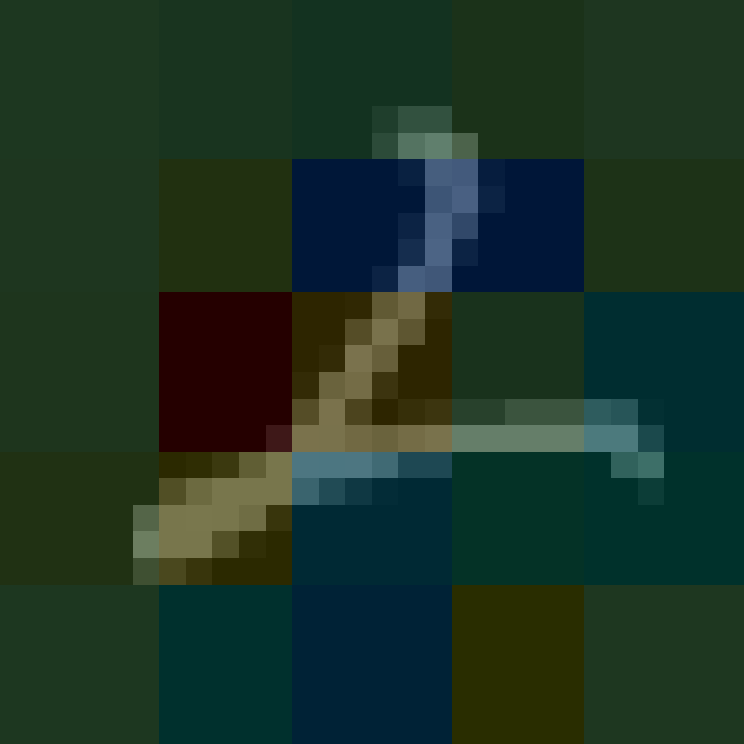}\par $t=100$ &
        \includegraphics[width=.95\linewidth]{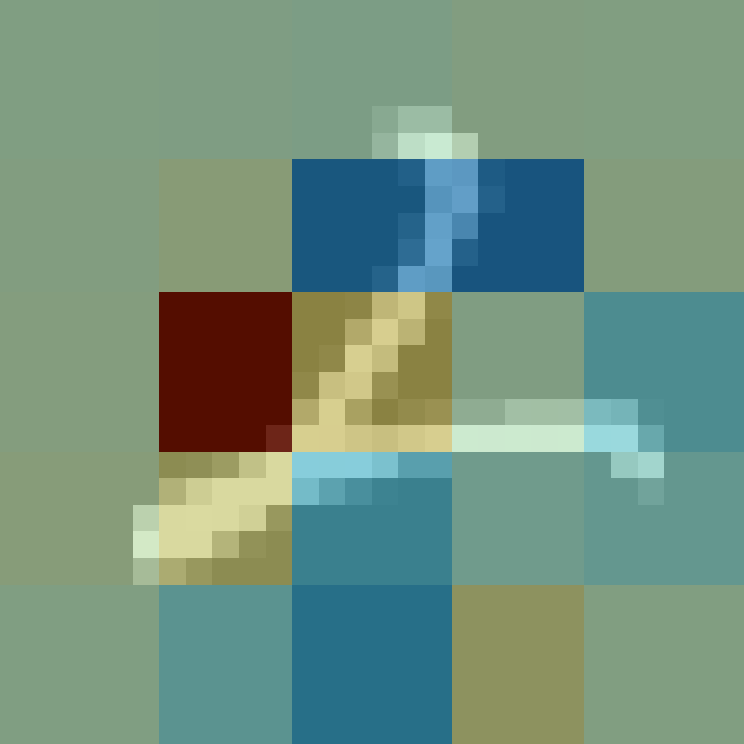}\par $t=629$ ($10\%$ HR) &  
        \includegraphics[width=.95\linewidth]{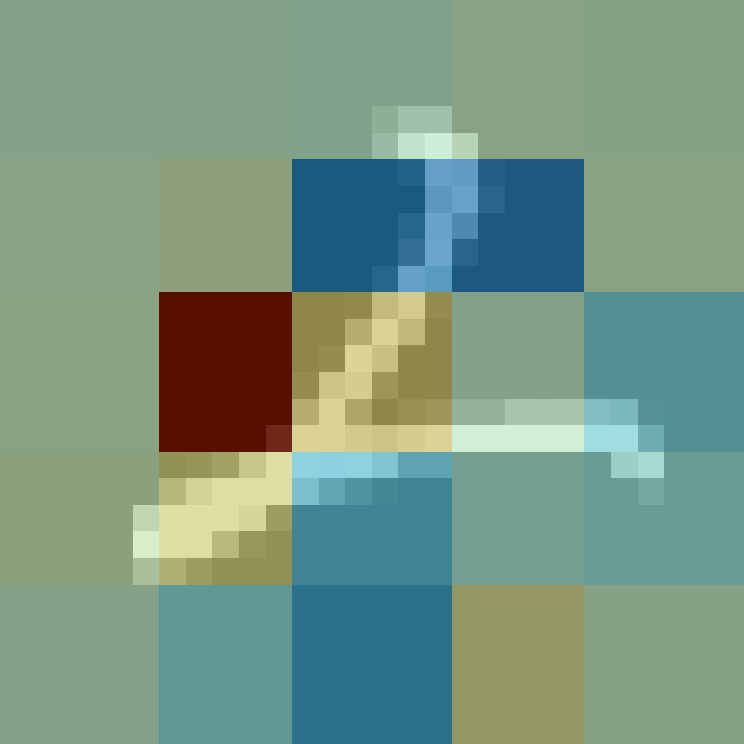}\par $t=8186$ (Exact) & 
        \begin{tikzpicture}
            \begin{axis}[
                width=1.18\linewidth, height=1.18\linewidth,
                xlabel={half-range}, ylabel={midpoint},
                xlabel style={yshift=0pt, font=\footnotesize, inner sep=0pt},
                ylabel style={yshift=-8pt, xshift=3pt, font=\footnotesize, inner sep=0pt},
                tick label style={font=\footnotesize},
                enlargelimits=false, axis on top
            ]
                \addplot graphics [
                    xmin=0.0, xmax=2.7074172496795654, ymin=-2.7074172496795654, ymax=2.7074172496795654,
                ] {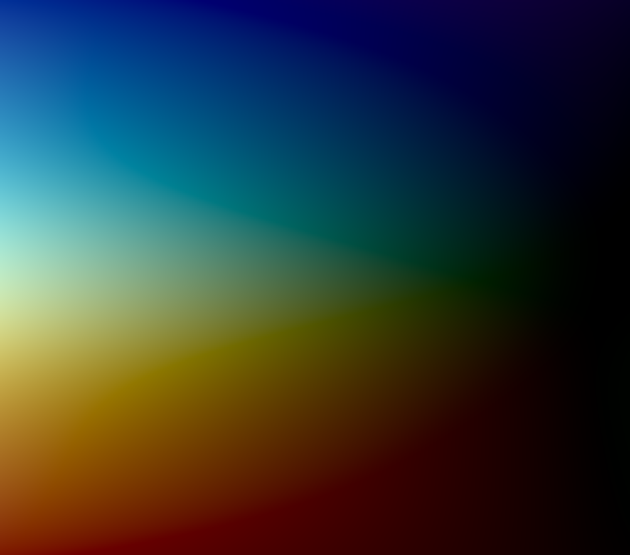};
            \end{axis}
        \end{tikzpicture} \par \\[1.75cm]
        \rotatebox{90}{\hspace{-1.5cm}\FashionMNISTDataset{} \(5\times5\)} &
        \includegraphics[width=.95\linewidth]{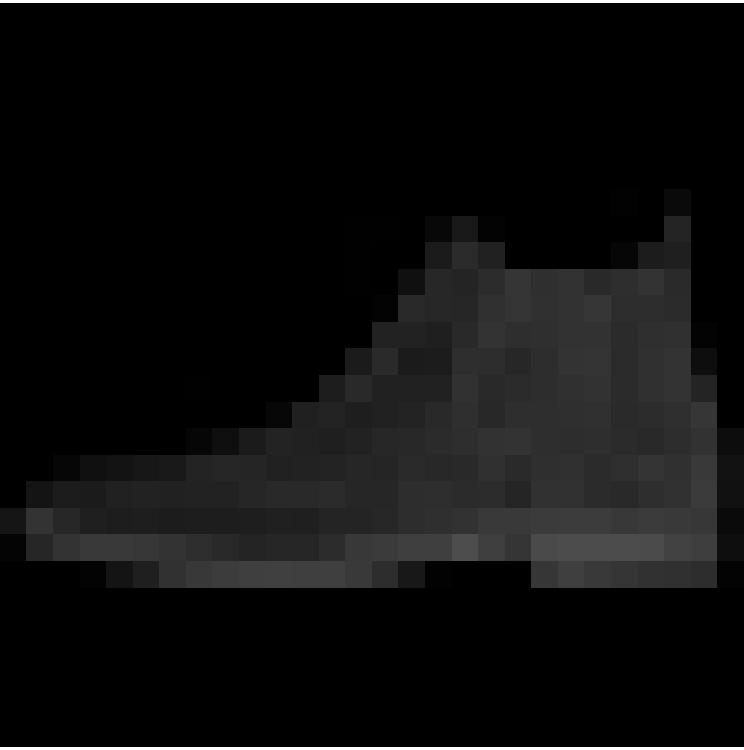}\par $t=1$ &
        \includegraphics[width=.95\linewidth]{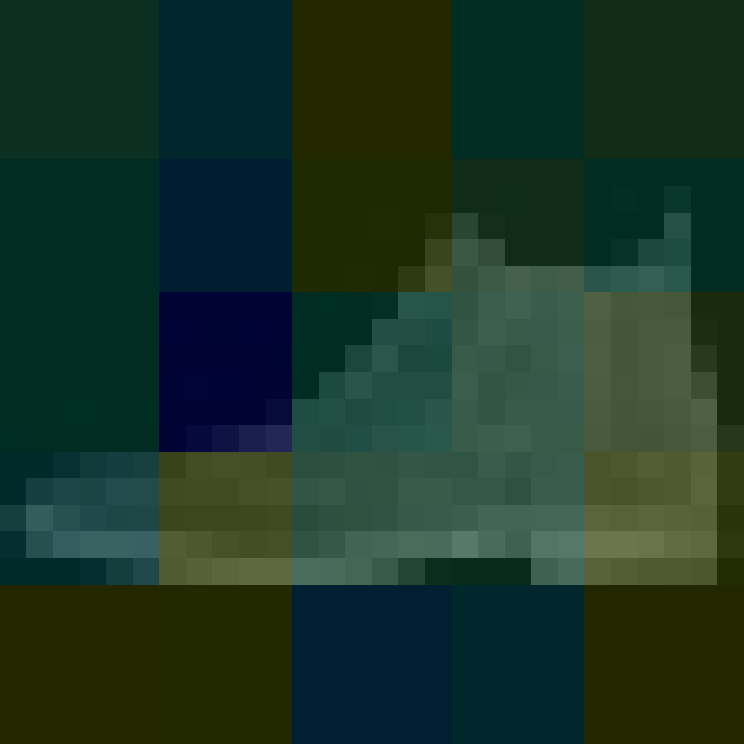}\par $t=5000$ &
        \includegraphics[width=.95\linewidth]{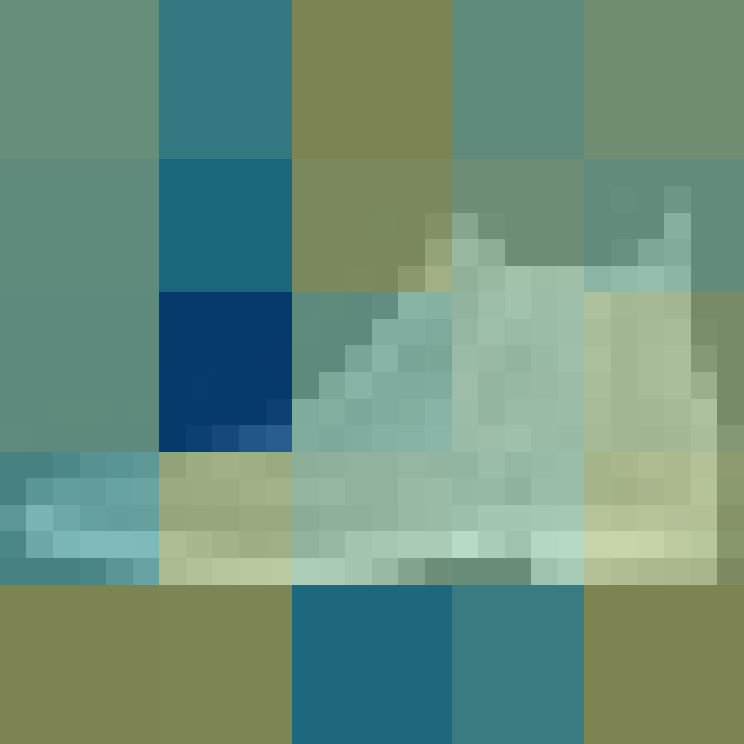}\par $t=7376$ ($10\%$ HR) &
        \includegraphics[width=.95\linewidth]{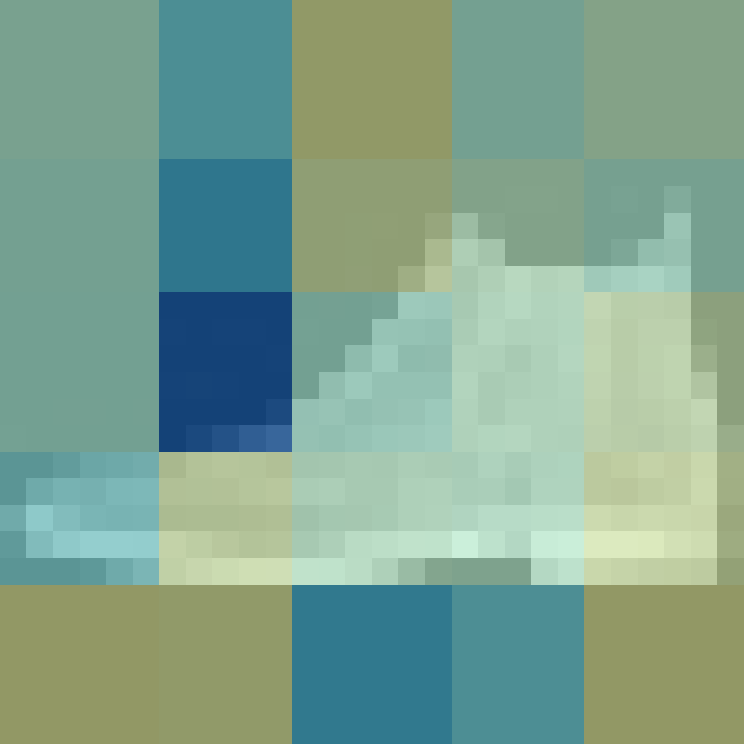}\par $t=8192$ (Exact) &
        \begin{tikzpicture}
            \begin{axis}[
                width=1.18\linewidth, height=1.18\linewidth,
                xlabel={half-range}, ylabel={midpoint},
                xlabel style={yshift=0pt, font=\footnotesize, inner sep=0pt},
                ylabel style={yshift=-8pt, xshift=3pt, font=\footnotesize, inner sep=0pt},
                tick label style={font=\footnotesize},
                enlargelimits=false, axis on top
            ]
                \addplot graphics [
                    xmin=0.0, xmax=11.159149646759033, ymin=-14.091064453125, ymax=14.091064453125,
                ] {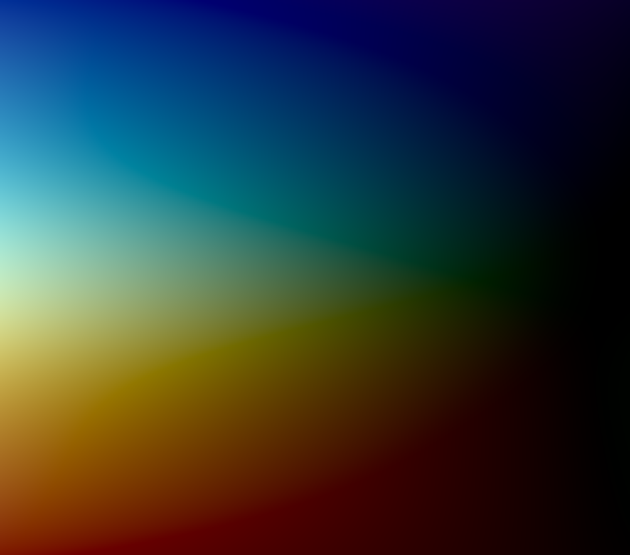};
            \end{axis}
        \end{tikzpicture} \\[1.75cm]
        \rotatebox{90}{\hspace{-1.3cm}\CIFARTenDataset{} \(26\) features} &
        \includegraphics[width=.95\linewidth]{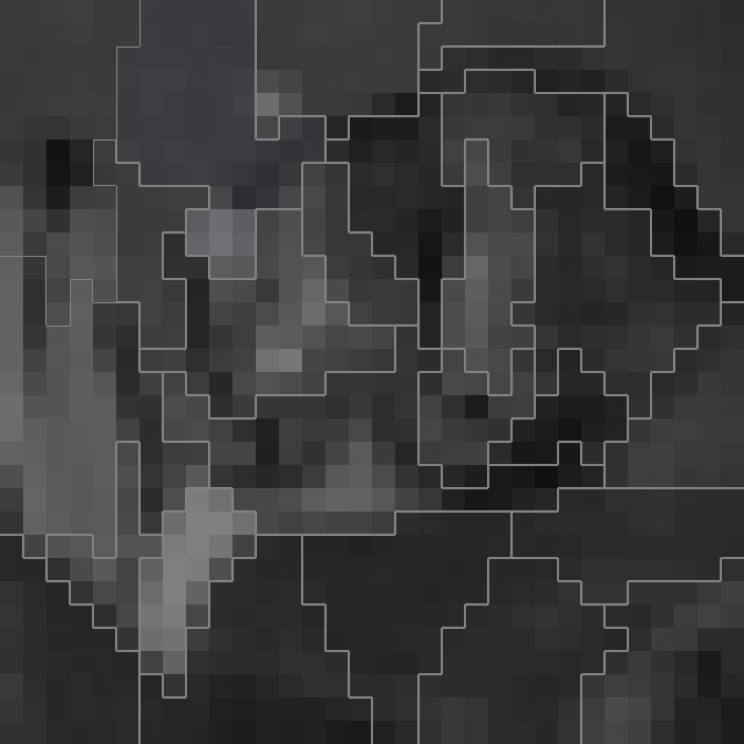}\par $t=1$ &
        \includegraphics[width=.95\linewidth]{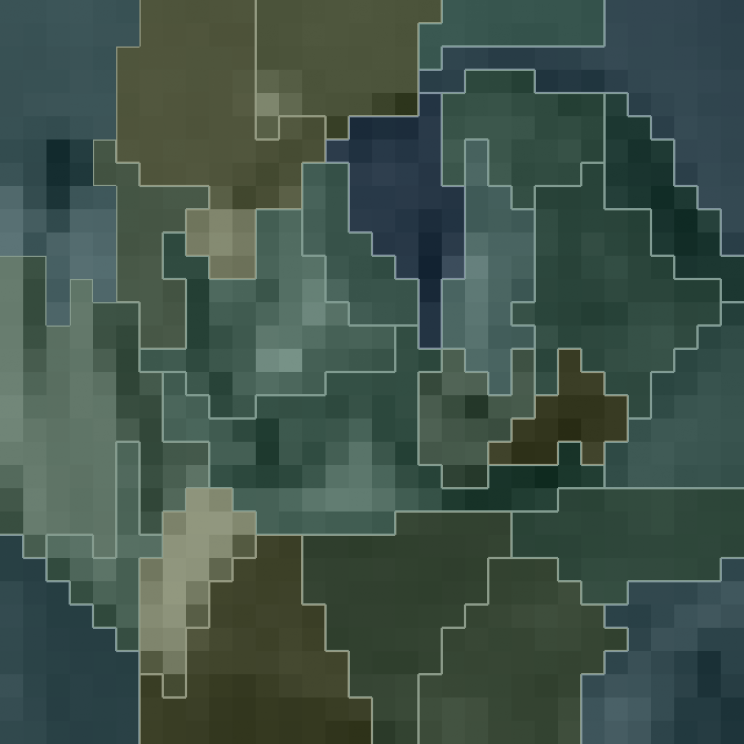}\par $t=65301$ &
        \includegraphics[width=.95\linewidth]{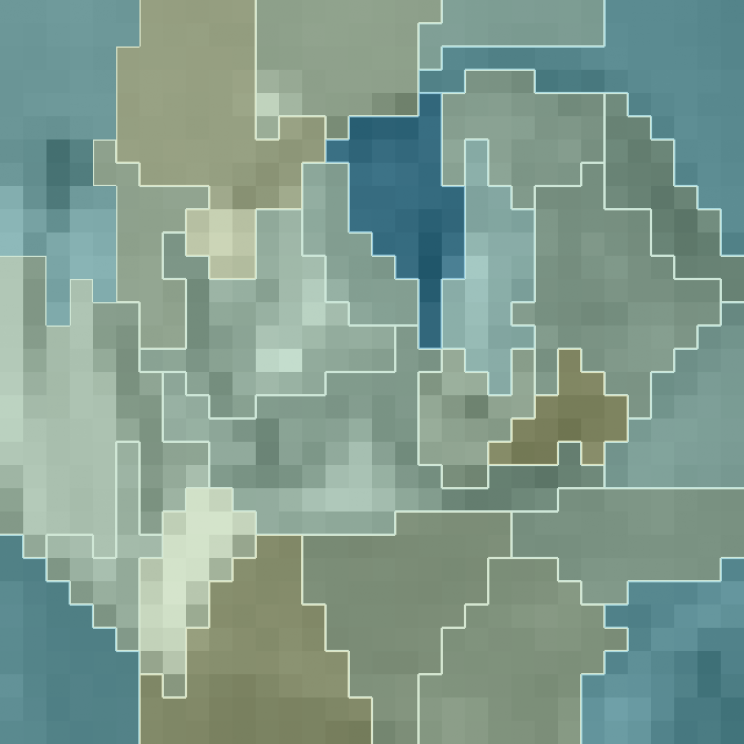}\par $t=65502$ ($10\%$ HR) &
        \includegraphics[width=.95\linewidth]{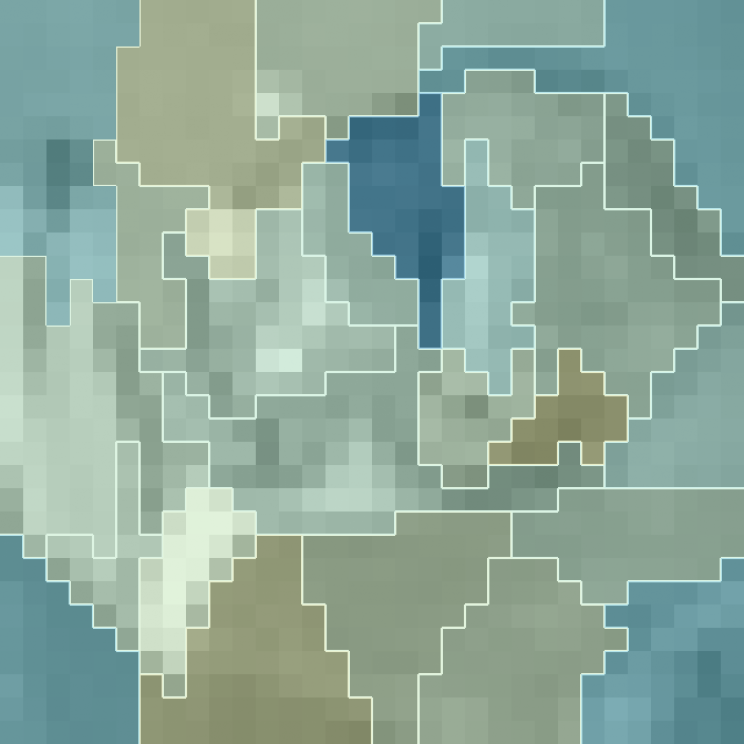}\par $t=65536$ (Exact) &
        \begin{tikzpicture}
            \begin{axis}[
                width=1.18\linewidth, height=1.18\linewidth,
                xlabel={half-range}, ylabel={midpoint},
                xlabel style={yshift=0pt, font=\footnotesize, inner sep=0pt},
                ylabel style={yshift=-8pt, xshift=3pt, font=\footnotesize, inner sep=0pt},
                tick label style={font=\footnotesize},
                enlargelimits=false, axis on top
            ]
                \addplot graphics [
                    xmin=0.0, xmax=1.116895079612732, ymin=-2.233790159225464, ymax=2.233790159225464,
                ] {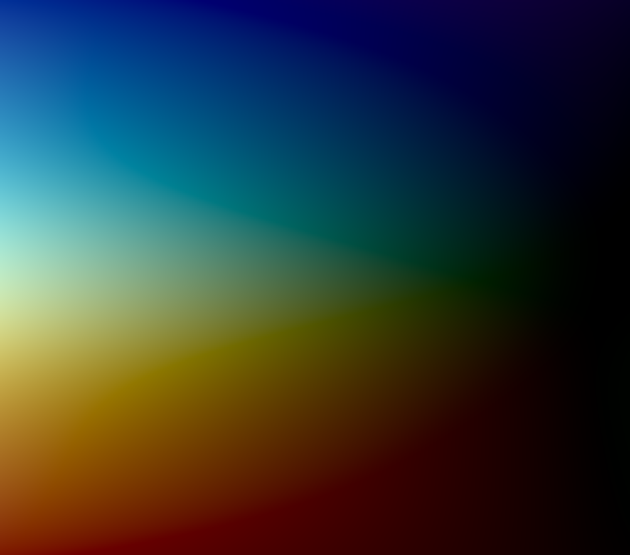};
            \end{axis}
        \end{tikzpicture} \\[1.75cm]
        \rotatebox{90}{\hspace{-1.0cm}\GTSRBDataset{} \(26\) features} &
        \includegraphics[width=.95\linewidth]{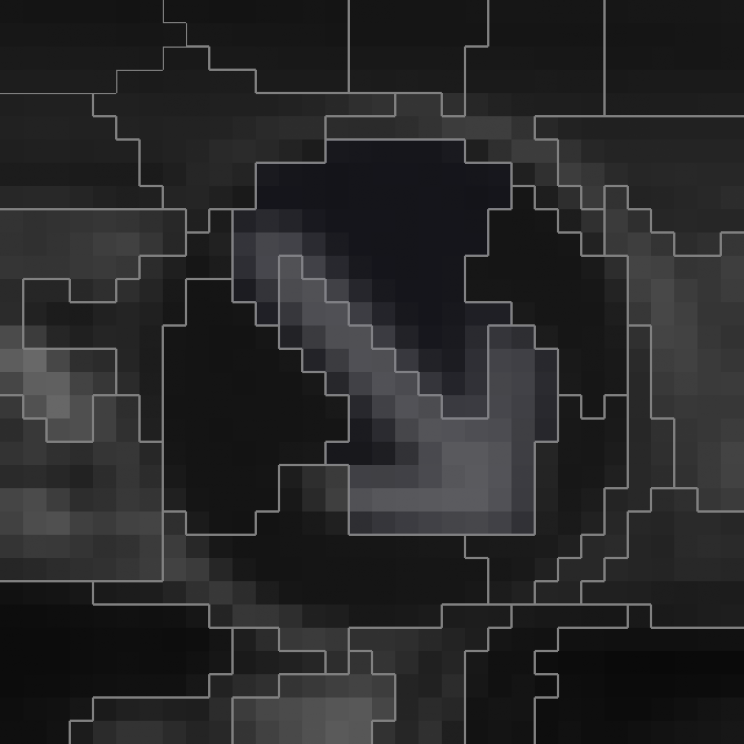}\par $t=1$ &
        \includegraphics[width=.95\linewidth]{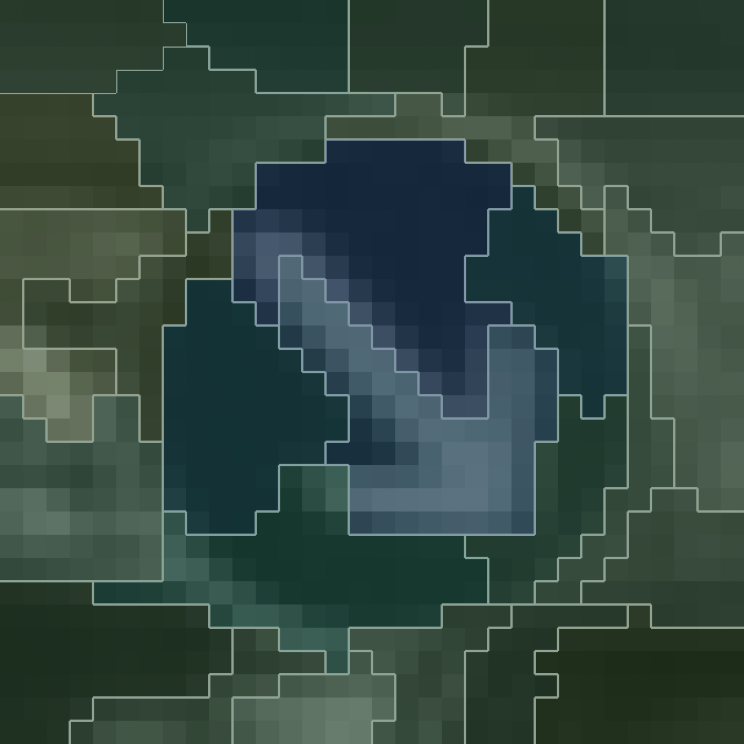}\par $t=63001$ &
        \includegraphics[width=.95\linewidth]{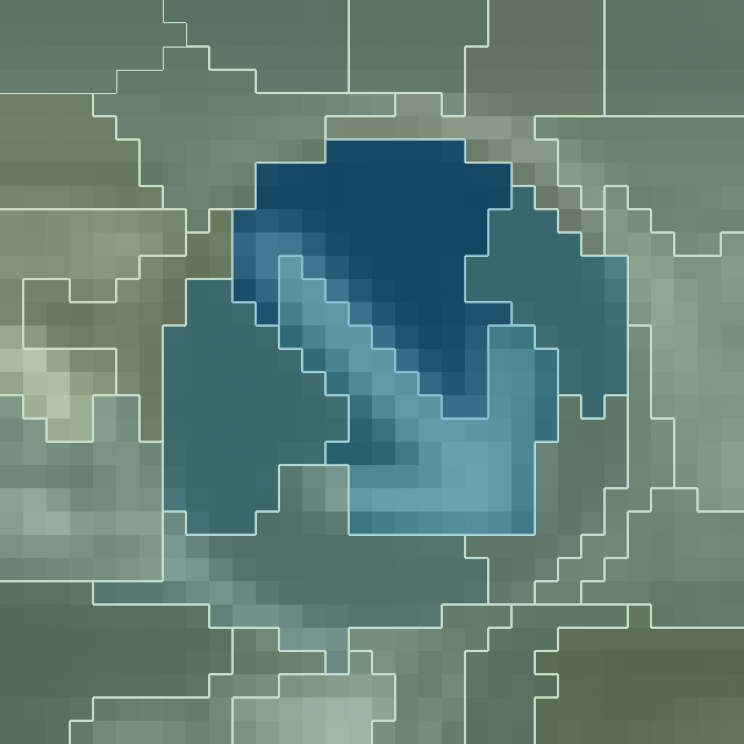}\par $t=64877$ ($10\%$ HR) &
        \includegraphics[width=.95\linewidth]{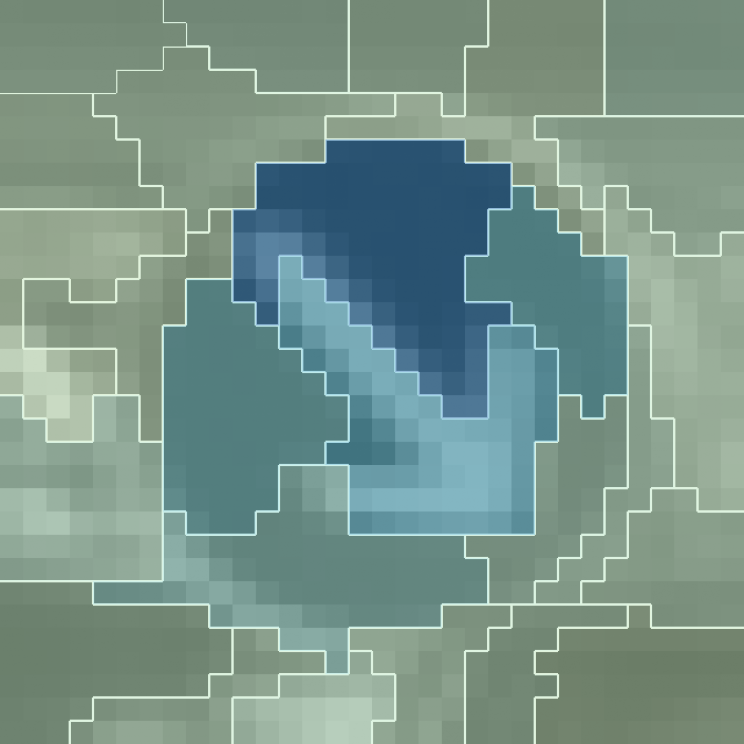}\par $t=65536$ (Exact) &
        \begin{tikzpicture}
            \begin{axis}[
                width=1.18\linewidth, height=1.18\linewidth,
                xlabel={half-range}, ylabel={midpoint},
                xlabel style={yshift=0pt, font=\footnotesize, inner sep=0pt},
                ylabel style={yshift=-8pt, xshift=3pt, font=\footnotesize, inner sep=0pt},
                tick label style={font=\footnotesize},
                enlargelimits=false, axis on top
            ]
                \addplot graphics [
                    xmin=0.0, xmax=24.837082862854004, ymin=-24.837082862854004, ymax=24.837082862854004,
                ] {figures/cifar10_bounds_mean_baseline/replay_colormap};
            \end{axis}
        \end{tikzpicture}
    \end{tabular}
    \caption{\textbf{\OurAlgo{} Results for Mean-Baseline SHAP on \MNISTDataset{}, \FashionMNISTDataset{}, \CIFARTenDataset{}, \& \GTSRBDataset{}.}
        For \MNISTDataset{}, we compute SHAP value bounds for the class \enquote{4} score on a test set image of a \enquote{2}. 
        For \FashionMNISTDataset{}, we attribute the \enquote{Ankle Boot} score on a test set image of an \enquote{Ankle Boot}. 
        For \CIFARTenDataset{}, we attribute the \enquote{Cat} score on a test set image of a \enquote{Cat}. 
        For \GTSRBDataset{}, we attribute the \enquote{Pass on the Right} score on a test set image of a \enquote{Pass on the Right} sign. 
        The SHAP bounds~\(\lb{\shapval}, \ub{\shapval}\) are visualised by their midpoints~\((\ub{\shapval} + \lb{\shapval}) / 2\) and half-ranges (HR)~\((\ub{\shapval} - \lb{\shapval}) / 2\) determining colour hue and lightness, respectively.
        \textbf{Lighter colour means tighter bounds in this figure.}
        We write \enquote{$p\%$ HR} to denote the iteration in which the largest half-range is less than~$p\%$ of the network output for the output to attribute.
    }\label{fig:image-mean-baseline-bounds}
\end{figure*}

\begin{figure*}
    \centering
    \begin{tabular}{@{}cC{.19\linewidth}@{}C{.19\linewidth}@{}C{.19\linewidth}@{}C{.19\linewidth}@{}C{.19\linewidth}@{}}
        & \textbf{First Iteration} & & & \textbf{Exact} & \textbf{Bounds Colourmap} \\
        \rotatebox{90}{\hspace{-1.33cm}\FashionMNISTDataset{} \(6\times6\)} &
        \includegraphics[width=.95\linewidth]{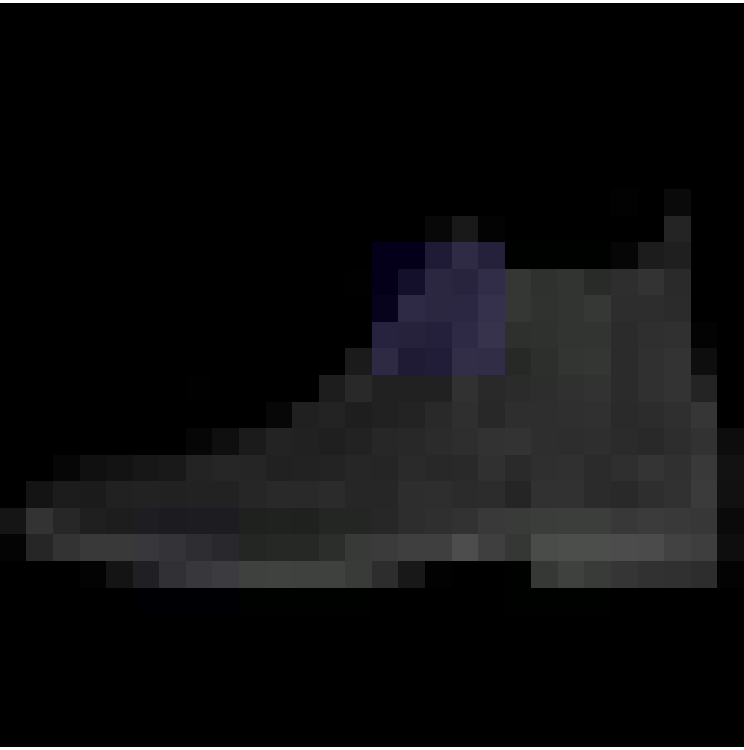}\par $t=1$ &
        \includegraphics[width=.95\linewidth]{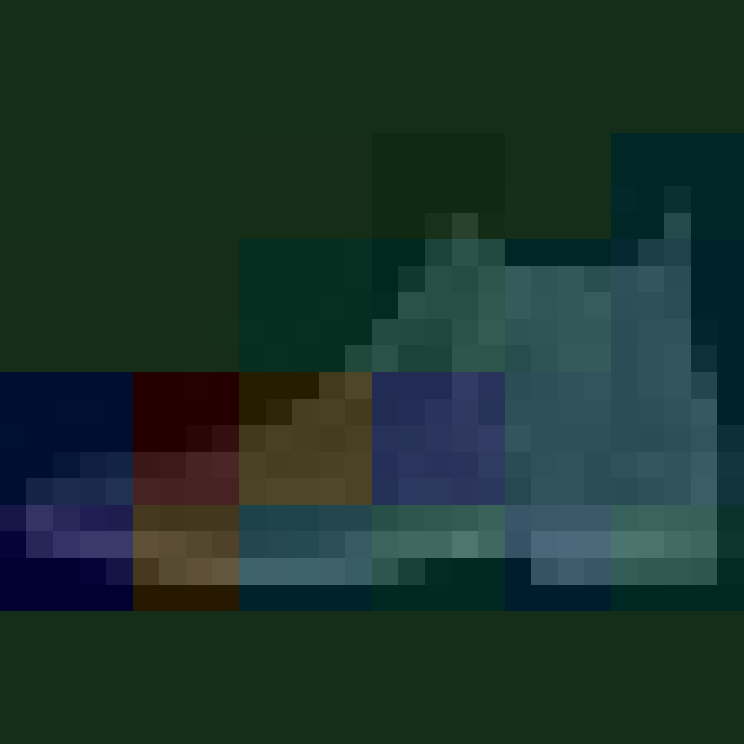}\par $t=49$ &
        \includegraphics[width=.95\linewidth]{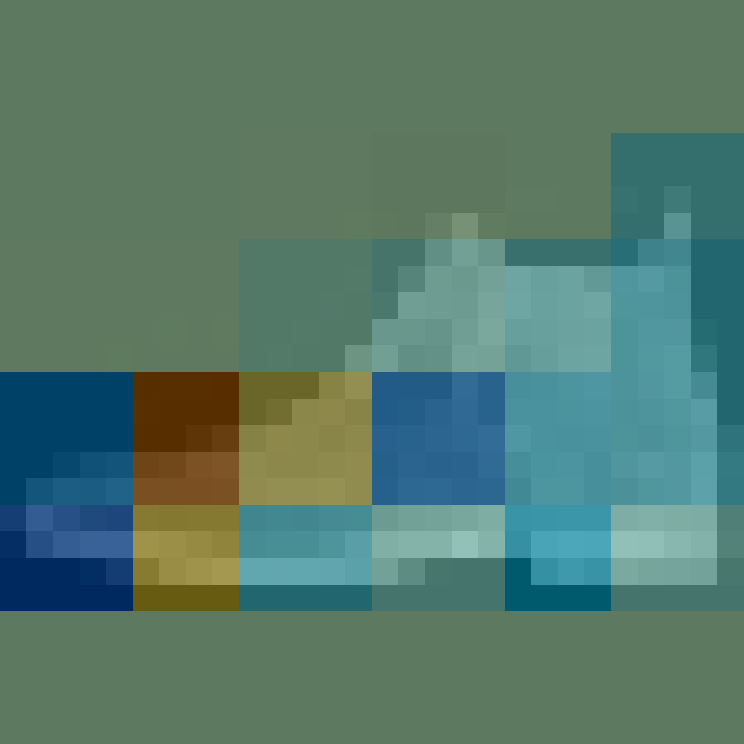}\par $t=59$ ($10\%$ HR) &
        \includegraphics[width=.95\linewidth]{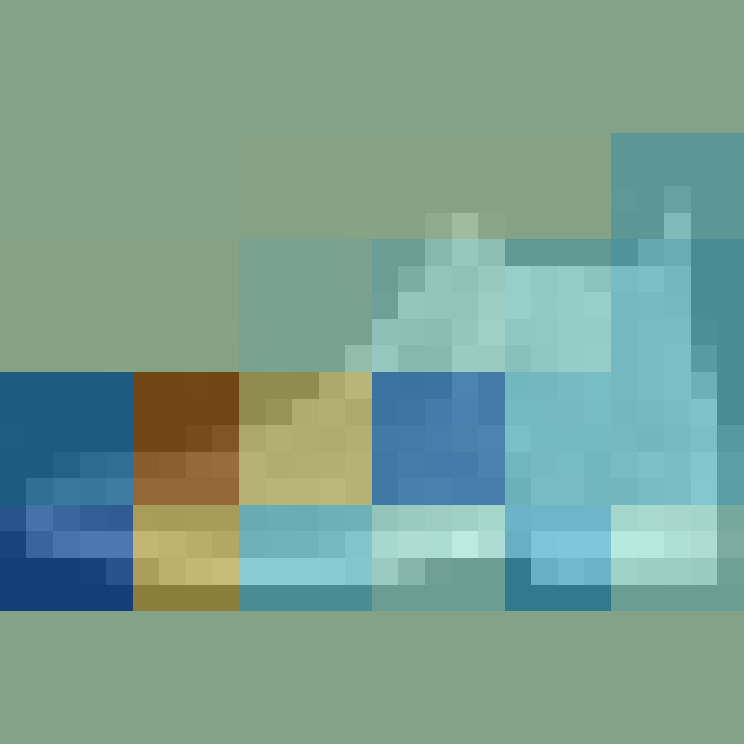}\par $t=1013$ (Exact) &
        \begin{tikzpicture}
            \begin{axis}[
                width=1.18\linewidth, height=1.18\linewidth,
                xlabel={half-range}, ylabel={midpoint},
                xlabel style={yshift=0pt, font=\footnotesize, inner sep=0pt},
                ylabel style={yshift=-8pt, xshift=3pt, font=\footnotesize, inner sep=0pt},
                tick label style={font=\footnotesize},
                enlargelimits=false, axis on top
            ]
                \addplot graphics [
                    xmin=0.0, xmax=5.002566576004028, ymin=-6.1356201171875, ymax=6.1356201171875,
                ] {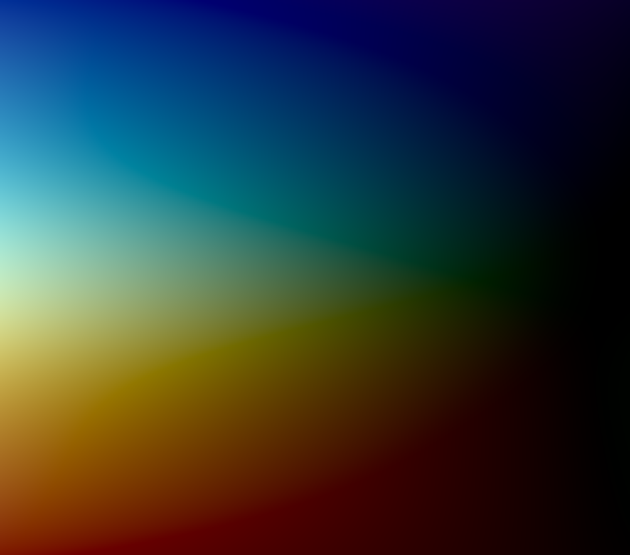};
            \end{axis}
        \end{tikzpicture} 
    \end{tabular}
    \caption{\textbf{\OurAlgo{} Results for Zero-Baseline SHAP on \FashionMNISTDataset{}.}
        We attribute the \enquote{Ankle Boot} score on a test set image of an \enquote{Ankle Boot}. 
        The SHAP bounds~\(\lb{\shapval}, \ub{\shapval}\) are visualised by their midpoints~\((\ub{\shapval} + \lb{\shapval}) / 2\) and half-ranges (HR)~\((\ub{\shapval} - \lb{\shapval}) / 2\) determining colour hue and lightness, respectively.
        \textbf{Lighter colour means tighter bounds in this figure.}
        We write \enquote{$p\%$ HR} to denote the iteration in which the largest half-range is less than~$p\%$ of the network output for the output to attribute.
    }\label{fig:image-zero-baseline-bounds}
\end{figure*}

\subsection{Additional Details on the Comparison to \ExactSHAP{}}\label{appendix:experiments-exactshap-additional}
\Cref{tab:ours-vs-exactshap-extended} complements \cref{tab:ours-vs-exactshap} by providing the interquartile ranges (IQRs) of the runtimes of \OurAlgo{} and \ExactSHAP{} across the~\(10\) test set samples we evaluate the algorithms on in \cref{sec:experiments-compare-to-exactshap}.
For up to~\(23\) input dimensions, the IQRs are within~\(2\)s of the median runtime, with the exception of the \DefaultDataset{} dataset, where the IQR for \OurAlgo{} computing \enquote{10\% HR} bounds is~\(8\)s.
For larger input spaces, the IQRs reach up to~\(191\)s for the \AnnealingDataset{} dataset.

\begin{table}
  \centering
  \caption{%
    \textbf{Extended Version of \cref{tab:ours-vs-exactshap}: \OurAlgo{} vs. \ExactSHAP{}.}
    In addition to the median runtimes provided in \cref{tab:ours-vs-exactshap}, this table also provides the interquartile ranges (\enquote{IQR}) of the runtimes. 
  }\label{tab:ours-vs-exactshap-extended}%
  \begin{tabular}{@{}llllll@{}}
     &  & & \multicolumn{3}{c}{\textbf{\OurAlgo{} (Ours)}} \\\cmidrule(lr){4-6}
     &  & \multicolumn{1}{c}{\textbf{\textsc{Exact SHAP}}}  & \multicolumn{1}{c}{$10$\% HR} & \multicolumn{1}{c}{$1$\% HR} & \multicolumn{1}{c}{Exact} \\
    $|\powerset{\upto{n}}|$ & \textbf{Dataset} & median (IQR) & median (IQR) & median (IQR) & median (IQR) \\
    \midrule
    $2^{16}{\scriptstyle>\!10^4}$ & \ObesityDataset{}       & $4$s ($4$s--$4$s) & $ 18$s ($18$s--$18$s) & $ 18$s ($18$s--$19$s) & $ 18$s ($18$s--$19$s) \\
    $2^{20}{\scriptstyle>\!10^6}$ & \GermanDataset{}        & $9$s ($9$s--$9$s) & $ 16$s ($16$s--$17$s) & $ 19$s ($17$s--$19$s) & $ 20$s ($19$s--$20$s) \\
    $2^{22}{\scriptstyle>\!10^6}$ & \MushroomDataset{}      &   --              & $ 17$s ($16$s--$17$s) & $ 20$s ($20$s--$32$s) & $ 25$s ($25$s--$25$s) \\
    $2^{23}{\scriptstyle>\!10^6}$ & \DefaultDataset{}       &   --              & $127$s ($123$s--$130$s) & $132$s ($131$s--$133$s) & $133$s ($132$s--$133$s) \\
    $2^{25}{\scriptstyle>\!10^7}$ & \AutomobileDataset{}    &   --              & $ 81$s ($73$s--$85$s) & $213$s ($187$s--$233$s) & $316$s ($314$s--$317$s) \\
    $2^{27}{\scriptstyle>\!10^8}$ & \SteelDataset{}         &   --              & $ 37$s ($30$s--$44$s) & $322$s ($271$s--$374$s) &           --  \\
    $2^{30}{\scriptstyle>\!10^9}$ & \BreastCancerDataset{}  &   --              & $ 49$s ($35$s--$67$s) & $322$s ($271$s--$564$s) &    --    \\
    $2^{38}{\scriptstyle>\!10^{11}}$ & \AnnealingDataset{}  &   --              & $269$s ($124$s--$315$s) &    --   &    --   \\
    $2^{60}{\scriptstyle>\!10^{18}}$ & \SonarDataset{}      &   --              & $ 13$s ($13$s--$13$s)  &    --  &    --  
  \end{tabular}%
  \vspace*{-.1cm}
\end{table}

\subsection{Comparison to Additional Statistical Estimators}\label{appendix:experiments-additional-estimators}
\begin{figure*}
  \centering
  \tikzexternaldisable%
  \begin{tikzpicture}
    \begin{groupplot}[
        group style={
            group size=3 by 4,
            xlabels at=edge bottom,
            ylabels at=edge left,
            xticklabels at=edge bottom,
            vertical sep=.75cm,
        },
        height=4.5cm, width=.33\textwidth,
        xlabel={Sample Size},
        ylabel style={align=center},
    ]
      \nextgroupplot[
        xmode=log, ymode=log, 
        title=\MushroomDataset, 
        ylabel={\KernelSHAP{}\\ Max. Error},
        xmin=1000,xmax=10000000, ymin=0.0002, ymax=0.1,
      ]
      \addplot[name path=Zero, forget plot, domain=100:10000000, samples=2] {0.000000001};
      \addplot[name path=OursHigh, FirstSHAPLine, forget plot, domain=100:10000000, samples=2] {0.009887456893920898};  
      \addplot[name path=OursLow, FirstSHAPLine, forget plot, domain=100:10000000, samples=2] {0.0009784698486328125};  
      \addplot[FirstSHAPArea, opacity=0.3, forget plot] fill between [of=Zero and OursHigh];
      \addplot[FirstSHAPArea, opacity=0.3] fill between [of=Zero and OursLow];
      
      \addplot[name path=KernelSHAPMin, SecondSHAPLine, forget plot] table [col sep=comma,x=num_samples,y=KernelSHAP_min_error] {data/bab_vs_estimators/mushroom-mlp-8x1_linf_0.csv};
      \addplot[name path=KernelSHAPMax, SecondSHAPLine, forget plot] table [col sep=comma,x=num_samples,y=KernelSHAP_max_error] {data/bab_vs_estimators/mushroom-mlp-8x1_linf_0.csv};
      \addplot[SecondSHAPArea] fill between [of=KernelSHAPMin and KernelSHAPMax];
      
      \node[FirstTimeMarker] at (axis cs:2500000,0.009887456893920898) {21s};
      \node[FirstTimeMarker] at (axis cs:2500000,0.0009784698486328125) {23s};
      
      \nextgroupplot[
        xmode=log, ymode=log, 
        title=\DefaultDataset,
        xmin=1000,xmax=2000000, ymin=0.002, ymax=0.2,
      ]
      \addplot[name path=Zero, forget plot, domain=100:10000000, samples=2] {0.000000001};
      \addplot[name path=OursHigh, FirstSHAPLine, forget plot, domain=100:10000000, samples=2] {0.019828736782073975};  
      \addplot[name path=OursLow, FirstSHAPLine, forget plot, domain=100:10000000, samples=2] {0.006113827228546143};  
      \addplot[FirstSHAPArea, opacity=0.3, forget plot] fill between [of=Zero and OursHigh];
      \addplot[FirstSHAPArea, opacity=0.3] fill between [of=Zero and OursLow];

      \addplot[name path=KernelSHAPMin, SecondSHAPLine, forget plot] table [col sep=comma,x=num_samples,y=KernelSHAP_min_error] {data/bab_vs_estimators/default-mlp-64x3_linf_0.csv};
      \addplot[name path=KernelSHAPMax, SecondSHAPLine, forget plot] table [col sep=comma,x=num_samples,y=KernelSHAP_max_error] {data/bab_vs_estimators/default-mlp-64x3_linf_0.csv};
      \addplot[SecondSHAPArea] fill between [of=KernelSHAPMin and KernelSHAPMax];

      \node[FirstTimeMarker] at (axis cs:600000,0.019828736782073975) {132s};
      \node[FirstTimeMarker] at (axis cs:600000,0.006113827228546143) {133s};
      
      \nextgroupplot[
        xmode=log, ymode=log, 
        title=\AutomobileDataset{},
        xmin=1000,xmax=6000000, ymin=0.00008, ymax=0.01,
      ]
      \addplot[name path=Zero, forget plot, domain=100:10000000, samples=2] {0.000000001};
      \addplot[name path=OursHigh, FirstSHAPLine, forget plot, domain=100:10000000, samples=2] {0.001021549105644226};  
      \addplot[name path=OursLow, FirstSHAPLine, forget plot, domain=100:10000000, samples=2] {0.00018627941608428955};  
      \addplot[FirstSHAPArea, opacity=0.3, forget plot] fill between [of=Zero and OursHigh];
      \addplot[FirstSHAPArea, opacity=0.3] fill between [of=Zero and OursLow];

      \addplot[name path=KernelSHAPMin, SecondSHAPLine, forget plot] table [col sep=comma,x=num_samples,y=KernelSHAP_min_error] {data/bab_vs_estimators/automobile-mlp-32x2_linf_0.csv};
      \addplot[name path=KernelSHAPMax, SecondSHAPLine, forget plot] table [col sep=comma,x=num_samples,y=KernelSHAP_max_error] {data/bab_vs_estimators/automobile-mlp-32x2_linf_0.csv};
      \addplot[SecondSHAPArea] fill between [of=KernelSHAPMin and KernelSHAPMax];

      \node[FirstTimeMarker] at (axis cs:1600000,0.001021549105644226) {304s};
      \node[FirstTimeMarker] at (axis cs:1600000,0.00018627941608428955) {313s};

      \nextgroupplot[
        xmode=log, ymode=log, 
        ylabel={\LeverageSHAP{}\\ Max. Error},
        xmin=1000,xmax=10000000, ymin=0.0002, ymax=0.1,
      ]
      \addplot[name path=Zero, forget plot, domain=100:10000000, samples=2] {0.000000001};
      \addplot[name path=OursHigh, FirstSHAPLine, forget plot, domain=100:10000000, samples=2] {0.009887456893920898};  
      \addplot[name path=OursLow, FirstSHAPLine, forget plot, domain=100:10000000, samples=2] {0.0009784698486328125};  
      \addplot[FirstSHAPArea, opacity=0.3, forget plot] fill between [of=Zero and OursHigh];
      \addplot[FirstSHAPArea, opacity=0.3] fill between [of=Zero and OursLow];
      
      \addplot[name path=LeverageSHAPMin, FourthSHAPLine, forget plot] table [col sep=comma,x=num_samples,y=LeverageSHAP_min_error] {data/bab_vs_estimators/mushroom-mlp-8x1_linf_0.csv};
      \addplot[name path=LeverageSHAPMax, FourthSHAPLine, forget plot] table [col sep=comma,x=num_samples,y=LeverageSHAP_max_error] {data/bab_vs_estimators/mushroom-mlp-8x1_linf_0.csv};
      \addplot[FourthSHAPArea] fill between [of=LeverageSHAPMin and LeverageSHAPMax];
      
      \node[FirstTimeMarker] at (axis cs:2500000,0.009887456893920898) {21s};
      \node[FirstTimeMarker] at (axis cs:2500000,0.0009784698486328125) {23s};
      
      \nextgroupplot[
        xmode=log, ymode=log, 
        xmin=1000,xmax=2000000, ymin=0.002, ymax=0.2,
      ]
      \addplot[name path=Zero, forget plot, domain=100:10000000, samples=2] {0.000000001};
      \addplot[name path=OursHigh, FirstSHAPLine, forget plot, domain=100:10000000, samples=2] {0.019828736782073975};  
      \addplot[name path=OursLow, FirstSHAPLine, forget plot, domain=100:10000000, samples=2] {0.006113827228546143};  
      \addplot[FirstSHAPArea, opacity=0.3, forget plot] fill between [of=Zero and OursHigh];
      \addplot[FirstSHAPArea, opacity=0.3] fill between [of=Zero and OursLow];

      \addplot[name path=LeverageSHAPMin, FourthSHAPLine, forget plot] table [col sep=comma,x=num_samples,y=LeverageSHAP_min_error] {data/bab_vs_estimators/default-mlp-64x3_linf_0.csv};
      \addplot[name path=LeverageSHAPMax, FourthSHAPLine, forget plot] table [col sep=comma,x=num_samples,y=LeverageSHAP_max_error] {data/bab_vs_estimators/default-mlp-64x3_linf_0.csv};
      \addplot[FourthSHAPArea] fill between [of=LeverageSHAPMin and LeverageSHAPMax];

      \node[FirstTimeMarker] at (axis cs:600000,0.019828736782073975) {132s};
      \node[FirstTimeMarker] at (axis cs:600000,0.006113827228546143) {133s};
      
      \nextgroupplot[
        xmode=log, ymode=log, 
        xmin=1000,xmax=6000000, ymin=0.00008, ymax=0.01,
      ]
      \addplot[name path=Zero, forget plot, domain=100:10000000, samples=2] {0.000000001};
      \addplot[name path=OursHigh, FirstSHAPLine, forget plot, domain=100:10000000, samples=2] {0.001021549105644226};  
      \addplot[name path=OursLow, FirstSHAPLine, forget plot, domain=100:10000000, samples=2] {0.00018627941608428955};  
      \addplot[FirstSHAPArea, opacity=0.3, forget plot] fill between [of=Zero and OursHigh];
      \addplot[FirstSHAPArea, opacity=0.3] fill between [of=Zero and OursLow];

      \addplot[name path=LeverageSHAPMin, FourthSHAPLine, forget plot] table [col sep=comma,x=num_samples,y=LeverageSHAP_min_error] {data/bab_vs_estimators/automobile-mlp-32x2_linf_0.csv};
      \addplot[name path=LeverageSHAPMax, FourthSHAPLine, forget plot] table [col sep=comma,x=num_samples,y=LeverageSHAP_max_error] {data/bab_vs_estimators/automobile-mlp-32x2_linf_0.csv};
      \addplot[FourthSHAPArea] fill between [of=LeverageSHAPMin and LeverageSHAPMax];

      \node[FirstTimeMarker] at (axis cs:1600000,0.001021549105644226) {304s};
      \node[FirstTimeMarker] at (axis cs:1600000,0.00018627941608428955) {313s};

      \nextgroupplot[
        xmode=log, ymode=log, 
        ylabel={\LinearMSR{}\\ Max. Error},
        xmin=1000,xmax=10000000, ymin=0.0002, ymax=0.1,
      ]
      \addplot[name path=Zero, forget plot, domain=100:10000000, samples=2] {0.000000001};
      \addplot[name path=OursHigh, FirstSHAPLine, forget plot, domain=100:10000000, samples=2] {0.009887456893920898};  
      \addplot[name path=OursLow, FirstSHAPLine, forget plot, domain=100:10000000, samples=2] {0.0009784698486328125};  
      \addplot[FirstSHAPArea, opacity=0.3, forget plot] fill between [of=Zero and OursHigh];
      \addplot[FirstSHAPArea, opacity=0.3] fill between [of=Zero and OursLow];
      
      \addplot[name path=LinearMSRMin, FifthSHAPLine, forget plot] table [col sep=comma,x=num_samples,y=LinearMSR_min_error] {data/bab_vs_estimators/mushroom-mlp-8x1_linf_0.csv};
      \addplot[name path=LinearMSRMax, FifthSHAPLine, forget plot] table [col sep=comma,x=num_samples,y=LinearMSR_max_error] {data/bab_vs_estimators/mushroom-mlp-8x1_linf_0.csv};
      \addplot[FifthSHAPArea] fill between [of=LinearMSRMin and LinearMSRMax];
      
      \node[FirstTimeMarker] at (axis cs:2500000,0.009887456893920898) {21s};
      \node[FirstTimeMarker] at (axis cs:2500000,0.0009784698486328125) {23s};
      
      \nextgroupplot[
        xmode=log, ymode=log, 
        xmin=1000,xmax=2000000, ymin=0.002, ymax=0.2,
      ]
      \addplot[name path=Zero, forget plot, domain=100:10000000, samples=2] {0.000000001};
      \addplot[name path=OursHigh, FirstSHAPLine, forget plot, domain=100:10000000, samples=2] {0.019828736782073975};  
      \addplot[name path=OursLow, FirstSHAPLine, forget plot, domain=100:10000000, samples=2] {0.006113827228546143};  
      \addplot[FirstSHAPArea, opacity=0.3, forget plot] fill between [of=Zero and OursHigh];
      \addplot[FirstSHAPArea, opacity=0.3] fill between [of=Zero and OursLow];

      \addplot[name path=LinearMSRMin, FifthSHAPLine, forget plot] table [col sep=comma,x=num_samples,y=LinearMSR_min_error] {data/bab_vs_estimators/default-mlp-64x3_linf_0.csv};
      \addplot[name path=LinearMSRMax, FifthSHAPLine, forget plot] table [col sep=comma,x=num_samples,y=LinearMSR_max_error] {data/bab_vs_estimators/default-mlp-64x3_linf_0.csv};
      \addplot[FifthSHAPArea] fill between [of=LinearMSRMin and LinearMSRMax];

      \node[FirstTimeMarker] at (axis cs:600000,0.019828736782073975) {132s};
      \node[FirstTimeMarker] at (axis cs:600000,0.006113827228546143) {133s};
      
      \nextgroupplot[
        xmode=log, ymode=log, 
        xmin=1000,xmax=6000000, ymin=0.00008, ymax=0.01,
      ]
      \addplot[name path=Zero, forget plot, domain=100:10000000, samples=2] {0.000000001};
      \addplot[name path=OursHigh, FirstSHAPLine, forget plot, domain=100:10000000, samples=2] {0.001021549105644226};  
      \addplot[name path=OursLow, FirstSHAPLine, forget plot, domain=100:10000000, samples=2] {0.00018627941608428955};  
      \addplot[FirstSHAPArea, opacity=0.3, forget plot] fill between [of=Zero and OursHigh];
      \addplot[FirstSHAPArea, opacity=0.3] fill between [of=Zero and OursLow];

      \addplot[name path=LinearMSRMin, FifthSHAPLine, forget plot] table [col sep=comma,x=num_samples,y=LinearMSR_min_error] {data/bab_vs_estimators/automobile-mlp-32x2_linf_0.csv};
      \addplot[name path=LinearMSRMax, FifthSHAPLine, forget plot] table [col sep=comma,x=num_samples,y=LinearMSR_max_error] {data/bab_vs_estimators/automobile-mlp-32x2_linf_0.csv};
      \addplot[FifthSHAPArea] fill between [of=LinearMSRMin and LinearMSRMax];

      \node[FirstTimeMarker] at (axis cs:1600000,0.001021549105644226) {304s};
      \node[FirstTimeMarker] at (axis cs:1600000,0.00018627941608428955) {313s};

      \nextgroupplot[
        xmode=log, ymode=log, 
        ylabel={\TreeMSR{}\\ Max. Error},
        xmin=1000,xmax=10000000, ymin=0.0002, ymax=0.1,
      ]
      \addplot[name path=Zero, forget plot, domain=100:10000000, samples=2] {0.000000001};
      \addplot[name path=OursHigh, FirstSHAPLine, forget plot, domain=100:10000000, samples=2] {0.009887456893920898};  
      \addplot[name path=OursLow, FirstSHAPLine, forget plot, domain=100:10000000, samples=2] {0.0009784698486328125};  
      \addplot[FirstSHAPArea, opacity=0.3, forget plot] fill between [of=Zero and OursHigh];
      \addplot[FirstSHAPArea, opacity=0.3] fill between [of=Zero and OursLow];
      
      \addplot[name path=TreeMSRMin, ThirdSHAPLine, forget plot] table [col sep=comma,x=num_samples,y=TreeMSR_min_error] {data/bab_vs_estimators/mushroom-mlp-8x1_linf_0.csv};
      \addplot[name path=TreeMSRMax, ThirdSHAPLine, forget plot] table [col sep=comma,x=num_samples,y=TreeMSR_max_error] {data/bab_vs_estimators/mushroom-mlp-8x1_linf_0.csv};
      \addplot[ThirdSHAPArea] fill between [of=TreeMSRMin and TreeMSRMax];
      
      \node[FirstTimeMarker] at (axis cs:2500000,0.009887456893920898) {21s};
      \node[FirstTimeMarker] at (axis cs:2500000,0.0009784698486328125) {23s};
      
      \nextgroupplot[
        xmode=log, ymode=log, 
        xmin=1000,xmax=2000000, ymin=0.002, ymax=0.2,
      ]
      \addplot[name path=Zero, forget plot, domain=100:10000000, samples=2] {0.000000001};
      \addplot[name path=OursHigh, FirstSHAPLine, forget plot, domain=100:10000000, samples=2] {0.019828736782073975};  
      \addplot[name path=OursLow, FirstSHAPLine, forget plot, domain=100:10000000, samples=2] {0.006113827228546143};  
      \addplot[FirstSHAPArea, opacity=0.3, forget plot] fill between [of=Zero and OursHigh];
      \addplot[FirstSHAPArea, opacity=0.3] fill between [of=Zero and OursLow];

      \addplot[name path=TreeMSRMin, ThirdSHAPLine, forget plot] table [col sep=comma,x=num_samples,y=TreeMSR_min_error] {data/bab_vs_estimators/default-mlp-64x3_linf_0.csv};
      \addplot[name path=TreeMSRMax, ThirdSHAPLine, forget plot] table [col sep=comma,x=num_samples,y=TreeMSR_max_error] {data/bab_vs_estimators/default-mlp-64x3_linf_0.csv};
      \addplot[ThirdSHAPArea] fill between [of=TreeMSRMin and TreeMSRMax];

      \node[FirstTimeMarker] at (axis cs:600000,0.019828736782073975) {132s};
      \node[FirstTimeMarker] at (axis cs:600000,0.006113827228546143) {133s};
      
      \nextgroupplot[
        xmode=log, ymode=log, 
        xmin=1000,xmax=6000000, ymin=0.00008, ymax=0.01,
      ]
      \addplot[name path=Zero, forget plot, domain=100:10000000, samples=2] {0.000000001};
      \addplot[name path=OursHigh, FirstSHAPLine, forget plot, domain=100:10000000, samples=2] {0.001021549105644226};  
      \addplot[name path=OursLow, FirstSHAPLine, forget plot, domain=100:10000000, samples=2] {0.00018627941608428955};  
      \addplot[FirstSHAPArea, opacity=0.3, forget plot] fill between [of=Zero and OursHigh];
      \addplot[FirstSHAPArea, opacity=0.3] fill between [of=Zero and OursLow];

      \addplot[name path=TreeMSRMin, ThirdSHAPLine, forget plot] table [col sep=comma,x=num_samples,y=TreeMSR_min_error] {data/bab_vs_estimators/automobile-mlp-32x2_linf_0.csv};
      \addplot[name path=TreeMSRMax, ThirdSHAPLine, forget plot] table [col sep=comma,x=num_samples,y=TreeMSR_max_error] {data/bab_vs_estimators/automobile-mlp-32x2_linf_0.csv};
      \addplot[ThirdSHAPArea] fill between [of=TreeMSRMin and TreeMSRMax];

      \node[FirstTimeMarker] at (axis cs:1600000,0.001021549105644226) {304s};
      \node[FirstTimeMarker] at (axis cs:1600000,0.00018627941608428955) {313s};
    \end{groupplot}
  \end{tikzpicture}
  \tikzexternaldisable%
  \caption[VeriSHAP vs. KernelSHAP, LeverageSHAP, LinearMSR, \& TreeMSR]{%
    \textbf{\OurAlgo{}~\legendcolorbox{FirstSHAPArea} vs. \KernelSHAP{}~\legendcolorbox{SecondSHAPArea}, \LeverageSHAP{}~\legendcolorbox{FourthSHAPArea}, \LinearMSR{}~\legendcolorbox{FifthSHAPArea} \& \TreeMSR{}~\legendcolorbox{ThirdSHAPArea}.}
    We run \KernelSHAP{}, \LeverageSHAP{}, \LinearMSR{}, and \TreeMSR{} with increasing sample sizes until each exhausts the available GPU memory, repeating each run~\(100\) times.
    For each sample size, we plot the range across repetitions of the largest error between the estimated SHAP value and the true SHAP value of a feature.
    For \OurAlgo{}, we mark the half-ranges of the SHAP bounds at different~\tikz[baseline={([yshift={-2.7mm}]current bounding box.north)}]{\node[FirstTimeMarker] at (0,0) {runtimes}}.
  }\label{fig:ours-vs-estimators-extended}
\end{figure*}

\begin{table}
    \centering
    \caption{
        \textbf{SHAP Estimator Runtimes}. This table provides the runtime of~\(100\) repetitions of different statistical estimators for the largest number of samples that does not exceed the GPU memory of our hardware, as plotted in \cref{fig:ours-vs-estimators-extended}.
    }\label{tab:estimator-runtimes}
    \begin{tabular}{lrrr}
        & \multicolumn{3}{c}{\textbf{Runtime}} \\
        \textbf{Estimator} & \MushroomDataset{} & \DefaultDataset{} & \AutomobileDataset{} \\ \midrule
        \KernelSHAP{}      & \(346\)s & \(123\)s & \(351\)s \\
        \LeverageSHAP{}    & \(291\)s & \(324\)s & \(314\)s \\
        \LinearMSR{}       & \(267\)s & \(270\)s & \(274\)s \\
        \TreeMSR{}         & \(895\)s & \(132\)s & \(341\)s
    \end{tabular}
\end{table}

\Cref{sec:experiments-compare-to-kernelshap} compares \OurAlgo{} to \KernelSHAP{} and \TreeMSR{}.
This section provides a comparison to the additional statistical estimators \LeverageSHAP{}~\citep{musco2025leverageshap} and \LinearMSR{}~\citep{witter2025regression}.
\Cref{fig:ours-vs-estimators-extended} presents the results of our comparison of \OurAlgo{} to these statistical estimators.
\Cref{tab:estimator-runtimes} provides the runtime required for performing~\(100\) repetitions of the different statistical estimators for the largest feasible sample size on our hardware.
The additional results follow the same trend discussed for \KernelSHAP{} and \TreeMSR{} in \cref{sec:experiments-compare-to-kernelshap}.

\subsection{Evaluating SHAP Estimators using \OurAlgo{}}\label{appendix:additional-compare-estimators}
This section presents additional results on evaluating SHAP estimators using \OurAlgo{}, as discussed in \cref{sec:estimators-comparison}.
\Cref{fig:estimators-comparison-extended} extends \cref{fig:estimators-comparison} by including additional datasets.
In this section, we also include \SteelDataset{} in the comparison for which \OurAlgo{} is unable to compute the exact SHAP values.
Therefore, we report the \emph{optimistic}~mean squared error~\(\frac{1}{n}\|\!\min(|\hat{\boldsymbol{\varphi}} - \lb{\boldsymbol{\varphi}}|, |\hat{\boldsymbol{\varphi}} - \ub{\boldsymbol{\varphi}}|)\|_2^2\) on \SteelDataset{}, where~\(\hat{\boldsymbol{\varphi}}\) are the estimated SHAP values and~\([\lb{\boldsymbol{\varphi}}, \ub{\boldsymbol{\varphi}}]\) are the SHAP value bounds computed by \OurAlgo{}.

The datasets in \cref{fig:estimators-comparison} reflect the different relationships between \TreeMSR{}, \KernelSHAP{}, and \LeverageSHAP{} that we observe in \cref{fig:estimators-comparison-extended}.
In particular, the results on \ObesityDataset{} and \AutomobileDataset{} follow the same trend as on \MushroomDataset{}, and the results on \SteelDataset{} follow the same trend as on \GermanDataset{}.
\Cref{fig:leverageshap-vs-linearmsr} compares \LeverageSHAP{} to \LinearMSR{}~\citep{witter2025regression}, confirming the result of \citet{witter2025regression} that \LinearMSR{} closely matches \LeverageSHAP{}.

\begin{figure}[p]
  \centering
  \tikzexternalenable%
  \begin{tikzpicture}
    \begin{groupplot}[
        group style={
            group size=3 by 2,
            xlabels at=edge bottom,
            ylabels at=edge left,
            xticklabels at=edge bottom,
            vertical sep=1cm,
            horizontal sep=1cm,
        },
        height=5cm, width=.35\linewidth,
        xlabel={Sample Size},
        label style={align=center},
        title style={align=center},
    ]
      \nextgroupplot[
        xmode=log, ymode=log, 
        title={\ObesityDataset{}}, 
        ylabel={MSE},
        ylabel style={yshift=-.1cm},
        xmin=200,xmax=100000,
      ]
      \addplot[SecondSHAP] table [col sep=comma,x=num_samples,y=KernelSHAP_mean_error] {data/estimators_comparison/obesity-mlp-32x2_l2_mean.csv};
      \addplot[FourthSHAP] table [col sep=comma,x=num_samples,y=LeverageSHAP_mean_error] {data/estimators_comparison/obesity-mlp-32x2_l2_mean.csv};
      \addplot[ThirdSHAP] table [col sep=comma,x=num_samples,y=TreeMSR_mean_error] {data/estimators_comparison/obesity-mlp-32x2_l2_mean.csv};
    
      \nextgroupplot[
        xmode=log, ymode=log, 
        title={\GermanDataset{}}, 
        ylabel style={yshift=-.1cm},
        xmin=200,xmax=100000,
      ]
      \addplot[SecondSHAP] table [col sep=comma,x=num_samples,y=KernelSHAP_mean_error] {data/estimators_comparison/german-mlp-8x1_l2_mean.csv};
      \addplot[FourthSHAP] table [col sep=comma,x=num_samples,y=LeverageSHAP_mean_error] {data/estimators_comparison/german-mlp-8x1_l2_mean.csv};
      \addplot[ThirdSHAP] table [col sep=comma,x=num_samples,y=TreeMSR_mean_error] {data/estimators_comparison/german-mlp-8x1_l2_mean.csv};
      
      \nextgroupplot[
        xmode=log, ymode=log, 
        title={\MushroomDataset{}}, 
        ylabel style={yshift=-.1cm, xshift=.15cm},
        xmin=200,xmax=100000,
      ]
      \addplot[SecondSHAP] table [col sep=comma,x=num_samples,y=KernelSHAP_mean_error] {data/estimators_comparison/mushroom-mlp-8x1_l2_mean.csv};
      \addplot[FourthSHAP] table [col sep=comma,x=num_samples,y=LeverageSHAP_mean_error] {data/estimators_comparison/mushroom-mlp-8x1_l2_mean.csv};
      \addplot[ThirdSHAP] table [col sep=comma,x=num_samples,y=TreeMSR_mean_error] {data/estimators_comparison/mushroom-mlp-8x1_l2_mean.csv};
      
      \nextgroupplot[
        xmode=log, ymode=log, 
        title={\DefaultDataset{}}, 
        ylabel={MSE},
        ylabel style={yshift=-.1cm, xshift=.15cm},
        xmin=200,xmax=100000,
      ]
      \addplot[SecondSHAP] table [col sep=comma,x=num_samples,y=KernelSHAP_mean_error] {data/estimators_comparison/default-mlp-64x3_l2_mean.csv};
      \addplot[FourthSHAP] table [col sep=comma,x=num_samples,y=LeverageSHAP_mean_error] {data/estimators_comparison/default-mlp-64x3_l2_mean.csv};
      \addplot[ThirdSHAP] table [col sep=comma,x=num_samples,y=TreeMSR_mean_error] {data/estimators_comparison/default-mlp-64x3_l2_mean.csv};
      
      \nextgroupplot[
        xmode=log, ymode=log, 
        title={\AutomobileDataset{}}, 
        ylabel style={yshift=-.1cm, xshift=.15cm},
        xmin=200,xmax=100000,
      ]
      \addplot[SecondSHAP] table [col sep=comma,x=num_samples,y=KernelSHAP_mean_error] {data/estimators_comparison/automobile-mlp-32x2_l2_mean.csv};
      \addplot[FourthSHAP] table [col sep=comma,x=num_samples,y=LeverageSHAP_mean_error] {data/estimators_comparison/automobile-mlp-32x2_l2_mean.csv};
      \addplot[ThirdSHAP] table [col sep=comma,x=num_samples,y=TreeMSR_mean_error] {data/estimators_comparison/automobile-mlp-32x2_l2_mean.csv};
      
      \nextgroupplot[
        xmode=log, ymode=log, 
        title={\SteelDataset{}}, 
        ylabel style={yshift=-.1cm, xshift=.15cm},
        xmin=200,xmax=25000,
      ]
      \addplot[SecondSHAP] table [col sep=comma,x=num_samples,y=KernelSHAP_mean_error] {data/estimators_comparison/steel-mlp-8x2_l2_mean.csv};
      \addplot[FourthSHAP] table [col sep=comma,x=num_samples,y=LeverageSHAP_mean_error] {data/estimators_comparison/steel-mlp-8x2_l2_mean.csv};
      \addplot[ThirdSHAP] table [col sep=comma,x=num_samples,y=TreeMSR_mean_error] {data/estimators_comparison/steel-mlp-8x2_l2_mean.csv};
      
    \end{groupplot}
  \end{tikzpicture}
  \tikzexternaldisable%
  \caption[KernelSHAP vs. LeverageSHAP vs. TreeMSR (Extended)]{%
      \textbf{\KernelSHAP{}~\legendcolorline{SecondSHAP} vs. \LeverageSHAP{}~\legendcolorline{FourthSHAP} vs. \TreeMSR{}~\legendcolorline{ThirdSHAP} (Extended)}.
      We report the mean squared error (MSE) of statistical SHAP estimators across~\(100\) runs as in \cref{fig:estimators-comparison}.
      Since the exact SHAP values are unknown for the \SteelDataset{}, we compute an optimistic MSE as~\(\frac{1}{n}\|\!\min(|\hat{\boldsymbol{\varphi}} - \lb{\boldsymbol{\varphi}}|, |\hat{\boldsymbol{\varphi}} - \ub{\boldsymbol{\varphi}}|)\|_2^2\), where~\(\hat{\boldsymbol{\varphi}}\) are the estimated SHAP values and~\([\lb{\boldsymbol{\varphi}}, \ub{\boldsymbol{\varphi}}]\) are the bounds on the exact SHAP values computed by \OurAlgo{}.
  }\label{fig:estimators-comparison-extended}
\end{figure}

\begin{figure}[p]
  \centering
  \tikzexternalenable%
  \begin{tikzpicture}
    \begin{groupplot}[
        group style={
            group size=3 by 2,
            xlabels at=edge bottom,
            ylabels at=edge left,
            xticklabels at=edge bottom,
            vertical sep=1cm,
            horizontal sep=1cm,
        },
        height=5cm, width=.35\linewidth,
        xlabel={Sample Size},
        label style={align=center},
        title style={align=center},
    ]
      \nextgroupplot[
        xmode=log, ymode=log, 
        title={\ObesityDataset{}}, 
        ylabel={MSE},
        ylabel style={yshift=-.1cm},
        xmin=200,xmax=100000,
      ]
      \addplot[FifthSHAP] table [col sep=comma,x=num_samples,y=LinearMSR_mean_error] {data/estimators_comparison/obesity-mlp-32x2_l2_mean.csv};
      \addplot[FourthSHAP] table [col sep=comma,x=num_samples,y=LeverageSHAP_mean_error] {data/estimators_comparison/obesity-mlp-32x2_l2_mean.csv};
    
      \nextgroupplot[
        xmode=log, ymode=log, 
        title={\GermanDataset{}}, 
        ylabel style={yshift=-.1cm},
        xmin=200,xmax=100000,
      ]
      \addplot[FifthSHAP] table [col sep=comma,x=num_samples,y=LinearMSR_mean_error] {data/estimators_comparison/german-mlp-8x1_l2_mean.csv};
      \addplot[FourthSHAP] table [col sep=comma,x=num_samples,y=LeverageSHAP_mean_error] {data/estimators_comparison/german-mlp-8x1_l2_mean.csv};
      
      \nextgroupplot[
        xmode=log, ymode=log, 
        title={\MushroomDataset{}}, 
        ylabel style={yshift=-.1cm, xshift=.15cm},
        xmin=200,xmax=100000,
      ]
      \addplot[FifthSHAP] table [col sep=comma,x=num_samples,y=LinearMSR_mean_error] {data/estimators_comparison/mushroom-mlp-8x1_l2_mean.csv};
      \addplot[FourthSHAP] table [col sep=comma,x=num_samples,y=LeverageSHAP_mean_error] {data/estimators_comparison/mushroom-mlp-8x1_l2_mean.csv};
      
      \nextgroupplot[
        xmode=log, ymode=log, 
        title={\DefaultDataset{}}, 
        ylabel={MSE},
        ylabel style={yshift=-.1cm, xshift=.15cm},
        xmin=200,xmax=100000,
      ]
      \addplot[FifthSHAP] table [col sep=comma,x=num_samples,y=LinearMSR_mean_error] {data/estimators_comparison/default-mlp-64x3_l2_mean.csv};
      \addplot[FourthSHAP] table [col sep=comma,x=num_samples,y=LeverageSHAP_mean_error] {data/estimators_comparison/default-mlp-64x3_l2_mean.csv};
      
      \nextgroupplot[
        xmode=log, ymode=log, 
        title={\AutomobileDataset{}}, 
        ylabel style={yshift=-.1cm, xshift=.15cm},
        xmin=200,xmax=100000,
      ]
      \addplot[FifthSHAP] table [col sep=comma,x=num_samples,y=LinearMSR_mean_error] {data/estimators_comparison/automobile-mlp-32x2_l2_mean.csv};
      \addplot[FourthSHAP] table [col sep=comma,x=num_samples,y=LeverageSHAP_mean_error] {data/estimators_comparison/automobile-mlp-32x2_l2_mean.csv};
      
      \nextgroupplot[
        xmode=log, ymode=log, 
        title={\SteelDataset{}}, 
        ylabel style={yshift=-.1cm, xshift=.15cm},
        xmin=200,xmax=25000,
      ]
      \addplot[FifthSHAP] table [col sep=comma,x=num_samples,y=LinearMSR_mean_error] {data/estimators_comparison/steel-mlp-8x2_l2_mean.csv};
      \addplot[FourthSHAP] table [col sep=comma,x=num_samples,y=LeverageSHAP_mean_error] {data/estimators_comparison/steel-mlp-8x2_l2_mean.csv};
      
    \end{groupplot}
  \end{tikzpicture}
  \tikzexternaldisable%
  \caption[LeverageSHAP vs. LinearMSR]{%
      \textbf{\LeverageSHAP{}~\legendcolorline{FourthSHAP} vs. \LinearMSR{}~\legendcolorline{FifthSHAP}}.
      We report the mean squared error (MSE) of the statistical SHAP estimators across~\(100\) runs as in \cref{fig:estimators-comparison-extended}.
  }\label{fig:leverageshap-vs-linearmsr}
\end{figure}

\subsection{Predicting \OurAlgo{} Runtime}\label{appendix:predict-runtime}
This section is concerned with predicting the runtime of \OurAlgo{} before \OurAlgo{} is run.
The goal is to judge whether \OurAlgo{} will successfully compute tight bounds for a neural network.
Concretely, we study the number of branches required by \OurAlgo{} until the half-width of the SHAP bounds is less than~\(10\%\) of the network output to attribute (\(10\%\) HR).
The number of branches directly determines the runtime of \OurAlgo{} while discounting for different network architectures, enabling better comparability across datasets.
As \cref{fig:init-margin-vs-branches} shows, there is a strong correlation between the half-range of the initial SHAP bounds and the number of branches required for reaching~\(10\%\) HR tight bounds.
Therefore, the half-range of the initial bounds can be used to predict the time required until \OurAlgo{} computes tight bounds.

\begin{figure}
  \centering
  \tikzexternaldisable%
  \begin{tikzpicture}
    \begin{axis}[
        height=6.5cm, width=.9\linewidth,
        xmode=log, ymode=log,
        xlabel={Maximum normalised initial bound margin},
        ylabel={\# Branches for $10\%$ HR},
        xmin=1, xmax=1000,
        ymin=1e3, ymax=1e7,
        grid=both,
        grid style={very thin, gray!20},
        major grid style={thin, gray!40},
        label style={font=\footnotesize},
        tick label style={font=\footnotesize},
        legend style={font=\footnotesize, at={(0.02,0.98)}, anchor=north west},
        legend cell align=left,
        nodes near coords align={anchor=west},
        every node near coord/.append style={
          font=\scriptsize, color=gray!70!black,
          xshift=4pt, yshift=-1pt,
        },
        clip mode=individual,
    ]
      \addplot[black, ultra thick, domain=1:1000, samples=2] {1180 * x^1.466};
      \addlegendentry{power law regressor}

      \addplot[
          only marks,
          mark=*, mark size=2.5pt,
          color=CatA,
          mark options={solid, fill=CatA, draw=CatA},
        ] table [col sep=comma, x=margin, y=branches] {
          margin,branches
          8.456598850073956,80808.0
          44.874502400816226,120952.5
          2.6794544899195927,1425.0
          315.973554394813,7524352.0
          3.0148823336262174,14064.0
          2.73247268680048,3582.5
          1.276133438306599,2247.0
          5.816216854110825,15873.5
        };
    \addlegendentry{\OurAlgo{}}
    
    \node[anchor=south east, font=\footnotesize, color=black, xshift=3pt]
      at (axis cs:8.456598850073956,80808.0) {\AnnealingDataset{}};
    \node[anchor=west, font=\footnotesize, color=black, xshift=3pt]
      at (axis cs:44.874502400816226,120952.5) {\AutomobileDataset{}};
    \node[anchor=west, font=\footnotesize, color=black, xshift=3pt]
      at (axis cs:2.6794544899195927,1425.0) {\BreastCancerDataset{}};
    \node[anchor=east, font=\footnotesize, color=black, xshift=-3pt]
      at (axis cs:315.973554394813,7524352.0) {\DefaultDataset{}};
    \node[anchor=south east, font=\footnotesize, color=black, xshift=-3pt]
      at (axis cs:3.0148823336262174,14064.0) {\GermanDataset{}};
    \node[anchor=west, font=\footnotesize, color=black, xshift=3pt]
      at (axis cs:2.73247268680048,3582.5) {\MushroomDataset{}};
    \node[anchor=west, font=\footnotesize, color=black, xshift=3pt]
      at (axis cs:1.276133438306599,2247.0) {\ObesityDataset{}};
    \node[anchor=north west, font=\footnotesize, color=black, xshift=3pt]
      at (axis cs:5.816216854110825,15873.5) {\SteelDataset{}};
    \end{axis}
  \end{tikzpicture}
  \caption{%
    \textbf{Initial Bounds Half-Range vs.\ Number of Branchs.}
    For each tabular network, we plot the largest initial SHAP bounds half-range (HR)~\(\max_{i \in \upto{n}}(\ub{\shapval}^{(1)} - \lb{\shapval}^{(1)}) / 2\) against the number of branches required for \OurAlgo{} to compute SHAP bounds with a maximum half-range of at-most~\(10\%\) of the network output to attribute ($10\%$ HR).
    We normalise the initial bounds margin by the network output to attribute to enable the comparison across datasets.
    A linear regression in log-log space (power law regression) achieves a Pearson correlation coefficient~\(r\) of~\(0.956\), demonstrating a strong correlation between the initial bounds half-range and the number of branches required for computing tight bounds.
  }\label{fig:init-margin-vs-branches}
\end{figure}


\end{document}